\newtheorem{definition}{Definition}
\newtheorem{property}{Property}
\newtheorem{method}{Method}
\newtheorem{corollary}{Corollary}
\newtheorem{theorem}{Theorem}
\theoremstyle{remark}
\newtheorem{remark}{Remark}
\newcommand{\rev}[1]{\textcolor{black}{#1}}
\title{IMPACT: Importance-Aware Activation Space Reconstruction}
\author{
Md Mokarram Chowdhury\textsuperscript{1}\thanks{Equal contribution.},
Daniel Agyei Asante\textsuperscript{1},
Ernie Chang\textsuperscript{2},
Yang Li\textsuperscript{1}\footnotemark[1]\thanks{Corresponding author. Address: 2434 Osborn Dr, Ames, IA 50011, United States. Email: jerryyangli@gmail.com.} \\
\textsuperscript{1}Department of Computer Science, Iowa State University, United States \\
\textsuperscript{2}Meta, United States \\
{\tt \{mokarram, dasante, yangli1\}@iastate.edu, erniecyc@meta.com}
}
\begin{document}
\maketitle
\begin{abstract}

Large language models (LLMs) achieve strong performance across diverse domains but remain difficult to deploy in resource-constrained environments due to their size. Low-rank compression is a common remedy, typically minimizing weight reconstruction error under the assumption that weights are low-rank. However, this assumption often does not hold in LLMs. In contrast, LLM activations exhibit a more pronounced low-rank structure, motivating approaches that minimize activation reconstruction error.

This shift alone, however, is not sufficient: different activation dimensions contribute unequally to model performance, and treating them uniformly can lead to accuracy loss. We introduce IMPACT, an importance-aware activation reconstruction framework that links compression to its effect on model performance. IMPACT formulates compression as an optimization problem that integrates activation structure with gradient-based importance, deriving a closed-form solution where reconstruction bases arise from an importance-weighted activation covariance matrix. This yields low-rank compression explicitly optimized for accuracy preservation.
Experiments across multiple models and tasks demonstrate that IMPACT achieves up to 55.4\% greater model size reduction while maintaining accuracy comparable to or better than state-of-the-art baselines.

\end{abstract}
\section{Introduction}

Large language models (LLMs) have achieved remarkable success across a wide range of domains. However, their massive size poses a significant barrier to deployment, particularly in resource-constrained environments. Larger models require more memory, incur slower token throughput, and demand greater computational and energy resources during inference~\cite{Li2023FoldingAttention,li-etal-2025-breaking}. As a result, there is growing urgency to develop compression techniques that can reduce model size while preserving performance.

\begin{figure}[t]
\centering
  \includegraphics[width=1\linewidth]{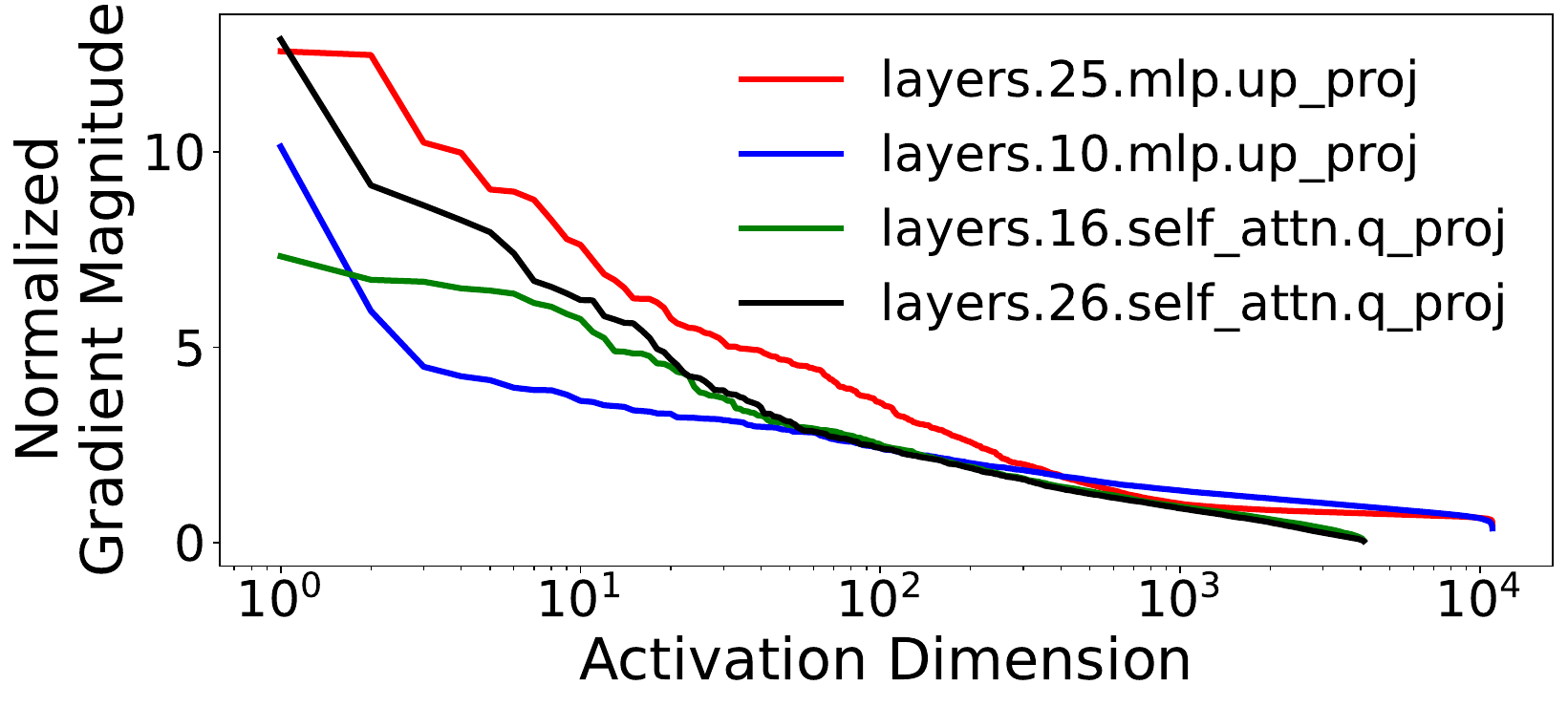}
        \vspace{-20pt}
        \caption{\rev{Normalized average gradient magnitudes across activation dimensions in Llama~2-7B on a mathematical reasoning task. For each linear layer, the output activation is a vector $\mathbf{y} \in \mathbb{R}^d$, where each element $y_i$ represents an activation dimension. Dimensions are sorted in descending order of the normalized average gradient magnitudes. Gradient magnitudes vary substantially across activation dimensions---a pattern consistently observed across other models and tasks.}}
    \label{fig:gradient}
    \vspace{-15pt}
\end{figure}

Low-rank weight matrix compression has emerged as a widely used strategy for model compression \citep{svd1,svd9,svd11_new,svd12_new,svd14_new,svd15_new, basel}. It approximates a weight matrix \( \mathbf{W} \in \mathbb{R}^{m \times n} \) as the product of two smaller matrices, \( \mathbf{W}_1 \in \mathbb{R}^{m \times k} \) and \( \mathbf{W}_2 \in \mathbb{R}^{k \times n} \), where \( k \ll m, n \), thereby reducing the number of parameters. Classical methods select \( \mathbf{W}_1 \) and \( \mathbf{W}_2 \) to minimize the reconstruction error \( \| \mathbf{W} - \mathbf{W}_1 \mathbf{W}_2 \| \), implicitly assuming that the weight matrix itself is low-rank.
However, recent evidence from \citet{afm} reveals that the weight matrices in large-scale models are often not low-rank, limiting the efficacy of direct weight approximation. Interestingly, they observe that the \textit{activations}—the outputs of linear layers—tend to exhibit much stronger low-rank structure. This has led to a shift in focus: instead of minimizing weight reconstruction error, some recent methods~\cite{afm, Drone} minimize the error in reconstructing the activations induced by those weights, achieving better empirical performance.

However, minimizing activation reconstruction error alone is insufficient to guarantee strong model performance. As shown in Figure~\ref{fig:gradient}, different activation dimensions vary widely in their influence on the loss: dimensions associated with large gradients are highly sensitive to reconstruction errors, whereas others contribute little even when poorly reconstructed. Thus, treating all dimensions equally during compression can disproportionately harm those that matter most—leading to greater performance degradation despite low reconstruction error. This raises a critical question:

\begin{center}
\textit{How can we align activation reconstruction decisions with their actual performance impact?}
\end{center}

Answering this question requires a principled framework that explicitly connects weight compression, activation reconstruction, and their contribution to model performance—a connection that has not yet been systematically explored in the context of low-rank weight matrix compression.

To this end, we propose IMPACT, a theoretical framework that guides \underline{imp}ortance-aware \underline{act}ivation space reconstruction. IMPACT rigorously analyzes the relationship among weight compression, activation reconstruction, and model performance. By explicitly linking reconstruction decisions to their performance impact, it provides principled guidance for selecting weights to minimize performance degradation. The framework is grounded in a formal optimization formulation, which is transformed into a more tractable domain and solved through a rigorous analytical derivation using the Lagrange multiplier method. Despite the complexity of the derivation, the final result is remarkably simple: the optimal activation reconstruction bases are the eigenvectors of an importance-weighted activation covariance matrix \( \mathbf{C} = \mathrm{Cov}(\mathbf{y}) \odot \mathbf{M} \), where \( \mathrm{Cov}(\mathbf{y}) \) is the covariance matrix of the activations, and \( \mathbf{M} \) is the gradient-informed importance matrix  (Equations~\eqref{eq:eqn10}--\eqref{eq:eqn12}). These eigenvectors yield the weight matrices \( \mathbf{W}_1 \) and \( \mathbf{W}_2 \) that minimize performance loss. We apply IMPACT to compress a variety of models across diverse datasets and show that it achieves up to 55.4\% greater size reduction while maintaining performance comparable to state-of-the-art baselines.

This paper makes the following contributions:
\begin{itemize}
\vspace{-5pt}
\item We introduce IMPACT, a principled theoretical framework that formally characterizes the relationship between activation reconstruction and its effect on model performance. To our knowledge, this is the first framework within the scope of low-rank weight matrix compression that directly links activation reconstruction choices to model performance.
\vspace{-5pt}
\item We derive a closed-form solution for selecting optimal reconstruction bases using an importance-weighted activation covariance matrix, enabling importance-aware low-rank compression that prioritizes activation dimensions critical to loss minimization.
\vspace{-5pt}
\item 
We empirically validate IMPACT across a broad range of models and tasks, demonstrating that it achieves substantially greater compression while maintaining performance comparable to state-of-the-art methods.
\vspace{-5pt}
\end{itemize}

The final version of this paper is published at ACL 2026 \citep{impact}.

\section{Related Work\label{sec:related}}
Singular value decomposition (SVD) \citep{golub1983matrix} is a widely-used technique for neural network compression, offering low-rank approximations that reduce model size and improve efficiency. Prior work has applied SVD to various network components—convolutional layers, recurrent units, and embeddings—across domains such as language, speech, and vision~\citep{svd1,svd2,svd3,svd4,svd5,svd6,svd7,svd8,svd9,svd10}. More recent efforts extend these techniques to Transformer-based models~\citep{svd11_new, svd12_new, svd13_new, svd14_new, svd15_new, basel}, compressing attention and feedforward layers to enhance memory and compute efficiency. 

Traditional SVD-based methods minimize weight reconstruction error by retaining top singular values and vectors, but this can discard performance-critical information. To address this, FWSVD~\citep{fwsvd} introduces a weighted factorization scheme guided by Fisher information, assigning higher importance to influential weights.

Recent work has proposed weight compression methods that depart from minimizing weight reconstruction error and instead aim to minimize the reconstruction error of layer activations~\citep{afm, asvd, Drone}. Among them, AFM~\citep{afm} explicitly leverages the empirical observation that activations often exhibit stronger low-rank structure than weights, and optimizes the factorized weights to preserve activations. While these approaches have shown improved empirical results, they typically treat all activation dimensions equally, without fully accounting for their varying contribution to model performance.

In contrast, our work focuses on minimizing the impact of activation reconstruction on model performance. Rather than uniformly reducing reconstruction error, we prioritize preserving the most prediction-critical components of the activation. To our knowledge, this is the first framework in the context of low-rank weight matrix compression that explicitly links activation reconstruction choices to their effect on model accuracy.

Beyond efficiency, another important question is how compression affects  trustworthiness. \citet{decomposed_trust} studies the impact of low-rank compression on privacy, adversarial robustness, ethics, and fairness. More broadly, low-rank compression is only one way to improve inference efficiency; other techniques include, for example, quantization, pruning, and token compression. Sheared LLaMA~\citep{xia2024sheared} and KiToke~\citep{KiToke} illustrate the latter two directions.

\section{The IMPACT\label{sec:method} Framework}


In this section, we present IMPACT, our activation reconstruction-based model compression framework. IMPACT identifies a set of directions that enable importance-aware activation reconstruction, minimizing compression-induced performance degradation. We first formulate the optimization problem and define the compression directions in Section~\ref{sec:sec3_1}, laying the foundation for performance-preserving reconstruction. Sections~\ref{sec:sec3_2} to \ref{sec:sec3_6} develop a systematic solution by transforming the activation space, deriving optimal directions via constrained optimization, and constructing the compressed model. Section~\ref{sec:sec3_7} provides a complete algorithm and implementation details of the IMPACT framework.

\subsection{Defining the Objective Function} 
\label{sec:sec3_1}

Let $\mathbf{y} \in \mathbb{R}^d$ be the activation produced by a specific layer of the model for a single input sample. We aim to identify a set of $r$ orthonormal vectors $\{\mathbf{u}_1, \dots, \mathbf{u}_r\}$ that define the directions used to reconstruct activations. Each vector satisfies $\|\mathbf{u}_k\| = 1$ and $\mathbf{u}_i \perp \mathbf{u}_j$ for $i\neq j$, with indices $i,j,k \in \{1, \dots, r\}$. The reconstructed activation is denoted by $\mathbf{\hat{y}}$.

Our objective is to select $\{\mathbf{u}_k\}$ such that the reconstructed activation $\mathbf{\hat{y}}$ closely approximates the original activation $\mathbf{y}$ while preserving model performance. To this end, we define the following objective function:

{\vspace{-10pt}
\small%
\begin{equation}
    \label{eq:eqn1}
    \begin{aligned}
        \min f(\{\mathbf{u}_k\}) = \alpha \mathbb{E}\!\left[\|\mathbf{y} - \mathbf{\hat{y}} \|^2\right] 
    + \beta \mathbb{E}\!\left[(\ell(\mathbf{y}) - \ell(\mathbf{\hat{y}}))^2 \right]
    \end{aligned}
\end{equation}}%

This objective comprises two terms:
\begin{itemize}
    \vspace{-5pt}
    \item {\small $\alpha \mathbb{E}[\|\mathbf{y} - \mathbf{\hat{y}} \|^2]$} encourages $\mathbf{\hat{y}}$ to be numerically close to $\mathbf{y}$.
    \vspace{-5pt}
    \item {\small $\beta \mathbb{E} \left[ (\ell(\mathbf{y}) - \ell(\mathbf{\hat{y}}))^2\right]$} penalizes changes in the loss function $\ell$ due to discrepancies between $\mathbf{y}$ and $\mathbf{\hat{y}}$. The values of $\alpha$ and $\beta$ are set in Section~\ref{sec:sec3_2}.
\end{itemize}

\subsection{Bounding the Objective}
\label{sec:sec3_2}
We now derive an upper bound on the original objective function, providing a more tractable alternative for optimization.

\begin{theorem}[Bounding Theorem]
\label{theorem:theorem1}
Suppose the loss function $\ell$ is $C^1$-smooth, and the activation dimension is $d$. Then the objective function in Equation~\eqref{eq:eqn1} is upper bounded by:

{\vspace{-10pt}
\small\begin{equation}
    \label{eq:eqn2}
    \begin{aligned}
        f(\{\mathbf{u}_k\}) \leq  \mathbb{E}\!\left[ \left\| \sqrt{\beta d \, \mathbb{E}\! \left[ \left( \frac{\partial \ell}{\partial \mathbf{y}} \right)^2 \right]^\top + \alpha }  \odot (\mathbf{y} - \mathbf{\hat{y}}) \right\|^2 \right]
    \end{aligned}
\end{equation}}
\end{theorem}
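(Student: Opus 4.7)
The plan is to expand the two summands of the objective and re-collect them into the per-coordinate, gradient-weighted form on the right-hand side. First I would use the $C^1$-smoothness of $\ell$ to invoke a first-order Taylor expansion around $\mathbf{y}$, writing $\ell(\mathbf{y}) - \ell(\mathbf{\hat{y}}) \approx \nabla \ell(\mathbf{y})^\top (\mathbf{y} - \mathbf{\hat{y}})$ (dropping the second-order remainder, which is the usual linearization step for this kind of activation-reconstruction analysis where $\mathbf{\hat{y}}$ is assumed to lie close to $\mathbf{y}$). Squaring this yields an inner-product squared over the $d$ activation coordinates, which is the object I will then break up element-wise.

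The next step is to convert that inner-product square into an element-wise sum. I would apply Cauchy--Schwarz in the form $\left(\sum_{i=1}^d a_i\right)^2 \leq d \sum_{i=1}^d a_i^2$ with $a_i = \tfrac{\partial \ell}{\partial y_i}(y_i-\hat{y}_i)$, obtaining $(\ell(\mathbf{y})-\ell(\mathbf{\hat{y}}))^2 \leq d \sum_i (\partial \ell/\partial y_i)^2 (y_i-\hat{y}_i)^2$. Taking expectations and invoking a working assumption that the squared gradient magnitude $(\partial \ell/\partial y_i)^2$ is uncorrelated with the squared reconstruction error $(y_i-\hat{y}_i)^2$, I would factor the expectation to get $\beta d \sum_i \mathbb{E}[(\partial \ell/\partial y_i)^2] \cdot \mathbb{E}[(y_i-\hat{y}_i)^2]$.

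Now I would rewrite the first summand coordinate-wise as $\alpha \sum_i \mathbb{E}[(y_i-\hat{y}_i)^2]$ and combine the two sums to obtain $\sum_i \left(\alpha + \beta d\, \mathbb{E}[(\partial \ell/\partial y_i)^2]\right) \mathbb{E}[(y_i-\hat{y}_i)^2]$. Finally, I would repackage this as $\mathbb{E}\left[\|\mathbf{g}\odot(\mathbf{y}-\mathbf{\hat{y}})\|^2\right]$ where $\mathbf{g}_i = \sqrt{\alpha + \beta d\, \mathbb{E}[(\partial \ell/\partial y_i)^2]}$, which matches exactly the right-hand side of Equation~\eqref{eq:eqn2}.

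The hardest step is the expectation-factoring: treating $\mathbb{E}[(\partial \ell/\partial y_i)^2 (y_i-\hat{y}_i)^2]$ as $\mathbb{E}[(\partial \ell/\partial y_i)^2] \cdot \mathbb{E}[(y_i-\hat{y}_i)^2]$ is not valid in general and must be justified either by an explicit independence hypothesis or by interpreting the gradient expectation as a precomputed, data-independent weight used downstream in the optimization. A secondary delicate point is the Taylor linearization, which discards an $O(\|\mathbf{y}-\mathbf{\hat{y}}\|^2)$ remainder; to make the statement an honest inequality rather than a first-order surrogate, that remainder must either be absorbed (e.g.\ into $\alpha$ under a Lipschitz-gradient assumption on $\nabla \ell$) or declared as part of the working regime. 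I would flag both points to check when reading the authors' proof.
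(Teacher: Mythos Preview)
Your proposal is correct and follows essentially the same route as the paper: first-order Taylor linearization of $\ell$, then the inequality $(\sum_i a_i)^2 \le d\sum_i a_i^2$ (which the paper phrases as QM--AM rather than Cauchy--Schwarz), a factorization of the expectation of the product into a product of expectations, and the final Hadamard repackaging. You apply the inequality before factoring the expectation while the paper does these in the opposite order, but the ingredients are identical, and the two ``soft'' steps you flag (the dropped Taylor remainder and the expectation-factoring) are precisely the approximations the paper silently makes.
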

\rev{Here, $\odot$ denotes the Hadamard (elementwise) product.} The proof is presented in Appendix \ref{appendix:A2}. To simplify the expression and balance the two components of the objective, we set the parameters: 

{
\small{
\[
\alpha = \eta , \quad \beta = \frac{1 - \eta}{\mathbb{E}\!\left[ \left\| \frac{\partial \ell}{\partial \mathbf{y}} \right\|^2\right]}
\]
}}%
\rev{We study the effect of $\eta$ in Appendix~\ref{appendix:abolation}.} Substituting these values into the upper bound yields:

{\vspace{-10pt} 
\small \begin{equation}
    \label{eq:eqn3}
    {\scalebox{0.86}{$
    \begin{aligned}
        h(\{\mathbf{u}_k\}) = \mathbb{E}\! \Bigg[ \Bigg\| &\sqrt{
        \frac{(1 - \eta)}{\frac{1}{d} \mathbb{E}\! \left[ \left\| \frac{\partial \ell}{\partial \mathbf{y}} \right\|^2 \right]} 
        \mathbb{E}\! \left[ \left( \frac{\partial \ell}{\partial \mathbf{y}} \right)^2 \right]^\top + \eta }   
         \odot (\mathbf{y} - \mathbf{\hat{y}}) \Bigg\|^2 \Bigg]
    \end{aligned}
    $}}
\end{equation}}%
This gives the inequality:  

{
\small\[
f(\{\mathbf{u}_k\}) \leq h(\{\mathbf{u}_k\})
\]}%
Directly minimizing $f(\{\mathbf{u}_k\})$ under the orthonormal constraints  

{\vspace{-10pt}
\small \[
\|\mathbf{u}_k\| = 1,\quad \mathbf{u}_i \perp \mathbf{u}_j \; \text{for } i \neq j, \quad i,j,k = 1,\dots,r
\]}%
is analytically challenging. Solving this problem via mathematical programming is computationally prohibitive due to the high dimensionality of the variables $\{\mathbf{u}_k\}$, which scale with the model's parameter size. To address this challenge, we instead minimize the upper bound $h(\{\mathbf{u}_k\})$ of the original objective, subject to the same orthonormal constraints. This relaxation yields an optimization problem that admits an efficient analytical solution.

\subsection{Activation Space Transformation}
\label{sec:sec3_3}
To optimize the objective \( h(\{\mathbf{u}_k\}) \) under the orthonormal constraints, we transform the activations into a new space where the optimization problem can be solved analytically. Specifically, we define a transformation coefficient \( \mathbf{a} \) as:

{
\small \begin{equation}
    \label{eq:eqn4}
    \mathbf{a} = \sqrt{(1 - \eta) \frac{\mathbb{E}\!\left[\left( \frac{\partial \ell}{\partial \mathbf{y}} \right)^2 \right]^\top }{\frac{1}{d} \mathbb{E}\! \left[\left\| \frac{\partial \ell}{\partial \mathbf{y}} \right\|^2\right] } + \eta }
\end{equation}} 

Without loss of generality, we assume that the mean-removed reconstructed activations are given by projecting the original activations (after transformation) onto the subspace spanned by $\{\mathbf{u}_k\}$. Formally, we impose:

{\vspace{-13pt}
\small \begin{equation}
    \mathbf{a} \odot \left( \mathbf{\hat{y}} - \mathbb{E}[\mathbf{y}] \right) = \left( \sum_k \mathbf{u}_k \mathbf{u}_k^\top \right) \left(\mathbf{a} \odot \left( \mathbf{y} - \mathbb{E}[\mathbf{y}] \right)\right)
    \label{eq:eqn5}
\end{equation}
}%
Defining the transformed activation $\mathbf{\tilde{y}}$ as

{
\small\[
\mathbf{\tilde{y}} = \mathbf{a}\odot\left(\mathbf{y} - \mathbb{E}\left[\mathbf{y}\right]\right)
\]}%
we can rewrite the objective \( h(\{\mathbf{u}_k\}) \) in a form that depends only on the transformed activation, as stated in the following theorem.

\begin{theorem}[Activation Space Transformation Theorem]
\label{theorem:theorem2}
Given the transformation \( \mathbf{\tilde{y}} = \mathbf{a} \odot \left(\mathbf{y} - \mathbb{E}[\mathbf{y}]\right) \), the objective \( h(\{\mathbf{u}_k\}) \) becomes:

{
\small \[
h(\{\mathbf{u}_k\}) = \mathbb{E}\! \left[ \mathbf{\tilde{y}}^\top \left( \mathbf{I} - \sum_k \mathbf{u}_k \mathbf{u}_k^\top \right) \mathbf{\tilde{y}} \right]
\]}
\end{theorem}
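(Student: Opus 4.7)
The plan is a short algebraic substitution followed by an idempotence argument, so the proof should be essentially mechanical once the right rewriting is identified.

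First I would start from the form of $h(\{\mathbf{u_k}\})$ in Equation~\eqref{eq:eqn3}, which, after recognizing the definition of $\mathbf{a}$ in Equation~\eqref{eq:eqn4}, can be written compactly as
\[
h(\{\mathbf{u_k}\}) = \mathbb{E}\!\left[\left\|\mathbf{a}\odot(\mathbf{y}-\mathbf{\hat y})\right\|^2\right].
\]
Then I would split $\mathbf{a}\odot(\mathbf{y}-\mathbf{\hat y})$ as
\[
\mathbf{a}\odot(\mathbf{y}-\mathbb{E}[\mathbf{y}]) - \mathbf{a}\odot(\mathbf{\hat y}-\mathbb{E}[\mathbf{y}]),
\]
substitute the definition $\mathbf{\tilde y}=\mathbf{a}\odot(\mathbf{y}-\mathbb{E}[\mathbf{y}])$ into the first term, and use the structural assumption in Equation~\eqref{eq:eqn5} to rewrite the second term as $\bigl(\sum_k \mathbf{u_k}\mathbf{u_k}^\top\bigr)\mathbf{\tilde y}$. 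Writing $\mathbf{P}=\sum_k \mathbf{u_k}\mathbf{u_k}^\top$, this collapses the integrand to $\|(\mathbf{I}-\mathbf{P})\mathbf{\tilde y}\|^2$.

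Next I would exploit the orthonormality constraints on $\{\mathbf{u_k}\}$: since $\mathbf{u_i}^\top\mathbf{u_j}=\delta_{ij}$, the matrix $\mathbf{P}$ is a symmetric orthogonal projector, i.e.\ $\mathbf{P}^\top=\mathbf{P}$ and $\mathbf{P}^2=\mathbf{P}$. Consequently $\mathbf{I}-\mathbf{P}$ is also a symmetric idempotent, so
\[
(\mathbf{I}-\mathbf{P})^\top(\mathbf{I}-\mathbf{P}) = \mathbf{I}-\mathbf{P}.
\]
Applying this identity inside the expectation gives
\[
h(\{\mathbf{u_k}\}) = \mathbb{E}\!\left[\mathbf{\tilde y}^\top(\mathbf{I}-\mathbf{P})\mathbf{\tilde y}\right],
\]
which is the claimed expression.

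The only place that requires care, and the step I would flag as the main (albeit minor) obstacle, is justifying the replacement of $\mathbf{a}\odot(\mathbf{\hat y}-\mathbb{E}[\mathbf{y}])$ by $\mathbf{P}\mathbf{\tilde y}$. Equation~\eqref{eq:eqn5} is imposed as a modeling constraint defining how $\mathbf{\hat y}$ is obtained from $\mathbf{y}$ through the subspace spanned by $\{\mathbf{u_k}\}$, so I would state this explicitly at the start of the proof as the governing assumption on $\mathbf{\hat y}$. Once that is made precise, the rest is a one-line substitution and a one-line use of idempotence, with no need for further computation.
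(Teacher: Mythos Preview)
Your proposal is correct and follows essentially the same route as the paper's proof in Appendix~\ref{appendix:A3}: rewrite $h$ as $\mathbb{E}[\|\mathbf{a}\odot(\mathbf{y}-\hat{\mathbf{y}})\|^2]$, use the projection condition~\eqref{eq:eqn5} to reduce the integrand to $\|(\mathbf{I}-\mathbf{P})\tilde{\mathbf{y}}\|^2$, and then collapse $(\mathbf{I}-\mathbf{P})^\top(\mathbf{I}-\mathbf{P})$ to $\mathbf{I}-\mathbf{P}$ via orthonormality. The only cosmetic difference is that the paper expands the product $(\mathbf{I}-\sum_k\mathbf{u_k}\mathbf{u_k}^\top)^2$ term by term and cancels cross terms explicitly, whereas you invoke the projector/idempotence property directly.
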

The proof is provided in Appendix \ref{appendix:A3}.

\subsection{Lagrange Formulation and Derivation}
\label{sec:sec3_4}
To solve the constrained optimization problem, we apply the method of Lagrange multipliers. Our goal is to minimize the objective $h(\{\mathbf{u}_k\})$ subject to the normalization constraint $\|\mathbf{u}_k\| = 1$, along with the orthogonality constraints. To enforce this, we define the Lagrangian function:

{\vspace{-14pt}
\small\begin{equation}
    \label{eq:eqn6}
    L(\{\mathbf{u}_k\}) = h(\{\mathbf{u}_k\}) + \sum_{k=1}^{r} \lambda_k \left( \mathbf{u}_k^\top \mathbf{u}_k - 1 \right)
\end{equation}
}%
where $\lambda_k$ is the Lagrange multiplier associated with the constraint $\mathbf{u}_k^\top \mathbf{u}_k = 1$.

We derive the optimality conditions by taking the derivative of $L(\{\mathbf{u}_k\})$ with respect to each $\mathbf{u}_k$ and setting it to zero. Using standard results from matrix calculus, we obtain:

{
\small \begin{equation}
    \frac{\partial L}{\partial \mathbf{u}_k} = - 2 \mathbf{u}_k^\top \mathbb{E} \left[ \mathbf{\tilde{y}}\mathbf{\tilde{y}}^\top \right] + \lambda_k \mathbf{u}_k^\top
    \label{eq:eqn7}
\end{equation}}%
 
 To simplify notations, we define the \textit{importance-weighted activation covariance matrix}:
 
 {
 \small \begin{equation}
     \mathbf{C} = \mathbb{E}\left[\mathbf{\tilde{y}}\mathbf{\tilde{y}}^\top\right]
     \label{eq:eqn8}
 \end{equation}
 }%
 
 Substituting into Equation~\eqref{eq:eqn7}, the optimality condition becomes:
 
 {
 \small \begin{equation}
     - 2 \mathbf{u}_k^\top \mathbf{C} + \lambda_k \mathbf{u}_k^\top = 0
     \label{eq:eqn9}
 \end{equation}}%

The following theorem characterizes the structure of $\mathbf{C}$.

\begin{theorem}[Importance-Weighted Activation Covariance Matrix]
    \label{theorem:theorem3}
    The matrix $\mathbf{C}$ is equal to the Hadamard product of the activation covariance matrix $\mathrm{Cov}(\mathbf{y)}$ and the gradient-informed importance matrix $\mathbf{M}$, i.e.,

    {
    \small \begin{equation}
        \mathbf{C} =  \mathrm{Cov}\!\left(\mathbf{y}\right) \odot \mathbf{M}
        \label{eq:eqn10}
    \end{equation}}%
where

{\vspace{-10pt}
\small \begin{equation}
    \label{eq:eqn11}
    \mathrm{Cov}\!\left(\mathbf{y}\right) = \mathbb{E} \left[ (\mathbf{y} - \mathbb{E}[\mathbf{y}]) (\mathbf{y} - \mathbb{E}[\mathbf{y}])^\top \right]
\end{equation}}%

{\small \begin{equation}
    \label{eq:eqn12}
    \mathbf{M} = \mathbf{a} \mathbf{a}^\top
\end{equation}}%
\end{theorem}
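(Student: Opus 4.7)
The plan is to prove the identity by a direct entrywise computation, since the left-hand side $\mathbf{C} = \mathbb{E}[\mathbf{\tilde{y}}\mathbf{\tilde{y}}^\top]$ is an expectation of an outer product of Hadamard-transformed centered activations, and the claimed right-hand side is itself an entrywise (Hadamard) product. The structure of the statement strongly suggests that working at the level of a single $(i,j)$ entry will make the algebra essentially transparent.

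First I would substitute $\mathbf{\tilde{y}} = \mathbf{a} \odot (\mathbf{y} - \mathbb{E}[\mathbf{y}])$ into the definition of $\mathbf{C}$ and examine the $(i,j)$ entry. Since the $i$-th component is $\tilde{y}_i = a_i\,(y_i - \mathbb{E}[y_i])$ with $a_i$ a deterministic constant (see Equation~\eqref{eq:eqn4}, whose contents are already-evaluated expectations over the calibration data), the product factor $a_i a_j$ pulls cleanly out of the expectation. This yields
\[
\mathbf{C}_{ij} \;=\; a_i a_j \cdot \mathbb{E}\!\left[(y_i - \mathbb{E}[y_i])(y_j - \mathbb{E}[y_j])\right],
\]
and the remaining expectation is, by definition, $\mathrm{Cov}(\mathbf{y})_{ij}$. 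Second, I would recognize $a_i a_j$ as the $(i,j)$ entry of $\mathbf{a}\mathbf{a}^\top$, i.e.\ of $\mathbf{M}$, so that the per-entry identity $\mathbf{C}_{ij} = \mathbf{M}_{ij} \cdot \mathrm{Cov}(\mathbf{y})_{ij}$ assembled over all $i,j$ collapses to the matrix statement $\mathbf{C} = \mathrm{Cov}(\mathbf{y}) \odot \mathbf{M}$.

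There is no real obstacle in this derivation: it is a bookkeeping exercise exploiting the fact that elementwise products interact straightforwardly with linear expectations when one factor is deterministic. The only subtlety worth flagging is precisely that $\mathbf{a}$ must be treated as a deterministic vector rather than a random variable; this is justified by inspecting Equation~\eqref{eq:eqn4}, where the gradient-based quantities inside the square root are themselves averaged expectations over the calibration samples and therefore constants with respect to the randomness in $\mathbf{y}$. Once this point is made explicit, the proof reduces to the two lines of algebra sketched above, after which one simply packages the entrywise identity back into the compact Hadamard-product form of Equation~\eqref{eq:eqn10}.
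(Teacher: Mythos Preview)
Your proposal is correct and follows essentially the same route as the paper's own proof: substitute $\mathbf{\tilde{y}} = \mathbf{a}\odot(\mathbf{y}-\mathbb{E}[\mathbf{y}])$ into $\mathbf{C}=\mathbb{E}[\mathbf{\tilde{y}}\mathbf{\tilde{y}}^\top]$, compute the $(i,j)$ entry, pull the deterministic factor $a_i a_j$ outside the expectation, and identify the remaining pieces with $\mathrm{Cov}(\mathbf{y})_{ij}$ and $\mathbf{M}_{ij}$. Your explicit remark that $\mathbf{a}$ must be treated as deterministic is exactly the point the paper relies on as well.
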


\noindent A detailed derivation is provided in Appendix \ref{appendix:A4}. 

\begin{remark}
    Since both $\mathrm{Cov}\!\left(\mathbf{y}\right)$ and $\mathbf{M}$ are positive semidefinite, their Hadamard product $\mathbf{C}$ is also positive semidefinite by the Schur product theorem~\cite{zhang2006schur}.  
\end{remark}

\begin{remark}
Because $\mathrm{Cov}\!\left(\mathbf{y}\right)$ and $\mathbf{M}$ are symmetric,  $\mathbf{C}$ is symmetric as well. 
\end{remark}

\rev{To illustrate the importance-aware activation reconstruction enabled by the gradient-informed importance matrix $\mathbf{M}$, we present visual heatmaps and detailed analysis in Appendix~\ref{subsec:appendix_importance_matrix}}.


\subsection{Reconstruction Direction}  
\label{sec:sec3_5}
From Equation~\eqref{eq:eqn9} and the fact that $\mathbf{C}$ is real and symmetric, we have:

{\small \begin{equation}  
    \mathbf{C} \mathbf{u}_k = \lambda_k \mathbf{u}_k  
    \label{eq:eqn13}  
\end{equation}}%
This implies that each reconstruction direction $\mathbf{u}_k$ is an eigenvector of $\mathbf{C}$. We formalize this result in the following theorem.

\begin{theorem}[Reconstruction Direction Theorem]  
    \label{theorem:theorem4}  
    To minimize the objective \(h(\{\mathbf{u}_k\})\) under orthonormality constraints, the optimal \(k^\mathrm{th}\) 
reconstruction direction \(\mathbf{u}_k\) is the eigenvector corresponding to the \(k^\mathrm{th}\) largest eigenvalue of the importance-weighted activation covariance matrix \(\mathbf{C}\).  
\end{theorem}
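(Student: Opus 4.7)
My plan is to reduce the optimization problem to a standard Rayleigh-quotient maximization over orthonormal sets and then invoke the Ky Fan maximum principle. First, using the result of Theorem~2 and the linearity of expectation, I would rewrite
\[
h(\{\mathbf{u_k}\}) = \mathbb{E}\!\left[\mathbf{\tilde{y}}^\top \mathbf{\tilde{y}}\right] - \sum_{k=1}^{r} \mathbf{u_k}^\top \mathbb{E}\!\left[\mathbf{\tilde{y}} \mathbf{\tilde{y}}^\top\right] \mathbf{u_k} = \mathrm{tr}(\mathbf{C}) - \sum_{k=1}^{r} \mathbf{u_k}^\top \mathbf{C}\,\mathbf{u_k},
\]
using the cyclic property of the trace and the definition of $\mathbf{C}$ in Equation~\eqref{eq:eqn8}. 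Since $\mathrm{tr}(\mathbf{C})$ does not depend on $\{\mathbf{u_k}\}$, minimizing $h$ under the orthonormality constraints is equivalent to maximizing $\sum_{k=1}^{r} \mathbf{u_k}^\top \mathbf{C}\,\mathbf{u_k}$ over all orthonormal $r$-tuples.

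Next, I would exploit the first-order optimality condition already derived by the authors. Equation~\eqref{eq:eqn13} shows that any stationary $\mathbf{u_k}$ satisfies $\mathbf{C}\mathbf{u_k} = \lambda_k \mathbf{u_k}$, so each $\mathbf{u_k}$ must lie in the spectrum of $\mathbf{C}$. Substituting into the objective gives $\mathbf{u_k}^\top \mathbf{C}\,\mathbf{u_k} = \lambda_k \|\mathbf{u_k}\|^2 = \lambda_k$, hence $\sum_k \mathbf{u_k}^\top \mathbf{C}\,\mathbf{u_k} = \sum_k \lambda_k$. Because $\mathbf{C}$ is real symmetric and positive semidefinite (by the remarks following Theorem~3), it admits a full orthonormal eigenbasis with nonnegative eigenvalues $\mu_1 \geq \mu_2 \geq \cdots \geq \mu_d \geq 0$. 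Eigenvectors associated with distinct eigenvalues are automatically orthogonal, and within any eigenspace of multiplicity greater than one we may Gram--Schmidt to recover orthonormality, so the constraint set is compatible with selecting any $r$ eigenvectors.

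To conclude that the global maximum is attained by the top-$r$ eigenvectors, I would invoke the Ky Fan maximum principle (equivalently, a repeated application of the Courant--Fischer characterization): for a symmetric matrix $\mathbf{C}$,
\[
\max_{\substack{\mathbf{u_1},\dots,\mathbf{u_r} \\ \mathbf{u_i}^\top \mathbf{u_j} = \delta_{ij}}} \; \sum_{k=1}^{r} \mathbf{u_k}^\top \mathbf{C}\,\mathbf{u_k} \;=\; \sum_{k=1}^{r} \mu_k,
\]
with the maximum realized precisely when $\mathbf{u_1},\dots,\mathbf{u_r}$ span the leading $r$-dimensional invariant subspace. Identifying this optimum with our first-order stationary solutions then forces $\mathbf{u_k}$ to be the eigenvector associated with the $k$-th largest eigenvalue $\mu_k$ of $\mathbf{C}$, which is exactly the statement of Theorem~4.

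\paragraph{Anticipated obstacle.}
The routine steps (trace rewriting, using the eigenvalue equation) are mechanical; the subtle point is justifying that the Lagrangian stationarity condition, which only pins each $\mathbf{u_k}$ down as \emph{some} eigenvector, actually selects the \emph{top} eigenvectors as the global minimizer rather than some other stationary saddle. This is where the appeal to Ky Fan (or an inductive deflation argument using Courant--Fischer) is essential, and it also cleanly handles the degenerate case of repeated eigenvalues, where the optimal $\mathbf{u_k}$ are only determined up to an orthogonal rotation inside the corresponding eigenspace.
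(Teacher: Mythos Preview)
Your proposal is correct and follows essentially the same route as the paper: rewrite $h$ as $\mathbb{E}[\mathbf{\tilde{y}}^\top\mathbf{\tilde{y}}]-\sum_k \mathbf{u_k}^\top\mathbf{C}\,\mathbf{u_k}$, use the Lagrangian stationarity condition $\mathbf{C}\mathbf{u_k}=\lambda_k\mathbf{u_k}$ to reduce the problem to maximizing $\sum_k\lambda_k$, and then select the top-$r$ eigenvalues. Your explicit appeal to the Ky Fan maximum principle (and the handling of repeated eigenvalues) supplies a rigor step the paper leaves implicit, but the argument structure is otherwise identical.
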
  

A full derivation is provided in Appendix~\ref{appendix:A5}.  

The reconstruction directions $\{\mathbf{u}_k\}_{k=1}^{r}$ are obtained by selecting and normalizing the top $r$ eigenvectors of $\mathbf{C}$. These eigenvectors are guaranteed to be  orthogonal.


\subsection{Compressed Model Representation}
\label{sec:sec3_6}
After obtaining the reconstruction directions $\{\mathbf{u}_k\}_{k=1}^{r}$, we construct the compressed model accordingly. 

Given the relationship between the original activation $\mathbf{y}$ and the reconstructed activation $\mathbf{\hat{y}}$:

{\vspace{-10pt}
\small
\[
\mathbf{a} \odot \left( \mathbf{\hat{y}} - \mathbb{E}[\mathbf{y}] \right) = \left( \sum_k \mathbf{u}_k \mathbf{u}_k^\top \right) \left(\mathbf{a} \odot \left( \mathbf{y} - \mathbb{E}[\mathbf{y}] \right) \right)
\]}%
and the fact that
{\small
$\mathbf{y} = \mathbf{W} \mathbf{x} + \mathbf{b}$,
}%
where {\small $\mathbf{W}$} and {\small $\mathbf{b}$} 
are the original layer's weight matrix and bias, we can express the reconstructed activation $\mathbf{\hat{y}}$ as follows: 

\begin{theorem}[Activation Reconstruction Theorem]
    \label{theorem:theorem5}
    The reconstructed activation $\mathbf{\hat{y}}$, which satisfies the projection condition
    
    {\small \vspace{-8pt}
    \[
        \mathbf{a} \odot \left( \mathbf{\hat{y}} - \mathbb{E}[\mathbf{y}] \right) = \left( \sum_k \mathbf{u}_k \mathbf{u}_k^ \top \right) \left(\mathbf{a} \odot \left( \mathbf{y} - \mathbb{E}[\mathbf{y}] \right)\right)
    \]}%
    is given by:
    
    {
    \small \vspace{-10pt}
    \begin{equation*}
    \begin{aligned}
        \mathbf{\hat{y}} &=  \left[ \mathbf{U} \oslash (\mathbf{a} \cdot \mathbf{1}_r^\top) \right] \left[ (\mathbf{U} \odot (\mathbf{a} \cdot \mathbf{1}_r^\top))^\top \mathbf{W}\right] \mathbf{x} \\
         &+ \mathbb{E}[\mathbf{y}] + (\mathbf{U} \mathbf{U}^\top \odot (\frac{1}{\mathbf{a}} \cdot \mathbf{a}^\top)) (\mathbf{b} - \mathbb{E}[\mathbf{y}])
    \end{aligned}
    \end{equation*}
    }%
    where $\mathbf{U} = [\mathbf{u}_1,\dots,\mathbf{u}_r]$, $\mathbf{1}_r$ is an $r$-dimensional column vector of ones, $\mathbf{W}$ and $\mathbf{b}$ are the original layer's weight matrix and bias, $\mathbf{x}$ is the input activation, and $\oslash$ denotes element-wise division.
\end{theorem}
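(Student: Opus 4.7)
The plan is to derive the closed-form expression for $\mathbf{\hat{y}}$ by solving the projection identity for $\mathbf{\hat{y}}$ directly and then rewriting every Hadamard operation as a matrix product involving a diagonal scaling, so that the final expression can be cleanly assembled in the form the theorem asks for.

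First I would isolate $\mathbf{\hat{y}} - \mathbb{E}[\mathbf{y}]$ from the projection condition by dividing both sides elementwise by $\mathbf{a}$. Writing $\mathbf{P} = \sum_{k} \mathbf{u_k} \mathbf{u_k}^\top = \mathbf{U}\mathbf{U}^\top$, this yields $\mathbf{\hat{y}} - \mathbb{E}[\mathbf{y}] = (\mathbf{U}\mathbf{U}^\top(\mathbf{a} \odot (\mathbf{y} - \mathbb{E}[\mathbf{y}]))) \oslash \mathbf{a}$. Next, I would invoke the elementary identities $\mathbf{a} \odot \mathbf{v} = \mathrm{diag}(\mathbf{a})\,\mathbf{v}$ and $\mathbf{v} \oslash \mathbf{a} = \mathrm{diag}(\mathbf{a})^{-1}\mathbf{v}$ (valid since each entry of $\mathbf{a}$ is strictly positive by Equation~\eqref{eq:eqn4}) to turn the expression into the purely-matrix form $\mathbf{\hat{y}} - \mathbb{E}[\mathbf{y}] = \mathrm{diag}(\mathbf{a})^{-1}\mathbf{U}\mathbf{U}^\top \mathrm{diag}(\mathbf{a})(\mathbf{y} - \mathbb{E}[\mathbf{y}])$.

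The crucial step is to re-express $\mathrm{diag}(\mathbf{a})^{-1}\mathbf{U}$ and $\mathrm{diag}(\mathbf{a})\,\mathbf{U}$ in Hadamard form. Because multiplying $\mathbf{U}$ on the left by $\mathrm{diag}(\mathbf{a})$ rescales each row of $\mathbf{U}$ by the corresponding entry of $\mathbf{a}$, and because the $d\times r$ matrix $\mathbf{a}\cdot\mathbf{1_r}^\top$ carries exactly that row-broadcast pattern, I would note the identities $\mathrm{diag}(\mathbf{a})\,\mathbf{U} = \mathbf{U} \odot (\mathbf{a}\cdot\mathbf{1_r}^\top)$ and $\mathrm{diag}(\mathbf{a})^{-1}\mathbf{U} = \mathbf{U} \oslash (\mathbf{a}\cdot\mathbf{1_r}^\top)$. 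Transposing gives $\mathbf{U}^\top \mathrm{diag}(\mathbf{a}) = (\mathbf{U}\odot(\mathbf{a}\cdot\mathbf{1_r}^\top))^\top$. Substituting these back produces
\[
\mathbf{\hat{y}} - \mathbb{E}[\mathbf{y}] = \bigl(\mathbf{U}\oslash(\mathbf{a}\cdot\mathbf{1_r}^\top)\bigr)\bigl(\mathbf{U}\odot(\mathbf{a}\cdot\mathbf{1_r}^\top)\bigr)^\top (\mathbf{y} - \mathbb{E}[\mathbf{y}]).
\]

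Finally I would plug in $\mathbf{y} = \mathbf{W}\mathbf{x} + \mathbf{b}$, distribute across the affine decomposition $\mathbf{y} - \mathbb{E}[\mathbf{y}] = \mathbf{W}\mathbf{x} + (\mathbf{b} - \mathbb{E}[\mathbf{y}])$, and group the $\mathbf{x}$-independent terms together, recovering exactly the stated formula with the bracketed factor $(\mathbf{U}\odot(\mathbf{a}\cdot\mathbf{1_r}^\top))^\top\mathbf{W}$ playing the role of the inner factorized weight and $(\mathbf{U}\oslash(\mathbf{a}\cdot\mathbf{1_r}^\top))$ playing the role of the outer factor. The main obstacle is not any deep computation but the bookkeeping: making the row-scaling broadcast $\mathbf{a}\cdot\mathbf{1_r}^\top$ argument rigorous and verifying that the transposes of Hadamard-scaled matrices align correctly, since a mismatched broadcast (column versus row) would silently invalidate the equivalence to $\mathrm{diag}(\mathbf{a})\mathbf{U}$.
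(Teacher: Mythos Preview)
Your proposal is correct and follows essentially the same route as the paper's proof: both isolate $\mathbf{\hat{y}}-\mathbb{E}[\mathbf{y}]$ from the projection condition by undoing the elementwise scaling by $\mathbf{a}$, rewrite the result in the $\mathbf{U}$-matrix form, and then substitute $\mathbf{y}=\mathbf{W}\mathbf{x}+\mathbf{b}$. The only cosmetic difference is that the paper manipulates the Hadamard products directly via the per-vector sum $\sum_k(\mathbf{u_k}\oslash\mathbf{a})(\mathbf{u_k}\odot\mathbf{a})^\top$, whereas you pass through the equivalent $\mathrm{diag}(\mathbf{a})^{-1}\mathbf{U}\mathbf{U}^\top\mathrm{diag}(\mathbf{a})$ representation before converting back; this is the same argument with slightly different bookkeeping.
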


 A full derivation is provided in Appendix~\ref{appendix:A6}.

Based on this result, the compressed layer is implemented using two linear layers:

\begin{itemize}
    \item The first layer has a weight matrix
    {\small \[
    \mathbf{W}_1 = (\mathbf{U} \odot (\mathbf{a} \cdot \mathbf{1}_r^\top))^\top \mathbf{W}
    \]}%
    and no bias;
    \item The second layer has a weight matrix
    { \small \[
    \mathbf{W}_2 = \mathbf{U} \oslash (\mathbf{a} \cdot \mathbf{1}_r^\top)
    \]}%
    and bias
    { \small \[
    \mathbf{b'} = \mathbb{E}[\mathbf{y}] + (\mathbf{U} \mathbf{U}^\top \odot (\frac{1}{\mathbf{a}} \cdot \mathbf{a}^\top)) (\mathbf{b} - \mathbb{E}[\mathbf{y}])
    \]}  
\end{itemize}
The compressed layer is expressed as:

{
\small \[
\mathbf{\hat{y}} = \mathbf{W}_2 (\mathbf{W}_1 \mathbf{x}) + \mathbf{b'}
\]}

\subsection{IMPACT Algorithm Description}
\label{sec:sec3_7}
Algorithm~\ref{alg:impact} outlines the procedure, which consists of two stages: profiling and compression.

\setlength{\textfloatsep}{10pt plus 1.0pt minus 2.0pt}
\begin{algorithm}[!b]
\begin{small}
\caption{IMPACT Algorithm}
\label{alg:impact}
\KwIn{Model $\mathcal{LM}$}
\KwOut{Compressed Model $\mathcal{LM}'$}
\KwData{Dataset $D$, Keeping Ratio $k$}

\hrulefill\\
\textbf{Stage 1: Profiling}\;
Let $n$ be the total number of samples in $D$\;
\For{each layer $l$ in $\mathcal{LM}$}{
Initialize 
$\mathbb{E}\!\left[\mathbf{y}\mathbf{y}^\top\right]_l = 0, \, \mathbb{E}\!\left[\mathbf{y}\right]_l = 0, \, \mathbb{E}\!\Bigl[\bigl(\tfrac{\partial \ell}{\partial \mathbf{y}}\bigr)^{\!2}\Bigr]_l = 0$\;
}
\For{each sample $s \in D$}{
    Get activation $\mathbf{y}_l$ and gradient $\frac{\partial \ell}{\partial \mathbf{y}_l}$ for layer $l$\;
    \For{each layer $l$ in $\mathcal{LM}$}{
        $\mathbb{E}\!\left[\mathbf{y}\mathbf{y}^\top\right]_l \leftarrow \mathbb{E}\!\left[\mathbf{y}\mathbf{y}^\top\right]_l + \mathbf{y}_l\mathbf{y}^\top_l$\;\vspace{3pt}
        $\mathbb{E}\!\left[\mathbf{y}\right]_l \leftarrow \mathbb{E}\!\left[\mathbf{y}\right]_l + \mathbf{y}_l$\; \vspace{3pt}
        $\mathbb{E}\!\Bigl[\bigl(\tfrac{\partial \ell}{\partial \mathbf{y}}\bigr)^{\!2}\Bigr]_l
    \;\gets\;
    \mathbb{E}\!\Bigl[\bigl(\tfrac{\partial \ell}{\partial \mathbf{y}}\bigr)^{\!2}\Bigr]_l
    + \bigl(\tfrac{\partial \ell}{\partial \mathbf{y}_l}\bigr)^{\!2}$\;
        
    }
}
\For{each layer $l$ in $\mathcal{LM}$ }{
    $\mathbb{E}\!\left[\mathbf{y}\mathbf{y}^\top\right]_l \leftarrow \mathbb{E}\!\left[\mathbf{y}\mathbf{y}^\top\right]_l / n$\; \vspace{3pt}
    $\mathbb{E}\!\left[\mathbf{y}\right]_l \leftarrow \mathbb{E}\!\left[\mathbf{y}\right]_l /n$\; \vspace{3pt}
    $\mathrm{Cov}(\mathbf{y})_l \leftarrow \mathbb{E}\!\left[\mathbf{y}\mathbf{y}^\top\right]_l - \mathbb{E}\!\left[\mathbf{y}\right]_l\mathbb{E}\!\left[\mathbf{y}\right]^\top_l$\;
}
\hrulefill\\
\textbf{Stage 2: Compression}\;
\For{each layer $l$ in $\mathcal{LM}$}{
    // For brevity, the subscript $l$ is omitted from the notations presented below.
    
    Compute the transformation coefficient $\mathbf{a}$ based on Equation~\eqref{eq:eqn4}\;
    Compute the gradient-informed importance matrix $\mathbf{M}$ based on Equation~\eqref{eq:eqn12}\;
    Compute the importance-weighted activation covariance matrix $\mathbf{C}$ based on Equation~\eqref{eq:eqn10}\;
    $[\mathbf{U},\mathbf{\Lambda}] = \textrm{eigenvalue\_decomposition}(\mathbf{C})$\;
    // The columns of $\mathbf{U}$ are the eigenvectors of $\mathbf{C}$\; 
    // The vector $\mathbf{\Lambda}$ consists of the eigenvalues of $\mathbf{C}$\;
    Sort the elements of $\mathbf{\Lambda}$ in descending order and reorder $\mathbf{U}$ accordingly\;
    Find smallest $r$ such that
     $\left(\sum_{j=1}^r \sqrt{\Lambda_j}\right) \big / \left(\sum_{j=1}^d \sqrt{\Lambda_j}\right) \geq k/100$\;
    $\mathbf{U} \leftarrow$ First $r$ columns of $\mathbf{U}$ \;
    Substitute the original linear layer with two new linear layers with smaller sizes:
    
    The first new layer has a weight matrix of $(\mathbf{U} \odot (\mathbf{a} \cdot \mathbf{1}_r^\top))^\top \mathbf{W}$ and no bias\;
    The second new layer has a weight matrix of $\mathbf{U} \oslash (\mathbf{a} \cdot {\mathbf{1}_r^\top}$)\ and a bias of $\mathbb{E}\!\left[\mathbf{y}\right] + (\mathbf{U} \mathbf{U}^\top \odot (\frac{1}{\mathbf{a}} \cdot \mathbf{a}^\top)) (\mathbf{b} - \mathbb{E}\left[\mathbf{y}\right])$;}

\Return Compressed Model $\mathcal{LM}'$\;
\end{small}
\end{algorithm}

\begin{figure*}[t]
    \centering
    \begin{subfigure}{.44\textwidth}
        \includegraphics[width=\linewidth]{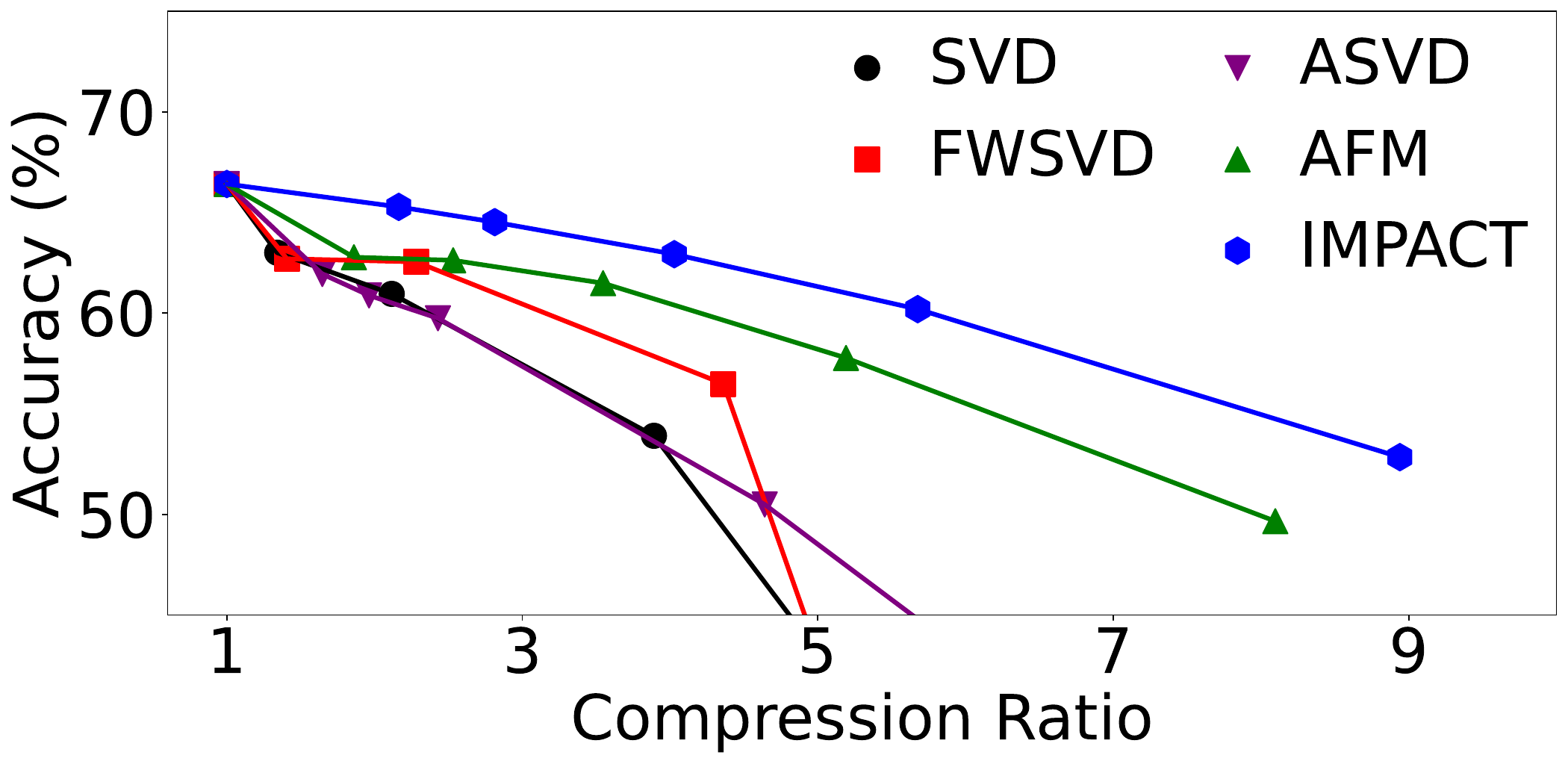}
        \caption{GSM8K}
    \end{subfigure}\hfill
    \begin{subfigure}{.44\textwidth}
        \includegraphics[width=\linewidth]{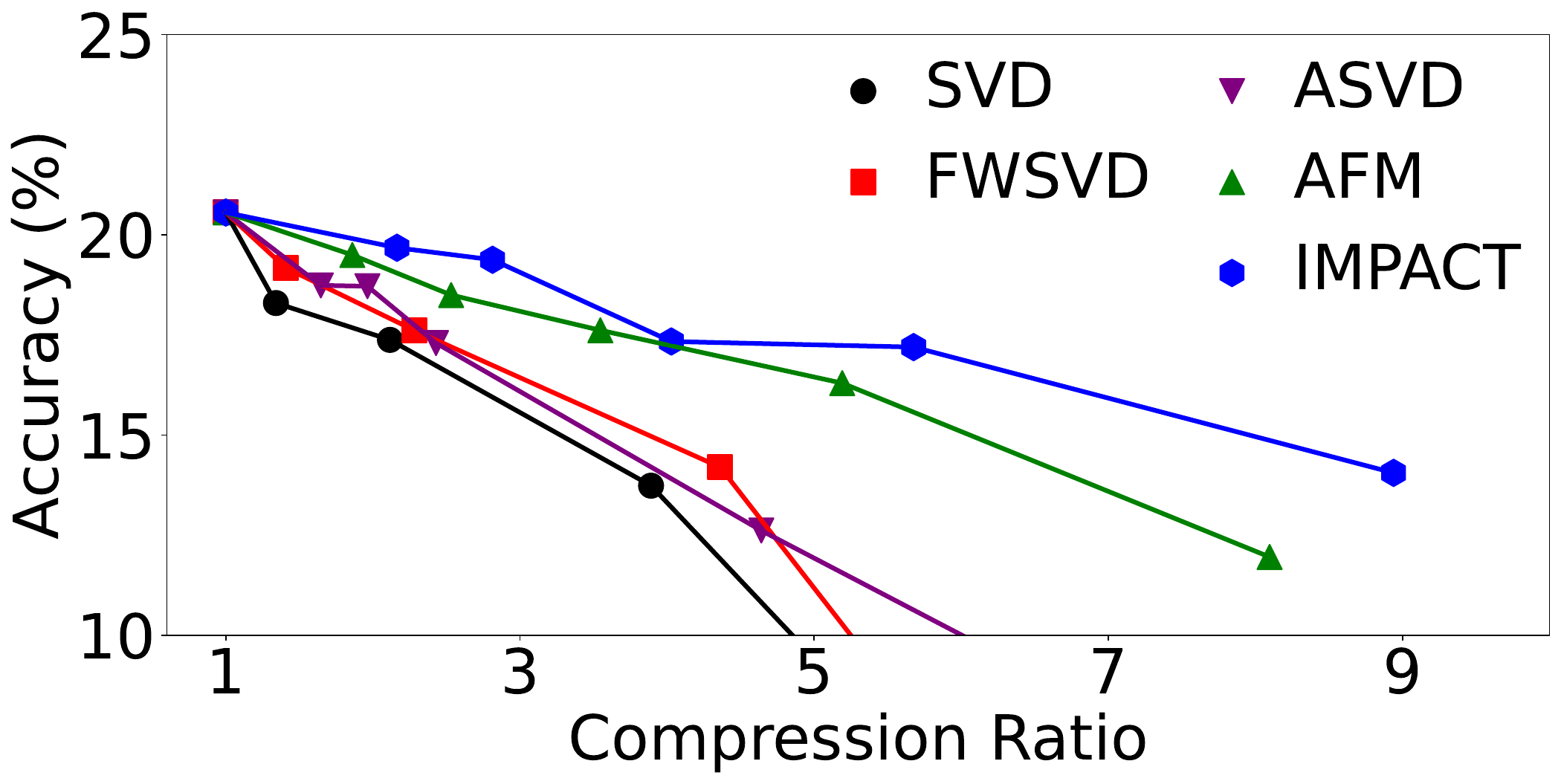}
        \caption{MATH}
    \end{subfigure}
    \caption{Pass@1 accuracy and model size of Llama 2-7B compressed with various low-rank algorithms on the mathematical reasoning task. Exact values are listed in Table~\ref{tbl:7b-math}.}
    \label{fig:7b-math}
\end{figure*}
\begin{figure*}[t]
    \centering
    \begin{subfigure}{.44\textwidth}
        \includegraphics[width=\linewidth]{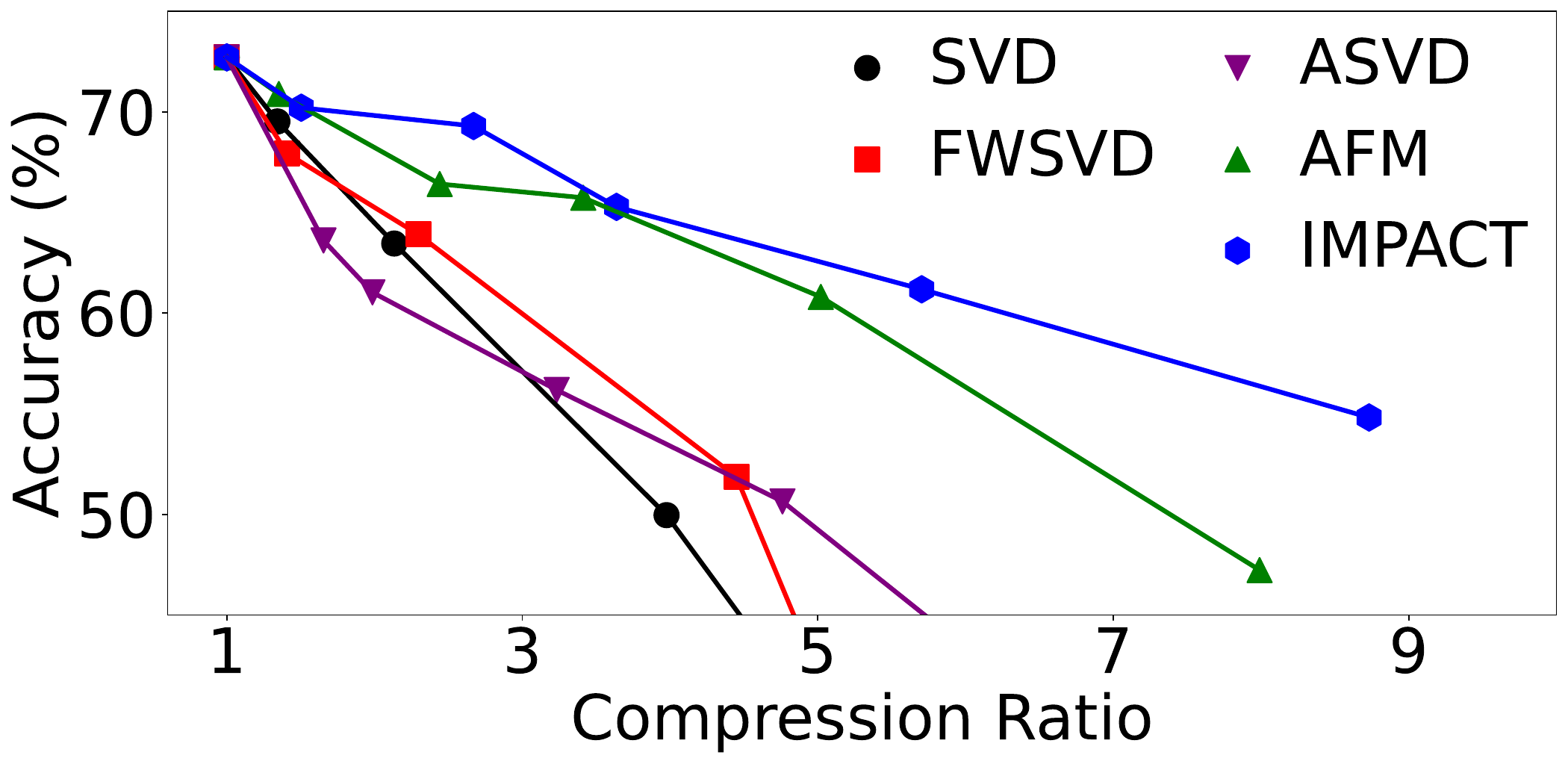}
        \caption{GSM8K}
    \end{subfigure}\hfill
    \begin{subfigure}{.44\textwidth}
        \includegraphics[width=\linewidth]{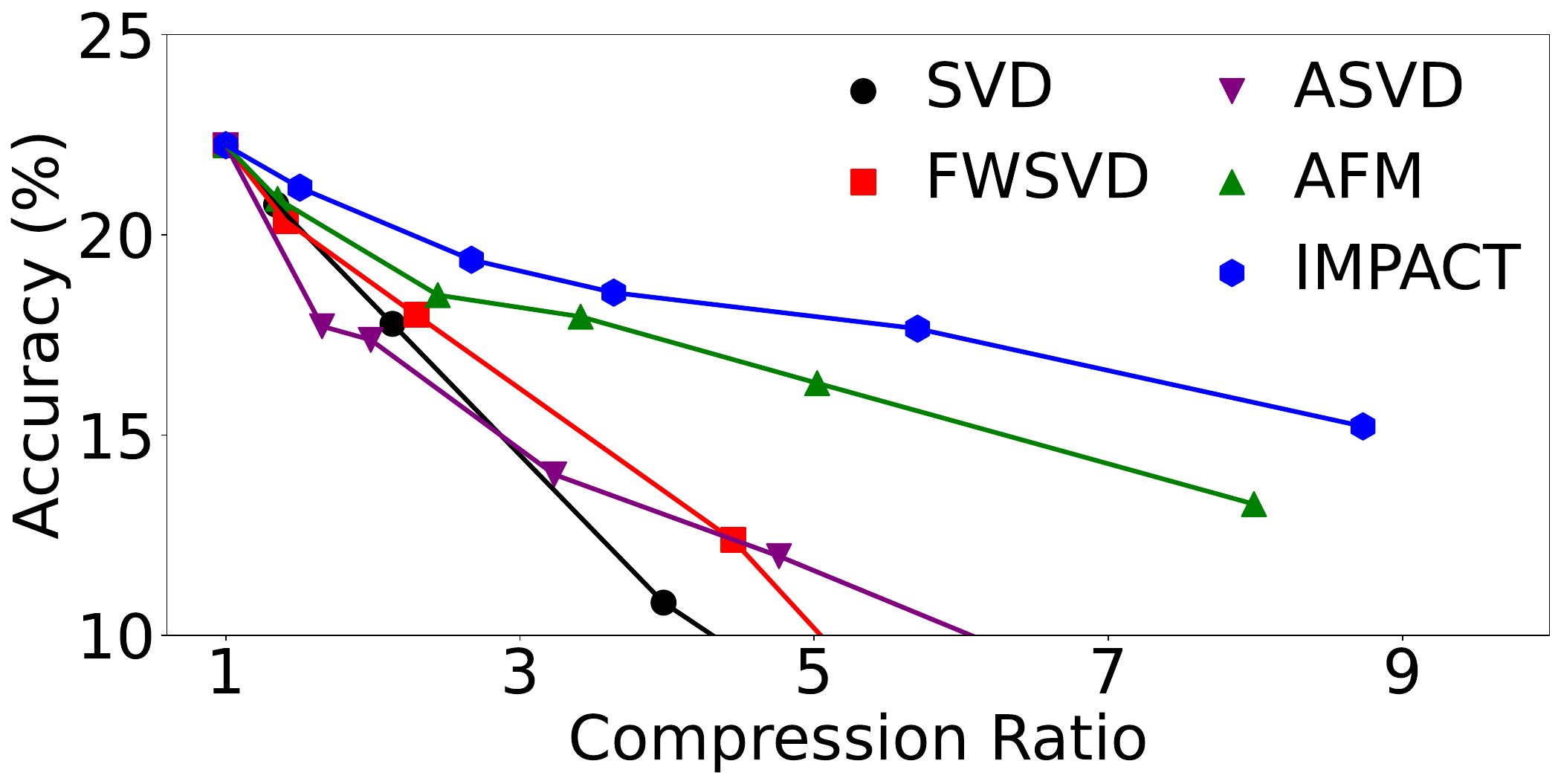}
        \caption{MATH}
    \end{subfigure}
    \caption{Pass@1 accuracy and model size of Llama 2-13B compressed with various low-rank algorithms on the mathematical reasoning task. Exact values are listed in Table~\ref{tbl:13b-math}.}
    \label{fig:13b-math}
\end{figure*}
\subsubsection{Profiling Stage}
The algorithm gathers activation and gradient statistics for each linear layer in the model. Specifically, it computes the mean activation, the activation covariance matrix, and the mean squared gradient with respect to the activations. These statistics form the basis for the subsequent compression step.

\subsubsection{Compression Stage} 

Using the collected statistics, the algorithm constructs the importance-weighted activation covariance matrix $\mathbf{C}$ for each linear layer by applying a Hadamard product between the activation covariance and the gradient-informed importance matrix. Eigenvalue decomposition is then performed on $\mathbf{C}$ to extract the top eigenvectors, which define the compression directions. Each original linear layer is subsequently replaced by a pair of smaller linear layers designed to preserve model performance.

\begin{figure*}[t]
    \centering
    \begin{subfigure}{.44\textwidth}
        \includegraphics[width=\linewidth]{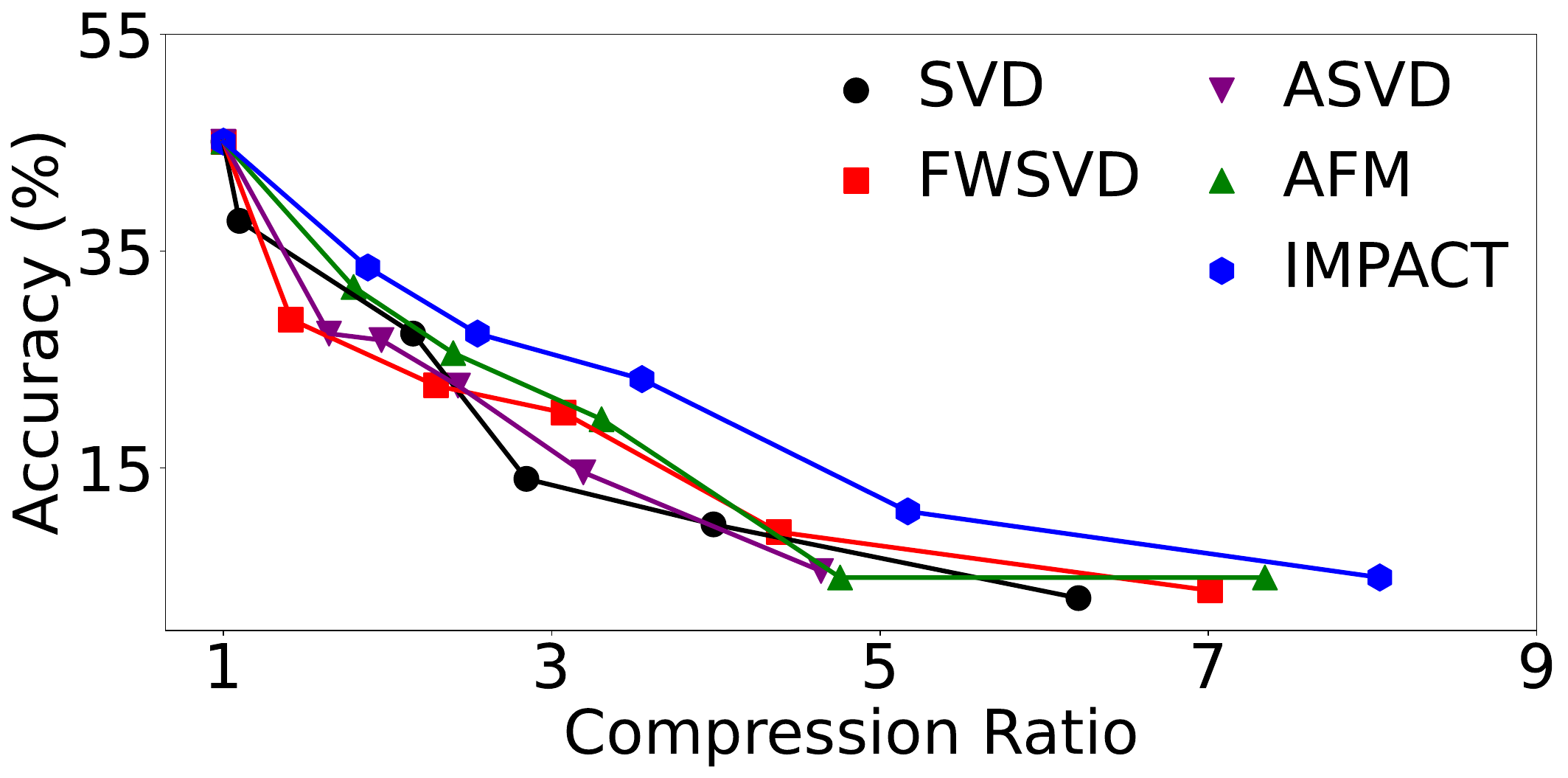}
        \caption{HumanEval}
    \end{subfigure}\hfill
    \begin{subfigure}{.44\textwidth}
        \includegraphics[width=\linewidth]{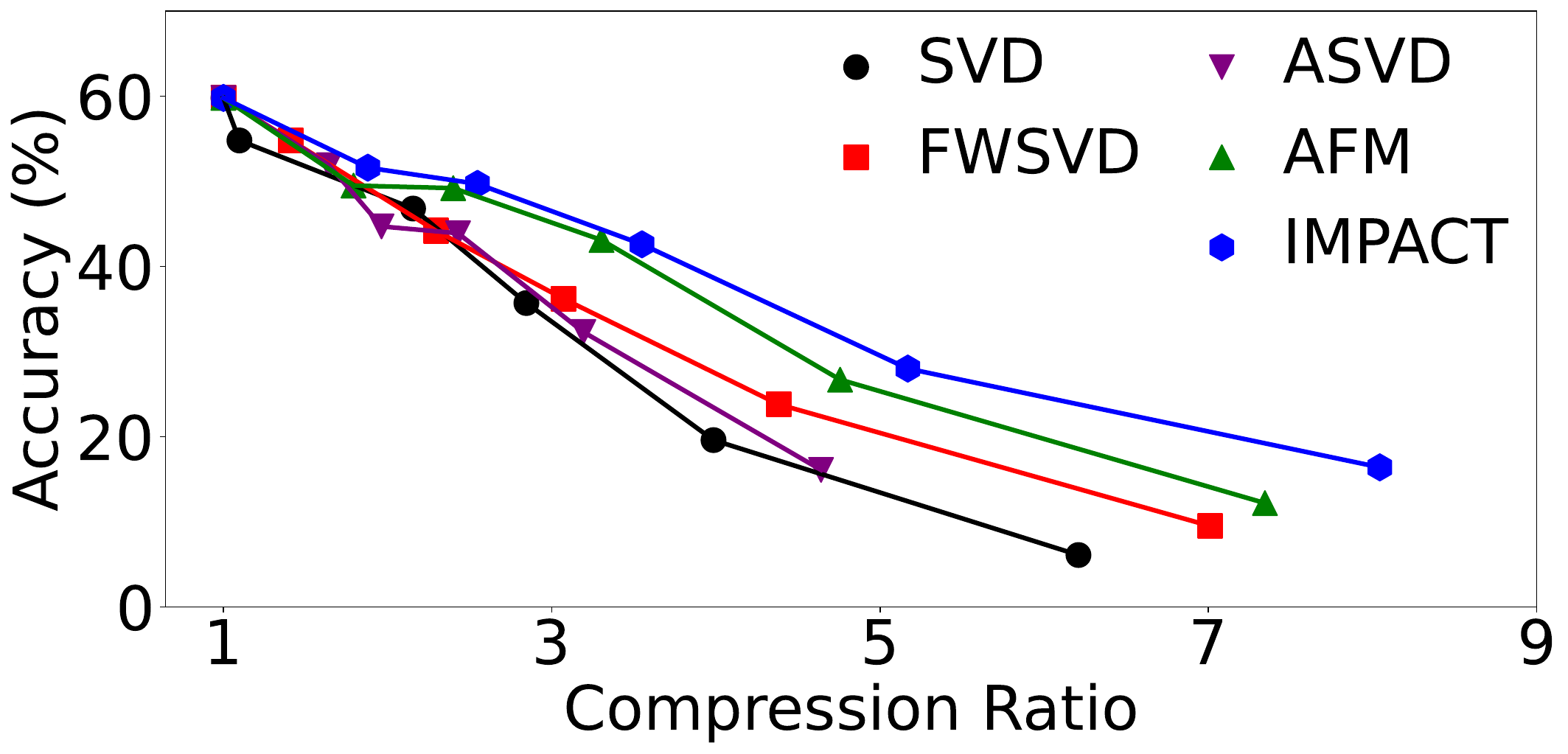}
        \caption{MBPP}
    \end{subfigure}
    \vspace{-2pt}
    \caption{
    Pass@1 accuracy and model size of CodeLlama-7B compressed with various low-rank algorithms on the code generation task. Exact values are listed in Table~\ref{tbl:7b-prog}.}
    \label{fig:7b-programming}
\end{figure*}
\begin{figure*}[t]
    \centering
    \begin{subfigure}{.44\textwidth}
        \includegraphics[width=\linewidth]{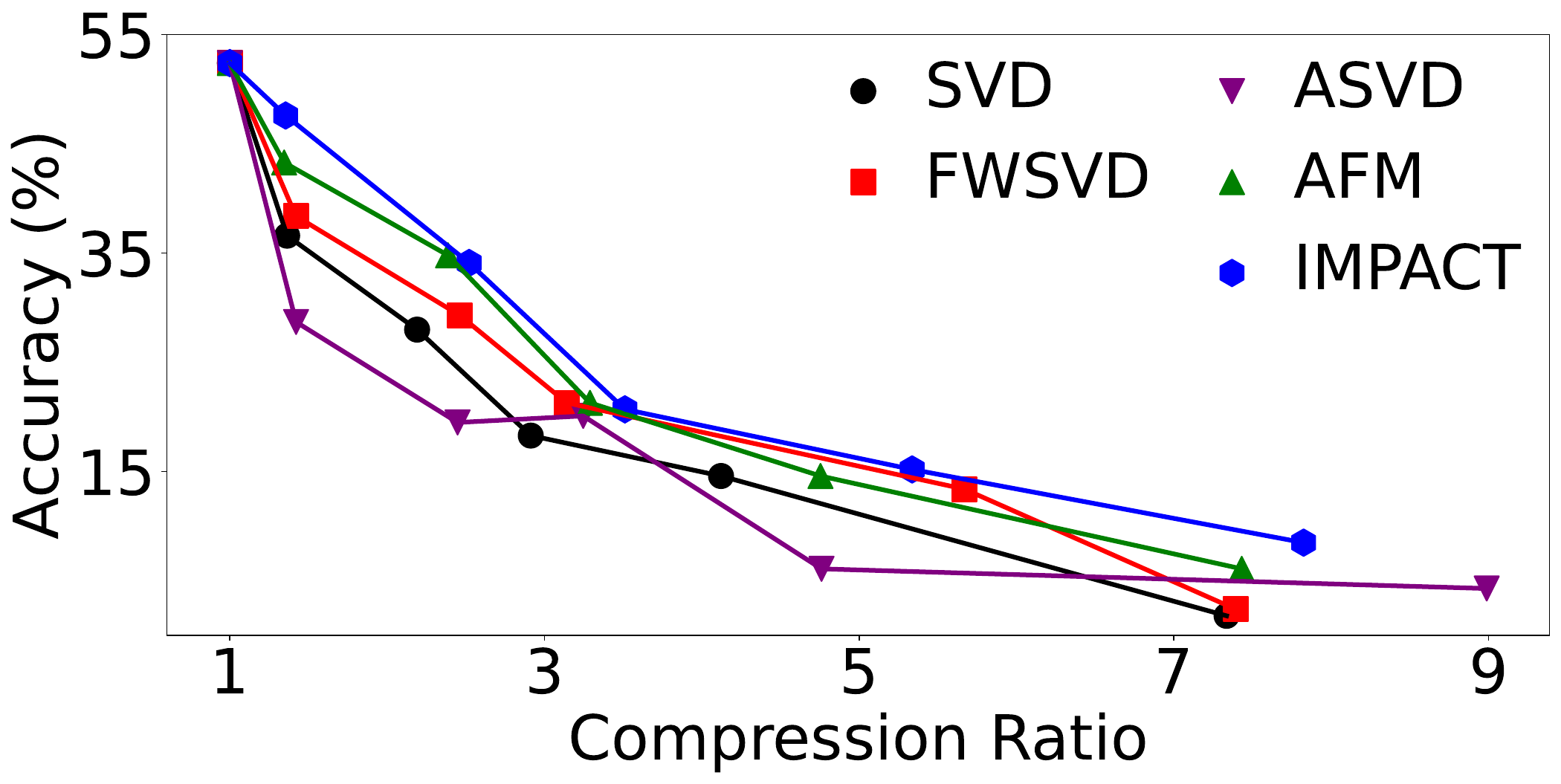}
        \caption{HumanEval}
    \end{subfigure}\hfill
    \begin{subfigure}{.44\textwidth}
        \includegraphics[width=\linewidth]{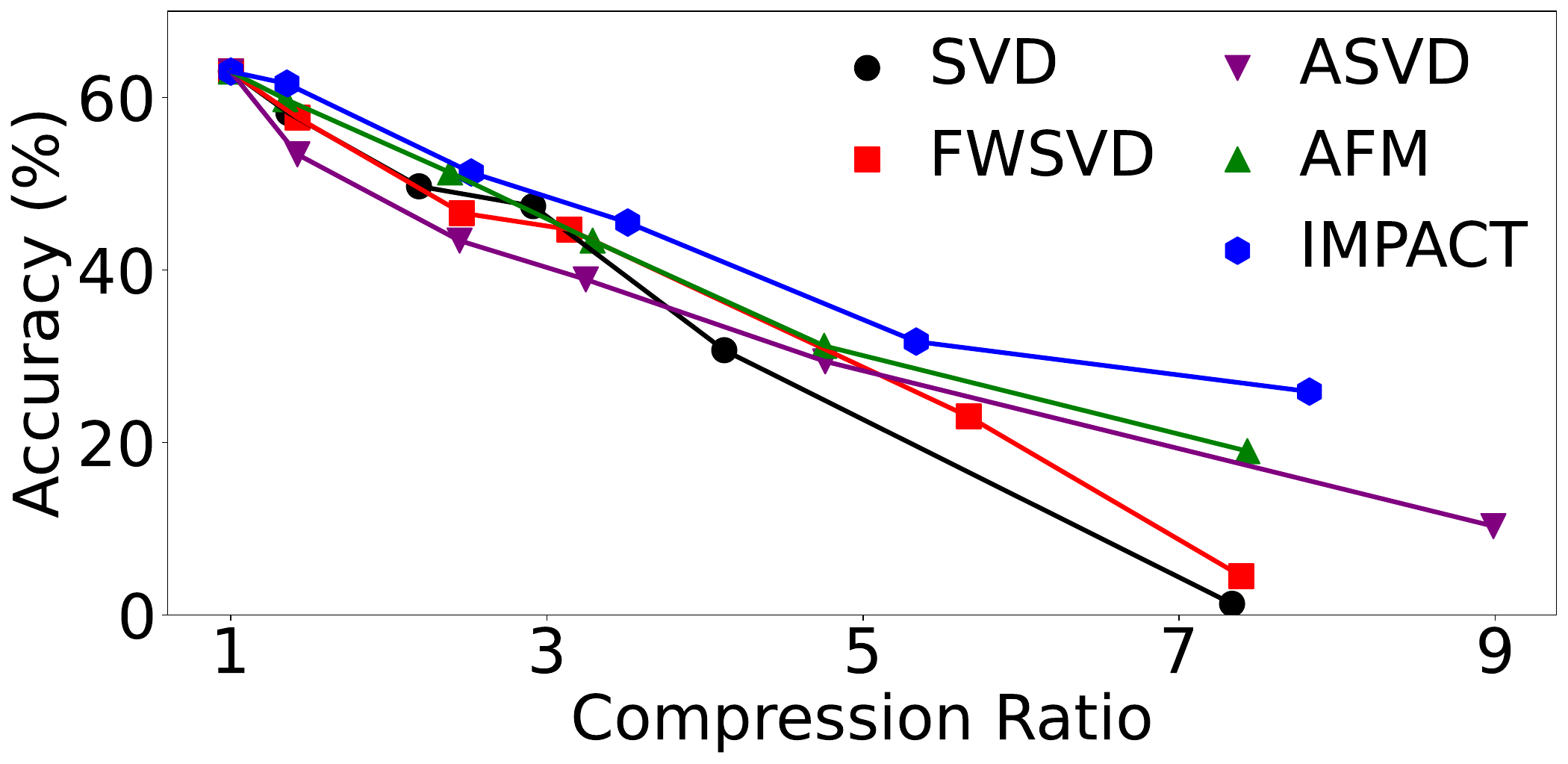}
        \caption{MBPP}
    \end{subfigure}
    \vspace{-2pt}
    \caption{
    Pass@1 accuracy and model size of CodeLlama-13B compressed with various low-rank algorithms on the code generation task. Exact values are listed in Table~\ref{tbl:13b-prog}.}
    \label{fig:13b-programming}
\end{figure*}
\section{Experiments}

\subsection{Evaluation Methodology\label{subsec:methodology}}
We evaluate the effectiveness of low-rank compression algorithms on two tasks: mathematical reasoning and code generation. For mathematical reasoning, we use the Llama 2-7B and -13B models~\citep{llama2}; for code generation, we use CodeLlama-7B and -13B~\citep{codellama}. Each model is first finetuned on a task-specific finetuning set, then compressed using a low-rank method, and finally undergoes post-compression finetuning before evaluation. For fair comparison, we applied the same supervised fine-tuning (SFT) protocol to all methods, including IMPACT and all baselines. We used AdamW with a learning rate of 2e-5 for Llama 2-7B and CodeLlama-7B, and 1e-5 for Llama 2-13B and CodeLlama-13B, along with cosine learning rate scheduling (warmup ratio 3\%), BF16 mixed precision, and FSDP. The total batch size was 128 for both Llama 2-7B and CodeLlama-7B (2 GPUs × 64 samples per GPU), and for Llama 2-13B and CodeLlama-13B (4 GPUs × 32 samples per GPU). \rev{All profiling was performed as a one-time post-training pass on a single NVIDIA A100 GPU. Additional details on the profiling cost and stability with respect to calibration data are provided in
Appendix~\ref{app:profiling-cost}.}


We employ distinct datasets for calibration and evaluation. For calibration, we use MetaMathQA-395K~\cite{metamath} for mathematical reasoning and Code-Instructions-120K~\cite{codeinstructions120k} for code generation. For evaluation, on the mathematical reasoning task, we evaluate on GSM8K~\citep{gsm8k} and Hendrycks’ MATH~\citep{hendrycks}; on code generation, we use MBPP~\citep{mbpp} and HumanEval~\citep{humaneval} as evaluation sets.

We compare our proposed method, IMPACT, against state-of-the-art low-rank compression techniques, including SVD~\citep{svd1,svd10,svd11_new,svd12_new,svd13_new,svd14_new,svd15_new,svd16_new}, a widely used matrix factorization method; FWSVD~\citep{fwsvd}, which incorporates weight importance; and activation-aware approaches such as ASVD~\citep{asvd} and AFM~\citep{afm}.


Beyond low-rank compression methods, we also benchmark IMPACT against compression techniques from other paradigms, including QLoRA~\citep{QLoRA}, a quantization approach that finetunes low-rank adapters, and FLAP~\citep{FLAP}, a pruning method that removes weights based on magnitude and activation variance. These comparisons highlight IMPACT’s robustness and effectiveness across diverse compression strategies.

\subsection{Evaluation of Low-Rank Compression for Mathematical Reasoning}
 Figures~\ref{fig:7b-math} and~\ref{fig:13b-math} show the performance of low-rank compression methods on Llama 2-7B and -13B for the mathematical reasoning task. We evaluate the Pass@1 accuracy of the models across a range of compression ratios (the ratio of the original model size to the compressed model size). The precise numerical values corresponding to these performance curves are reported in Tables~\ref{tbl:7b-math} and~\ref{tbl:13b-math}. 
 Our proposed method, IMPACT, consistently outperforms SVD, FWSVD, ASVD, and AFM across all compression ratios, achieving greater compression while maintaining comparable or superior accuracy.



 On Llama~2-7B, IMPACT achieves up to 55.4\% greater size reduction than the strongest baseline (AFM) on GSM8K, and up to 31.7\% more on MATH, while maintaining the same accuracy. Across all evaluated compression ratios, it compresses the model over 40\% more than SVD, FWSVD, and ASVD on both datasets while delivering similar or better performance. Similar patterns are observed for Llama~2-13B, where IMPACT achieves up to 39.8\% more compression than AFM on GSM8K and 36.5\% more on MATH. At compression ratios above 2.5$\times$, IMPACT continues to deliver over 35\% more compression than SVD, FWSVD, and ASVD while maintaining better performance.

\subsection{Evaluation of Low-Rank Compression for Code Generation}

\begin{figure*}[t]
    \centering
    \begin{subfigure}{.44\textwidth}
        \includegraphics[width=\linewidth]{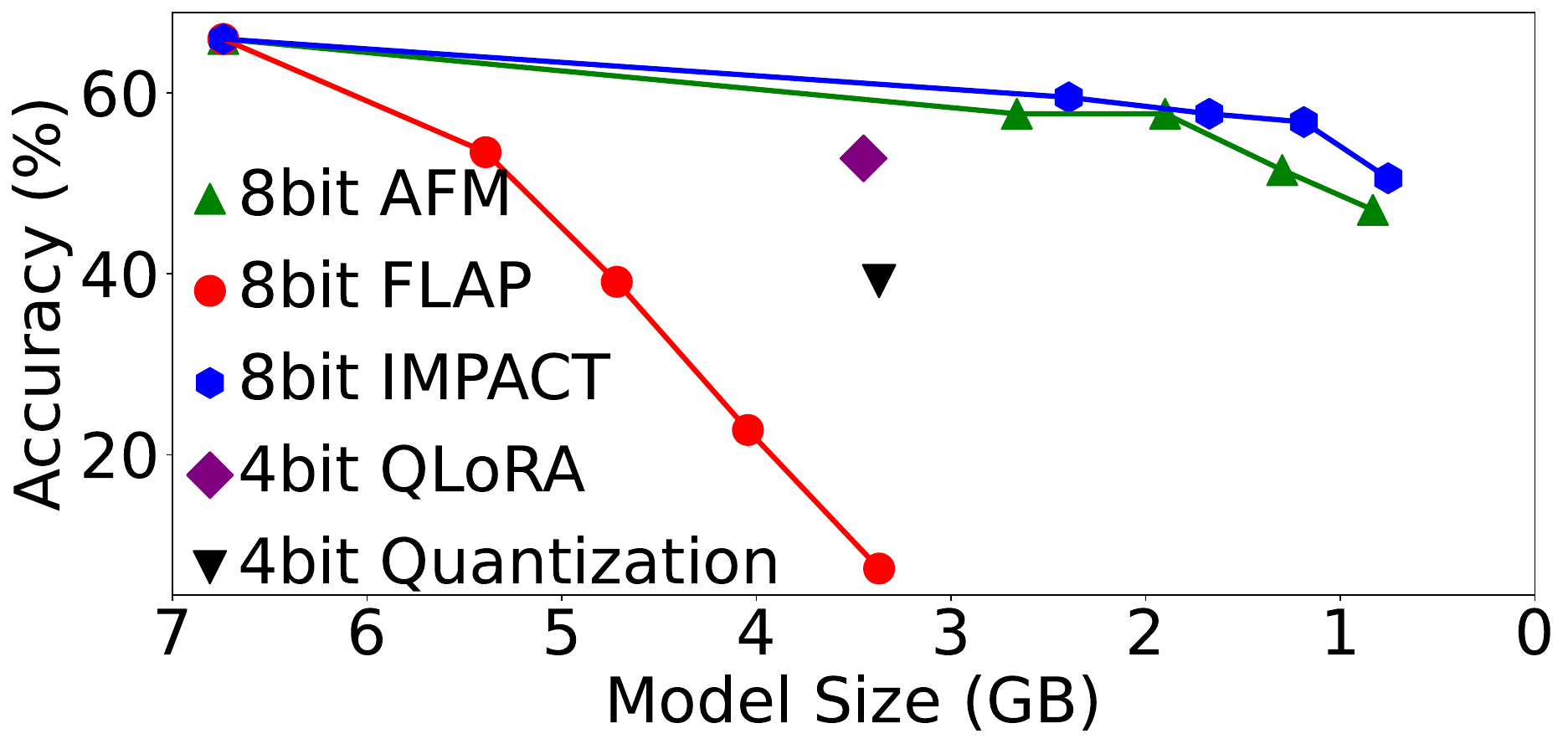}
        \caption{GSM8K}
    \end{subfigure}\hfill
    \begin{subfigure}{.44\textwidth}
        \includegraphics[width=\linewidth]{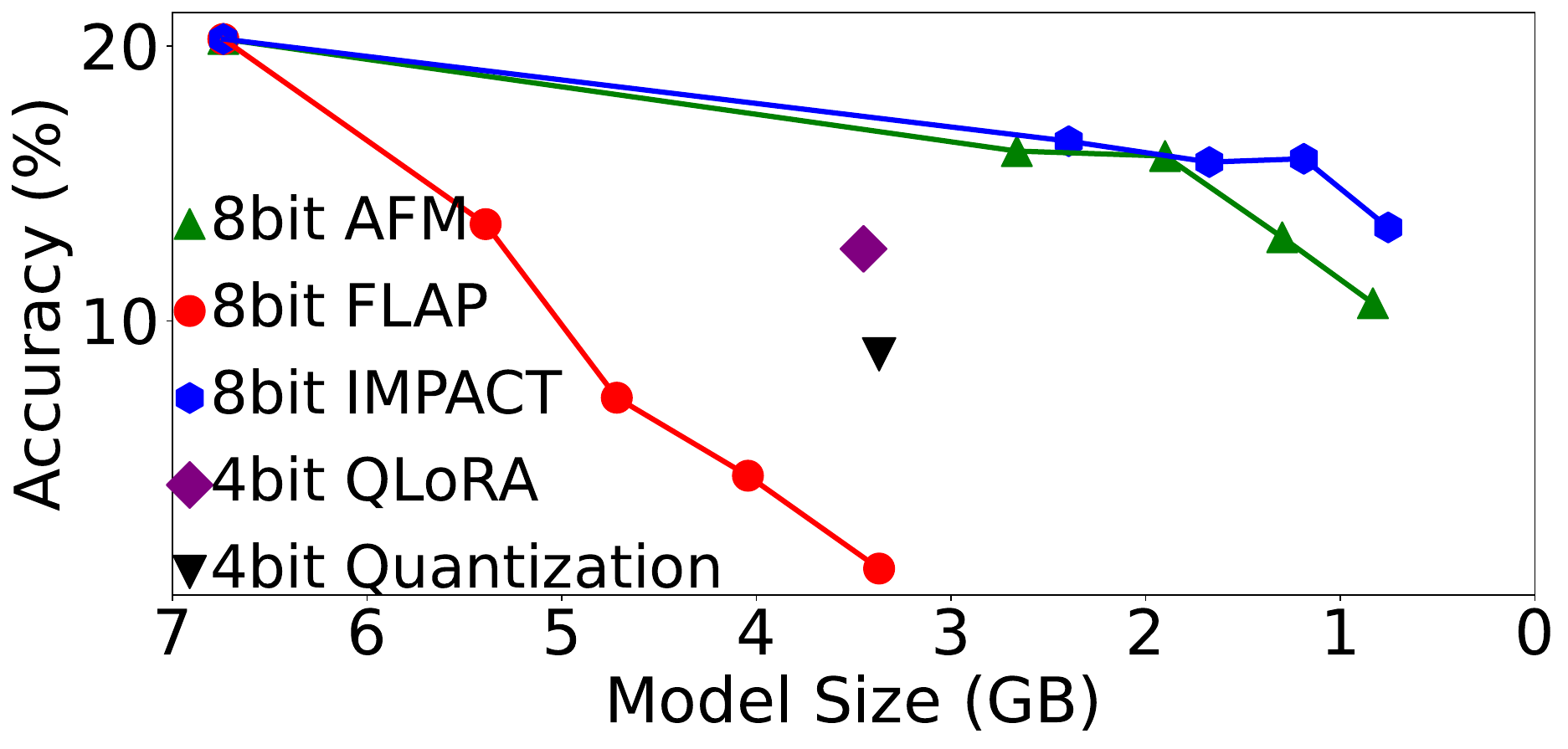}
        \caption{MATH}
    \end{subfigure}
    \caption{Pass@1 accuracy and model size of Llama 2-7B models compressed using quantization alone, as well as in combination with low-rank compression or pruning techniques, evaluated on the mathematical reasoning task.}
    \label{fig:7b-math-quantization}
\end{figure*}

\begin{figure*}[!t]
    \centering
    \begin{subfigure}{.44\textwidth}
        \includegraphics[width=\linewidth]{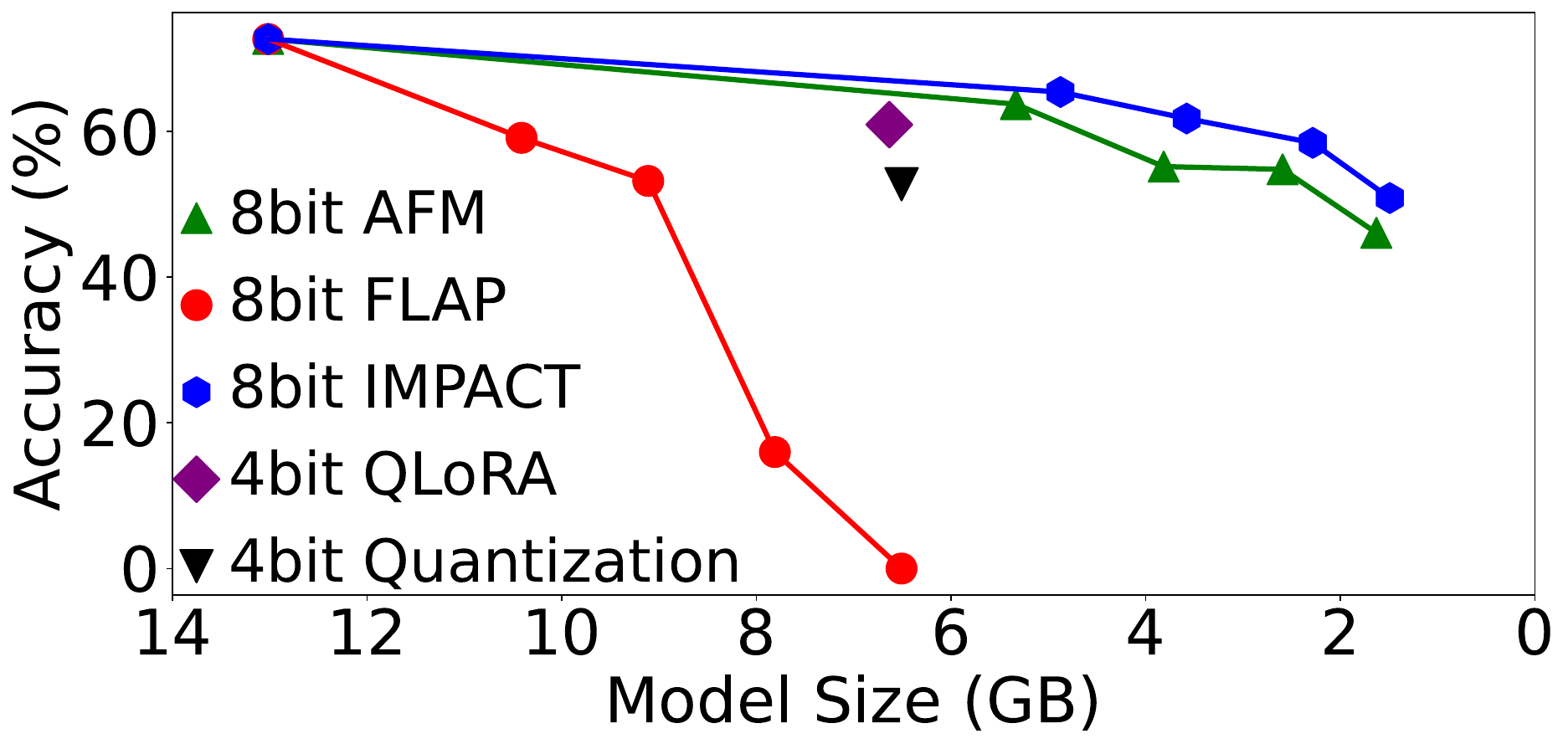}
        \caption{GSM8K}
    \end{subfigure}\hfill
    \begin{subfigure}{.44\textwidth}
        \includegraphics[width=\linewidth]{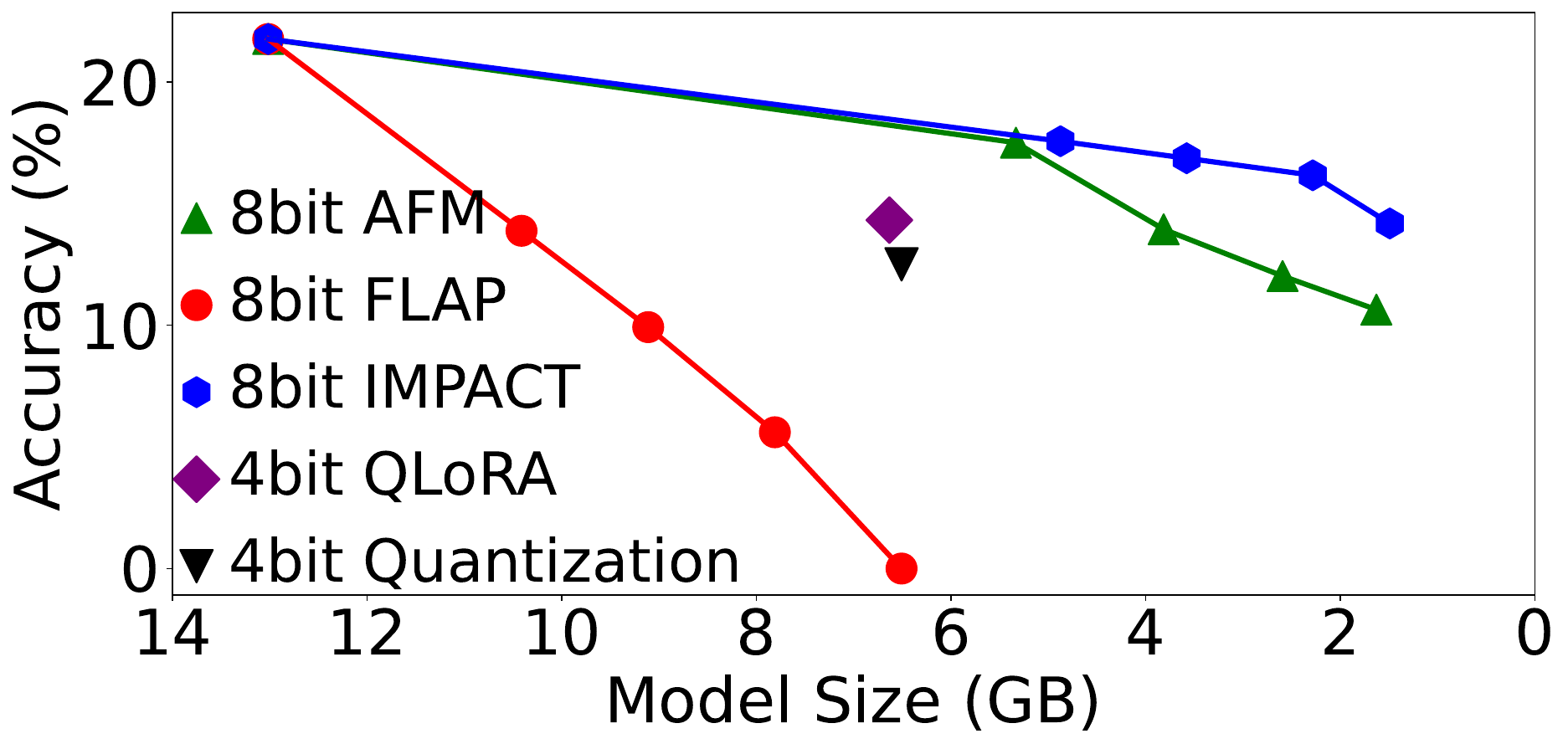}
        \caption{MATH}
    \end{subfigure}
    \caption{Pass@1 accuracy and model size of Llama 2-13B models compressed using quantization alone, as well as in combination with low-rank compression or pruning techniques, evaluated on the mathematical reasoning task. }
    \label{fig:13b-math-quantization}
\end{figure*}

Figures~\ref{fig:7b-programming} and~\ref{fig:13b-programming} show the performance of IMPACT and baselines on CodeLlama-7B and -13B for code generation, with precise numerical values reported in Tables~\ref{tbl:7b-prog} and~\ref{tbl:13b-prog}. We evaluate the Pass@1 accuracy on the HumanEval and MBPP benchmarks across a range of compression ratios.


IMPACT consistently outperforms baseline methods by achieving greater compression while maintaining comparable or superior accuracy on both code generation tasks. On CodeLlama-7B, IMPACT reduces model size by up to 24.2\% more than the best-performing baseline on HumanEval, and by 30.7\% more on MBPP. Similar trends are observed on CodeLlama-13B, where IMPACT achieves up to 18.1\% more compression on HumanEval and 28.0\% more on MBPP compared to the strongest baseline.

\subsection{Integrating IMPACT with Quantization\label{subsec:quantization}
}

Quantization and low-rank compression are distinct model compression techniques grounded in different principles: quantization reduces the precision of model weights, whereas low-rank compression approximates weight matrices as the product of smaller matrices. Quantization generally preserves performance at 8-bit precision or higher but often degrades accuracy at lower precisions like 4-bit. To assess the combined effect of quantization and other compression methods such as low-rank compression and pruning, we integrate 8-bit quantization with IMPACT, \rev{AFM (the strongest low-rank baseline)}, and FLAP to produce compressed models of various sizes.

Results on mathematical reasoning with Llama~2-7B (Figure~\ref{fig:7b-math-quantization}, Tables~\ref{tbl:7b-flap-impact} and~\ref{tbl:qlora-vs-full-model-quantization-13b-and-7b}) show that IMPACT with 8-bit quantization consistently outperforms pure 4-bit quantization, 4-bit QLoRA, \rev{8-bit AFM}, and 8-bit FLAP. For example, 8-bit IMPACT achieves higher accuracy at a model size of 1.19 GB than 8-bit FLAP, 8-bit AFM, and 4-bit QLoRA do at 5.39 GB, 1.30 GB, and 3.45 GB, respectively. Similar trends are observed for Llama~2-13B (Figure~\ref{fig:13b-math-quantization}, Tables~\ref{tbl:13b-flap-impact} and~\ref{tbl:qlora-vs-full-model-quantization-13b-and-7b}), where 8-bit IMPACT outperforms all baselines. These results highlight the superior performance of IMPACT and the benefit of combining low-rank compression with quantization, which yields higher accuracy than either technique alone at the same model size.

\begin{table}[!b]
\centering
\resizebox{0.5\textwidth}{!}{
\begin{tabular}{llrrrrr}
\toprule

\multirow{3}{*}{8-bit FLAP} 
    & \multicolumn{1}{l}{Model Size (GB)} & 6.74 & 5.39 & 4.72 & 4.04 & 3.37 \\ 
    & \multicolumn{1}{l}{GSM8K Acc (\%)} & 66.0 & 53.4 & 39.1 & 22.7 & 7.4  \\
    & \multicolumn{1}{l}{MATH Acc (\%)} & 20.3 & 13.5 & 7.2 & 4.4 & 1.0 \\ \midrule
\multirow{3}{*}{8-bit AFM} 
    & \multicolumn{1}{l}{Model Size (GB)} & 6.74 & 2.66 & 1.90 & 1.30 & 0.83 \\ 
    & \multicolumn{1}{l}{GSM8K Acc (\%)} & 66.0 & 57.7 &  57.7 & 51.5 &  47.1  \\
    & \multicolumn{1}{l}{MATH Acc (\%)} & 20.3 &  16.2 & 16.0 & 13.0 & 10.6 \\ \midrule
\multirow{3}{*}{8-bit IMPACT} 
    & \multicolumn{1}{l}{Model Size (GB)} & 6.74 & 2.40 & 1.67 & 1.19 & 0.75  \\
    & \multicolumn{1}{l}{GSM8K Acc (\%)} & 66.0 & 59.5 & 57.7 & 56.8 & 50.6 \\
    & \multicolumn{1}{l}{MATH Acc (\%)} & 20.3 & 16.5 & 15.8 & 15.9 & 13.4 \\ \bottomrule
\end{tabular}
}

\caption{Pass@1 accuracy and model size of
8-bit-quantized Llama 2-7B models compressed by
FLAP, AFM, and IMPACT for mathematical reasoning. The results for 8-bit FLAP are taken from \citet{basel}.
}
\label{tbl:7b-flap-impact}

\end{table}

 \begin{table}[!t]
\centering
\resizebox{\linewidth}{!}{
\begin{tabular}{llrrrrr}
\toprule
\multirow{3}{*}{8-bit FLAP} 
    & \multicolumn{1}{l}{Model Size (GB)} & 13.02 & 10.41 & 9.11 & 7.81 & 6.51  \\ 
    & \multicolumn{1}{l}{GSM8K Acc (\%)} & 72.7 & 59.1 & 53.2 & 16.0 & 0.0 \\
    & \multicolumn{1}{l}{MATH Acc (\%)} & 21.8 & 13.9 & 9.9 & 5.6 & 0.0  \\ \midrule
\multirow{3}{*}{8-bit AFM} 
    & \multicolumn{1}{l}{Model Size (GB)} & 13.02 & 5.33 & 3.81 & 2.59 & 1.63   \\ 
    & \multicolumn{1}{l}{GSM8K Acc (\%)} & 72.7 & 63.8 & 55.2 & 54.8 & 46.1 \\
    & \multicolumn{1}{l}{MATH Acc (\%)} & 21.8 & 17.5 & 13.9 & 12.0 & 10.6  \\ \midrule
\multirow{3}{*}{8-bit IMPACT} 
    & \multicolumn{1}{l}{Model Size (GB)} & 13.02 & 4.87 & 3.58 & 2.28 & 1.49 \\ 
    & \multicolumn{1}{l}{GSM8K Acc (\%)} & 72.7 & 65.4 & 61.8 & 58.5 & 50.9 \\
    & \multicolumn{1}{l}{MATH Acc (\%)} & 21.8 & 17.6 & 16.9 & 16.2 & 14.2  \\ \bottomrule
\end{tabular}
}
\caption{Pass@1 accuracy and model size of
8-bit-quantized Llama 2-13B models compressed by
FLAP, AFM, and IMPACT for mathematical reasoning.}
\label{tbl:13b-flap-impact}
\end{table}

\begin{table}[!t]
\centering
\resizebox{\linewidth}{!}{
\begin{tabular}{cccc}
\toprule
Model Variant & Model Size (GB) & Task & Accuracy (\%) \\
\midrule
\multirow{2}{*}{4-bit-quantized 7B} 
    & \multirow{2}{*}{3.37} & GSM8K & 39.2 \\
    &                       & MATH  & 8.8  \\
\midrule
\multirow{2}{*}{4-bit-QLoRA 7B} 
    & \multirow{2}{*}{3.45} & GSM8K & 54.1 \\
    &                       & MATH  & 12.6 \\
\midrule
\multirow{2}{*}{4-bit-quantized 13B} 
    & \multirow{2}{*}{6.51} & GSM8K & 52.8 \\
    &                       & MATH  & 12.5 \\
\midrule
\multirow{2}{*}{4-bit-QLoRA 13B} 
    & \multirow{2}{*}{6.63} & GSM8K & 61.0 \\
    &                       & MATH  & 14.3 \\
\bottomrule
\end{tabular}
}
\caption{Pass@1 accuracy and model size of Llama 2-7B and -13B quantized using standard 4-bit quantization and 4-bit QLoRA on the mathematical reasoning task. Part of the results is taken from \citet{basel}.
}
\label{tbl:qlora-vs-full-model-quantization-13b-and-7b}
\end{table}

\subsection{Inference Performance\label{subsec:inference}}

To evaluate inference performance, we measure the throughput and memory usage of compressed models on the mathematical reasoning task. Figure~\ref{fig:inference} presents results for models compressed using SVD, FWSVD, AFM, ASVD, and IMPACT across various model sizes. As expected, larger models exhibit lower throughput and higher memory consumption across all methods. When model size is held constant, all approaches demonstrate comparable throughput and memory consumption. However, because IMPACT achieves similar accuracy at smaller model sizes, it delivers higher throughput and lower memory usage at equivalent accuracy levels. In particular, compared to AFM—the strongest baseline—IMPACT improves throughput by up to 41\% and reduces memory use by up to 42\%.

\begin{figure}[t]

\centering
        \includegraphics[width=1\linewidth]{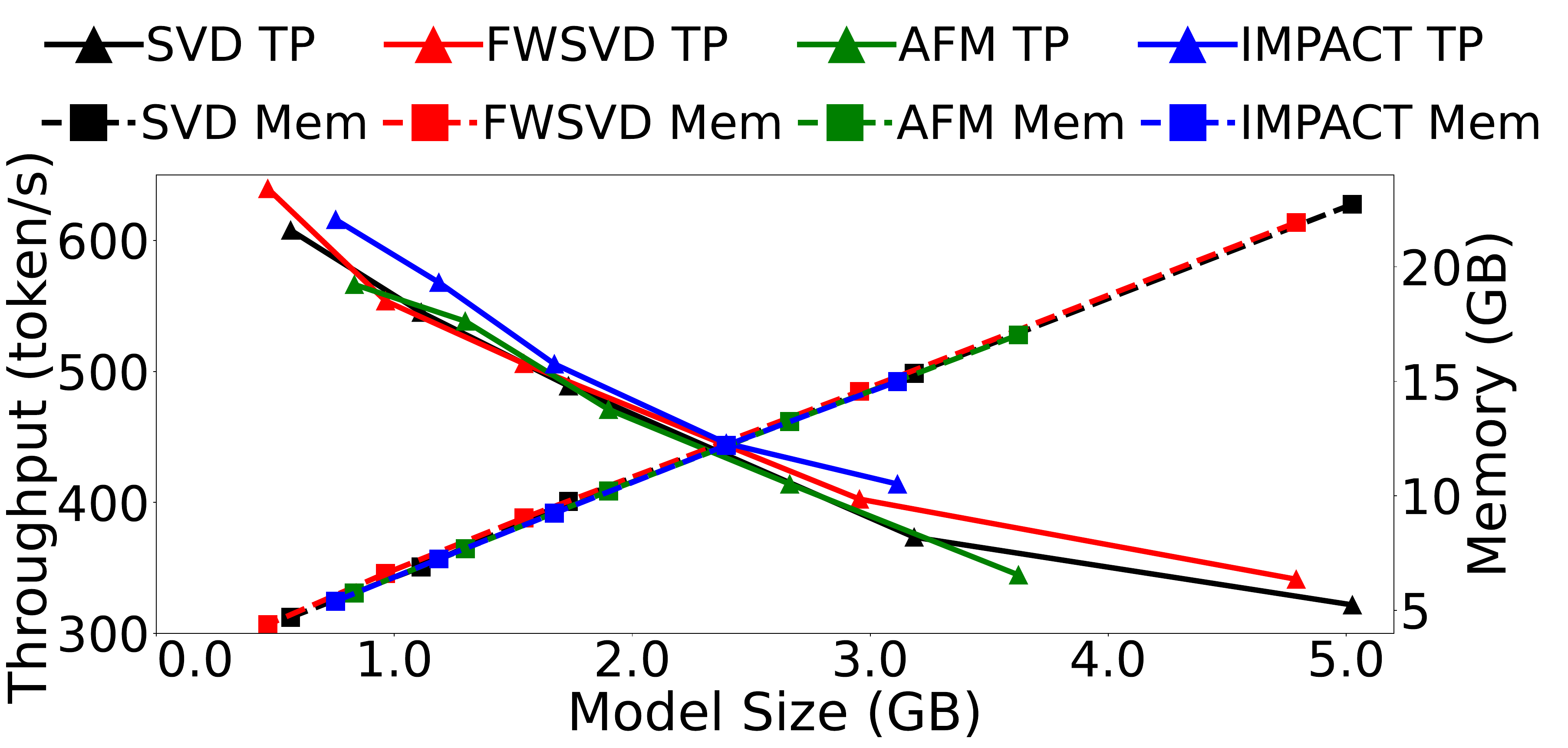}
        \vspace{-15pt}
        \caption{Throughput and memory consumption of compressed models. Exact values are provided in Table \ref{tbl:inference-results}.}
    \label{fig:inference}
\end{figure}

\section{Conclusion\label{sec:conclusion}}

This paper introduces IMPACT, a principled framework for low-rank model compression that explicitly links activation reconstruction to model performance. In contrast to prior methods that either compress weights directly or minimize activation reconstruction error, IMPACT guides activation reconstruction along directions most critical to model behavior. By formulating and solving a well-grounded optimization problem, we derive a closed-form solution in which the optimal reconstruction bases are the eigenvectors of an importance-weighted activation covariance matrix.

Our empirical results across multiple LLMs and multiple benchmarks demonstrate that IMPACT consistently achieves greater compression—up to 55.4\% more than the state-of-the-art—while maintaining similar accuracy. These findings not only validate the theoretical underpinnings of our method but also highlight its practical effectiveness for real-world deployment.

IMPACT offers a general and extensible foundation for future compression research. By establishing a formal link between compression decisions and performance outcomes, our work provides both insight and actionable tools for efficient LLM deployment—advancing the broader goal of making powerful models more accessible and sustainable.

\section*{Limitations\label{sec:limitation}}
This work establishes a connection between activation reconstruction and model performance, enabling performance-aware activation space reconstruction for model compression. Model compression is a crucial topic in natural language processing, influencing the deployment and applicability of NLP models. We believe that our approach, which links model design decisions to performance, has the potential to extend beyond model compression and be applied to other areas. We leave exploration of this broader applicability to future work.
\section*{Acknowledgments \label{sec:ack}}
This work is supported by a faculty startup grant from Iowa State University. Computational resources are provided by the HPC of the university, including equipment funded by NSF under MRI Grant Nos.~1726447 and 2018594.
\bibliography{custom}
\appendix
\setcounter{theorem}{0}
\section{Theoretical Derivation}
\label{sec:appendix}

\subsection{Mathematical Preliminaries}

We first introduce key mathematical definitions and properties that serve as the foundation for our derivation.

\begin{definition}[Differentiation Convention]
\label{def:differentiation}
For a differentiable function $\ell(\mathbf{y)}: \mathbb{R}^n \rightarrow \mathbb{R}$ where $\mathbf{y} = [y_1,\dots, y_n]^\top \in \mathbb{R}^n$, the derivative of $\ell$ with respect to $\mathbf{y}$ following the denominator-layout convention is given by the row vector

{
\small \[
  \dfrac{\partial\ell}{\partial\mathbf y}
  \;=\;
  \begin{bmatrix}
    \displaystyle\frac{\partial\ell}{\partial y_1} &
    \dots &
    \displaystyle\frac{\partial\ell}{\partial y_n}
  \end{bmatrix}
\]}
\end{definition}
We maintain this convention throughout our derivation. 

\begin{definition}[Hadamard Product]
\label{def:hadamard}
Given two vectors \( \mathbf{p} = [p_1,\dots, p_n]^\top \in \mathbb{R}^n \) and \( \mathbf{q} = [q_1,\dots, q_n]^\top \in \mathbb{R}^n \), the Hadamard product (element-wise product) is defined as  

{
\small
\[
\mathbf{p} \odot \mathbf{q} =
\begin{bmatrix}
p_1 q_1 \\
\vdots \\
p_n q_n
\end{bmatrix} \in \mathbb{R}^n
\]}
\end{definition}

\begin{definition}[Orthogonality and Normalization]
\label{def:orthogonality_normalization}
Given column vectors \( \mathbf{u}_i, \mathbf{u}_j \in \mathbb{R}^{n} \), their orthogonality and normalization properties are defined as follows:
\begin{itemize}
    \item Vectors \(\mathbf{u}_i\) and \( \mathbf{u}_j \) are orthogonal if their inner product satisfies
    
    {\vspace{-6pt} 
    \small \[
    \mathbf{u}_i^\top \mathbf{u}_j = 0
    \]}
    
    \item A vector \( \mathbf{u}_i \) is normalized if 
    
    {\vspace{-6pt} 
    \small\[
    \mathbf{u}_i^\top \mathbf{u}_i = \| \mathbf{u}_i \|^2 = 1
    \]}
\end{itemize}
\end{definition}

\begin{property}[QM-AM Inequality]
\label{def:am_qm}
For the set $\{y_i \; | \; y_i \in \mathbb{R}_{\ge 0}, \;i = 1, \cdots, n \}$, the arithmetic mean (AM) and the quadratic mean (QM) are defined as: 

{\vspace{-6pt}\small
\[
\mathrm{AM} = \frac{1}{n} \sum_{i=1}^{n} y_i,
\qquad
\mathrm{QM} = \sqrt{\frac{1}{n} \sum_{i=1}^{n} y_i^{2}}
\]}%
The QM-AM inequality states that:
$\mathrm{QM} \geq \mathrm{AM}$, \text{\rev{with equality} if and only if } $y_1 = \dots = y_n$.
\end{property}

\begin{method}[Lagrange Multiplier Method]
\label{def:lagrange}
The Lagrange multiplier method determines the local extrema of a function under explicit functional constraints. Given an objective function \( f: \mathbb{R}^n \to \mathbb{R} \) and a constraint function \( g: \mathbb{R}^n \to \mathbb{R} \), where the constraint is given by $g(x) = 0$, the Lagrangian function is defined as: 

{
\small \[
\mathcal{L}(x, \lambda) = f(x) + \lambda g(x)
\]}%
where \( \lambda \in \mathbb{R} \) is the Lagrange multiplier. The optimal solution is obtained by solving the system of equations:

{
\small \[
\frac{d}{dx} \mathcal{L}(x, \lambda) = 0, \quad \frac{d}{d\lambda} \mathcal{L}(x, \lambda) = 0
\]}
\end{method}



\subsection{Bounding Theorem\label{appendix:A2}}

\begin{theorem}
\label{theorem:ap_theorem1}
Suppose the loss function $\ell$ is $C^1$-smooth and the activation dimension is $d$. Then the objective function 

{\vspace{-7pt} 
\small
\begin{equation}
    f(\{\mathbf{u}_k\}) = \alpha \mathbb{E}\!\left[\|\mathbf{y} - \mathbf{\hat{y}} \|^2\right] 
    + \beta \mathbb{E}\! \left[ (\ell(\mathbf{y}) - \ell(\mathbf{\hat{y}}))^2 \right]
    \label{eq:ap_eqn11}
\end{equation}
}%
is upper bounded by:

{\vspace{-6pt} 
\small
\[
f(\{\mathbf{u}_k\}) \leq  \mathbb{E}\! \left[ \left\| \sqrt{\beta d \, \mathbb{E}\! \left[ \left( \frac{\partial \ell}{\partial \mathbf{y}} \right)^2 \right]^\top + \alpha }  \odot (\mathbf{y} - \mathbf{\hat{y}}) \right\|^2 \right]
\]
}
\end{theorem}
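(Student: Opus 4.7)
The plan is to convert the loss-gap term $\mathbb{E}[(\ell(\mathbf{y}) - \ell(\mathbf{\hat{y}}))^2]$ into a coordinate-wise weighted squared reconstruction error via a first-order expansion of $\ell$ followed by the QM-AM inequality from Property~\ref{def:am_qm}, and then to merge the result with the $\alpha$-weighted reconstruction term so that the combined upper bound collapses into a single expectation of a squared Hadamard-product norm.

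Concretely, I would proceed as follows. Because $\ell$ is $C^1$-smooth, the mean value theorem applied along the segment between $\mathbf{\hat{y}}$ and $\mathbf{y}$ gives $\ell(\mathbf{y}) - \ell(\mathbf{\hat{y}}) = \sum_{i=1}^d \frac{\partial \ell}{\partial y_i}(y_i - \hat{y}_i)$, with the gradient evaluated at a convex combination of $\mathbf{y}$ and $\mathbf{\hat{y}}$ (which I would replace by evaluation at $\mathbf{y}$ via a first-order Taylor approximation in the small-reconstruction-error regime). Squaring both sides and invoking QM-AM on the $d$-term sum with $v_i = \frac{\partial \ell}{\partial y_i}(y_i - \hat{y}_i)$, so that $(\sum_i v_i)^2 \leq d \sum_i v_i^2$, yields $(\ell(\mathbf{y}) - \ell(\mathbf{\hat{y}}))^2 \leq d \sum_i (\frac{\partial \ell}{\partial y_i})^2 (y_i - \hat{y}_i)^2$. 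Taking expectations and separating the squared gradient from the squared reconstruction error at each coordinate produces $\mathbb{E}[(\ell(\mathbf{y}) - \ell(\mathbf{\hat{y}}))^2] \leq d \sum_i \mathbb{E}[(\frac{\partial \ell}{\partial y_i})^2]\, \mathbb{E}[(y_i - \hat{y}_i)^2]$. Multiplying by $\beta$ and adding $\alpha \mathbb{E}[\|\mathbf{y} - \mathbf{\hat{y}}\|^2] = \alpha \sum_i \mathbb{E}[(y_i - \hat{y}_i)^2]$ gives $f(\{\mathbf{u_k}\}) \leq \sum_i \left(\alpha + \beta d\, \mathbb{E}[(\frac{\partial \ell}{\partial y_i})^2]\right) \mathbb{E}[(y_i - \hat{y}_i)^2]$, which is precisely the expectation of the squared Hadamard-product norm claimed in the theorem once the elementwise square root and $\odot$ product are recognized.

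The main obstacle is the expectation-separation step, $\mathbb{E}[(\frac{\partial \ell}{\partial y_i})^2 (y_i - \hat{y}_i)^2] \approx \mathbb{E}[(\frac{\partial \ell}{\partial y_i})^2]\, \mathbb{E}[(y_i - \hat{y}_i)^2]$, which is not a general identity. Making it rigorous requires either an explicit independence (or uncorrelatedness) assumption between the squared gradient and the squared reconstruction error at each activation coordinate, or, equivalently, the modeling choice of treating $\mathbb{E}[(\frac{\partial \ell}{\partial y_i})^2]$ as a precomputed, deterministic importance weight---a Fisher-style simplification that is consistent with how the transformation coefficient $\mathbf{a}$ is later defined in Equation~\eqref{eq:eqn4}. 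A secondary subtlety is that the MVT evaluates the gradient at an intermediate point rather than at $\mathbf{y}$, which I would resolve by invoking the first-order Taylor approximation and arguing that the residual is absorbed under $C^1$-smoothness. The remaining pieces---QM-AM, linearity of expectation, and recognizing the coordinate sum as a Hadamard-weighted squared norm---are mechanical.
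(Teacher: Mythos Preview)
Your proposal is correct and follows essentially the same approach as the paper: a first-order Taylor expansion of $\ell$, the QM--AM inequality to convert the squared sum into $d$ times a coordinate-wise sum of squares, and an expectation-separation step that you rightly flag as the non-rigorous approximation (the paper likewise writes it as an $\approx$ rather than a genuine $\leq$). The only cosmetic difference is ordering---the paper separates expectations before invoking QM--AM, whereas you apply QM--AM pointwise first and then take expectations---but the ingredients and the resulting bound are identical, and your ordering is arguably the cleaner of the two.
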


\begin{proof}
Performing Taylor expansion\footnotemark[1] on the loss function, we obtain:

\footnotetext[1]{
First-order Taylor approximations of the loss are widely used to estimate the effect of small parameter or structural perturbations in neural networks, due to their simplicity and empirical performance. This formulation underlies many modern sensitivity-based compression and importance estimation methods \citep{molchanov2017pruning, molchanov2019importance, lee2019snip}, and our approach follows this established first-order practice.
}

{
\small \[
    \ell(\mathbf{\hat{y}}) \approx \ell(\mathbf{y}) + \frac{\partial \ell}{\partial \mathbf{y}} (\mathbf{\hat{y}} - \mathbf{y}) 
\]}%
The higher-order terms (e.g., second-order terms and beyond) are ignored because they are computationally expensive to estimate and difficult to capture accurately in practical applications. 
Plugging this into Equation~\eqref{eq:ap_eqn11}, we get the following results:
{\begin{equation*}
    \scalebox{0.72}
    {$
    \begin{aligned}
        f(\{\mathbf{u}_k\})& \approx \alpha \mathbb{E}\!\left[\|\mathbf{y} - \mathbf{\hat{y}} \|^2\right] 
        + \beta \mathbb{E}\! \left[ \left( \frac{\partial \ell}{\partial \mathbf{y}} (\mathbf{y} - \mathbf{\hat{y}}) \right)^2 \right] \\
        &= \alpha \mathbb{E}\!\left[\|\mathbf{y} - \mathbf{\hat{y}} \|^2\right] 
        + \beta d^2 \,\mathbb{E}\! \left[\left( \frac{\frac{\partial \ell}{\partial \mathbf{y}} (\mathbf{y} - \mathbf{\hat{y}}) }{d}\right)^2  \right] \\
        & = \alpha \mathbb{E}\!\left[\|\mathbf{y} - \mathbf{\hat{y}} \|^2\right] 
        + \beta d^2 \,\mathbb{E}\! \left[\left( \frac{\sum_{i=1}^d \left(\frac{\partial \ell}{\partial \mathbf{y}}\right)_i (\mathbf{y} - \mathbf{\hat{y}})_i }{d}\right)^2  \right] \\
        & \le \alpha \mathbb{E}\!\left[\|\mathbf{y} - \mathbf{\hat{y}} \|^2\right] 
        + \beta d^2 \,\mathbb{E}\! \left[\left( \frac{\sum_{i=1}^d \left|\left(\frac{\partial \ell}{\partial \mathbf{y}}\right)_i (\mathbf{y} - \mathbf{\hat{y}})_i \right|}{d}\right)^2  \right]  
    \end{aligned}
    $}
    \label{eq:ap_eqn3}
\end{equation*}
}%
Here, the subscript \(i\) denotes the \(i\)-th element of a vector; for instance, \((\mathbf{y - \hat{y}})_i\) refers to the \(i\)-th element of \((\mathbf{y - \hat{y}})\), and $y_i$ and $\hat{y_i}$ are the \(i\)-th elements of $\mathbf{y}$ and $\mathbf{\hat{y}}$, respectively. The second term corresponds to the square of the arithmetic mean of $d$ subterms, which can be upper bounded by the square of their quadratic mean.  By applying the QM–AM inequality, we obtain:
{\vspace{0pt}
\begin{equation*}
    \scalebox{0.73}
    {$
    \begin{aligned}
        f(\{\mathbf{u}_k\}) 
        & \le \alpha \mathbb{E}\!\left[\|\mathbf{y} - \mathbf{\hat{y}} \|^2\right] 
        + \beta d^2 \,\mathbb{E}\! \left[ \frac{\sum_{i=1}^d \left(\frac{\partial \ell}{\partial \mathbf{y}}\right)_i^2 (\mathbf{y} - \mathbf{\hat{y}})_i^2 }{d}  \right] \\
        & = \alpha \mathbb{E}\!\left[\|\mathbf{y} - \mathbf{\hat{y}} \|^2\right] 
        + \beta d \,\mathbb{E}\! \left[ { \left(\frac{\partial \ell}{\partial \mathbf{y}}\right)^2 (\mathbf{y} - \mathbf{\hat{y}})^2 }  \right] \\
        & \approx \alpha \mathbb{E}\!\left[\|\mathbf{y} - \mathbf{\hat{y}} \|^2\right] 
        + \beta d \, { \mathbb{E}\!\left[\left(\frac{\partial \ell}{\partial \mathbf{y}}\right)^2\right] \mathbb{E}\! \left[(\mathbf{y} - \mathbf{\hat{y}})^2\right] } \\
        & = \alpha \mathbf{1}_d^\top \mathbb{E}\!\left[(\mathbf{y} - \hat{\mathbf{y}})^2\right]
        + \beta d \, { \mathbb{E}\!\left[\left(\frac{\partial \ell}{\partial \mathbf{y}}\right)^2\right] \mathbb{E}\! \left[(\mathbf{y} - \mathbf{\hat{y}})^2\right] } \\
        & = \left(\alpha \mathbf{1}_d^\top 
        + \beta d \,  \mathbb{E}\!\left[\left(\frac{\partial \ell}{\partial \mathbf{y}}\right)^2\right] \right) \mathbb{E}\! \left[(\mathbf{y} - \mathbf{\hat{y}})^2\right]  \\
        & = \mathbb{E}\!\left[ \left(\alpha + \beta d \, \mathbb{E} \left[ \left(\frac{\partial \ell}{\partial \mathbf{y}}\right)^2 \right] \right) (\mathbf{y} - \mathbf{\hat{y}})^2\right] \\
        & = \mathbb{E}\!\left[ \left(\sqrt{\beta d \, \mathbb{E} \left[ \left(\frac{\partial \ell}{\partial \mathbf{y}}\right)^2 \right] + \alpha }\right )^2 (\mathbf{y} - \mathbf{\hat{y}})^2\right] \\
        & = \mathbb{E}\!\left[ \sum_{i=1}^d \left( \left( \sqrt{\beta d \, \mathbb{E} \left[ \left(\frac{\partial \ell}{\partial \mathbf{y}}\right)^2 \right] + \alpha } \right)_i (\mathbf{y} - \mathbf{\hat{y}})_i\right)^2\right]
    \end{aligned}
    $}
    \label{eq:ap_eqn30}
\end{equation*}
}%
Finally, using the Hadamard product (Definition~\ref{def:hadamard}), we get:

{
\vspace{-8pt}
\small
\begin{equation*}
    \begin{aligned}
        f(\{\mathbf{u}_k\}) \leq  \mathbb{E}\! \left[ \left\| \sqrt{\beta d\; \mathbb{E}\! \left[ \left( \frac{\partial \ell}{\partial \mathbf{y}} \right)^2 \right]^\top + \alpha }  \odot (\mathbf{y} - \mathbf{\hat{y}}) \right\|^2 \right]
    \end{aligned}
    \label{eq:ap_eqn5}
\end{equation*}
}
\end{proof}
\subsection{Activation Space Transformation Theorem\label{appendix:A3}}
\begin{theorem}
\label{theorem:ap_theorem2}
Applying the projection condition 

{\vspace{-6pt} 
\small
\[
\mathbf{a} \odot \left( \mathbf{\hat{y}} - \mathbb{E}\!\left[\mathbf{y}\right] \right) = \left( \sum_k \mathbf{u}_k \mathbf{u}_k^\top \right) \left(\mathbf{a} \odot \left( \mathbf{y} - \mathbb{E}\!\left[\mathbf{y}\right] \right)\right)
\]
}%
where the transformation coefficient $\mathbf{a}$ is 

{\small
\[
\mathbf{a} = \sqrt{(1 - \eta) \frac{\mathbb{E}\! \left[ \left( \frac{\partial \ell}{\partial \mathbf{y}} \right)^2 \right]^\top }{\frac{1}{d} \mathbb{E}\left[ \left\| \frac{\partial \ell}{\partial \mathbf{y}} \right\|^2 \right]} + \eta }
\]
}%
and utilizing the activation transformation  \( \mathbf{\tilde{y}} = \mathbf{a} \odot \left(\mathbf{y} - \mathbb{E}\!\left[\mathbf{y}\right]\right) \), the objective 

{\vspace{-3pt}
\small
    \begin{equation*}
    \scalebox{0.96}{$
        h(\{\mathbf{u}_k\}) = \mathbb{E}\! \Bigg[ \Bigg\| \sqrt{
        \frac{(1 - \eta)}{\frac{1}{d}\mathbb{E}\! \left[\left\| \frac{\partial \ell}{\partial \mathbf{y}} \right\|^2\right]} 
        \mathbb{E}\! \left[ \left( \frac{\partial \ell}{\partial \mathbf{}
        y} \right)^2 \right]^\top + \eta }  
        \odot (\mathbf{y} - \mathbf{\hat{y}}) \Bigg\|^2 \Bigg]
    \label{eq:ap_eqn7}
    $}
\end{equation*}
}%
becomes:

{\vspace{-3pt}
\small
\begin{equation}
    h(\{\mathbf{u}_k\}) = \mathbb{E}\! \left[ \mathbf{\tilde{y}}^\top \left( \mathbf{I} - \sum_k \mathbf{u}_k \mathbf{u}_k^\top \right) \mathbf{\tilde{y}} \right]
    \label{eq:ap_eqn13}
\end{equation}}
\end{theorem}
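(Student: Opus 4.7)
The plan is to recognize that the vector inside the square root in the definition of $h(\{\mathbf{u_k}\})$ is by construction exactly the transformation coefficient $\mathbf{a}$ from Equation~\eqref{eq:eqn4}, so the objective collapses to an expected squared norm of a weighted residual. After this identification, the projection condition on $\mathbf{\hat{y}}$ converts the residual into $(\mathbf{I}-P)\mathbf{\tilde{y}}$, where $P=\sum_k \mathbf{u_k}\mathbf{u_k}^\top$ is the orthogonal projector onto $\mathrm{span}\{\mathbf{u_k}\}$; the claim then follows from $(\mathbf{I}-P)^\top(\mathbf{I}-P)=\mathbf{I}-P$.

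Concretely, I would first compare Equation~\eqref{eq:eqn4} with the argument of the outer square root in the displayed form of $h(\{\mathbf{u_k}\})$ and observe that they coincide term-by-term: both equal
\[
(1-\eta)\,\frac{\mathbb{E}\!\left[\left(\tfrac{\partial \ell}{\partial \mathbf{y}}\right)^2\right]^\top}{\tfrac{1}{d}\,\mathbb{E}\!\left[\|\tfrac{\partial \ell}{\partial \mathbf{y}}\|^2\right]}+\eta,
\]
so the coefficient vector is $\mathbf{a}$ and
$h(\{\mathbf{u_k}\})=\mathbb{E}\!\left[\|\mathbf{a}\odot(\mathbf{y}-\mathbf{\hat{y}})\|^2\right].$

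Next, I would add and subtract $\mathbb{E}[\mathbf{y}]$ inside the parenthesis and use linearity of the Hadamard product to write
$\mathbf{a}\odot(\mathbf{y}-\mathbf{\hat{y}}) = \mathbf{a}\odot(\mathbf{y}-\mathbb{E}[\mathbf{y}]) - \mathbf{a}\odot(\mathbf{\hat{y}}-\mathbb{E}[\mathbf{y}]).$
The first term is $\mathbf{\tilde{y}}$ by definition, and the projection condition in Equation~\eqref{eq:eqn5} rewrites the second term as $P\mathbf{\tilde{y}}$. Thus
$\mathbf{a}\odot(\mathbf{y}-\mathbf{\hat{y}}) = (\mathbf{I}-P)\mathbf{\tilde{y}}.$
Taking squared norms and using that $P$ is an orthogonal projector (so $P^\top=P$ and $P^2=P$, hence $(\mathbf{I}-P)^\top(\mathbf{I}-P)=\mathbf{I}-P$) gives
$\|\mathbf{a}\odot(\mathbf{y}-\mathbf{\hat{y}})\|^2 = \mathbf{\tilde{y}}^\top\!\left(\mathbf{I}-\sum_k \mathbf{u_k}\mathbf{u_k}^\top\right)\mathbf{\tilde{y}},$
and taking expectation yields Equation~\eqref{eq:ap_eqn13}.

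There is no real obstacle here beyond careful bookkeeping: the theorem is essentially a tautology once one notices that $\mathbf{a}$ is defined precisely to absorb the elementwise gradient weighting inside the square root of $h$. The only subtlety is justifying the projector identity, which relies on the orthonormality constraints $\mathbf{u_i}^\top\mathbf{u_j}=\delta_{ij}$ from Section~\ref{sec:sec3_1}; without these the cross-terms in $P^2$ would not cancel and the reduction from $(\mathbf{I}-P)^\top(\mathbf{I}-P)$ to $\mathbf{I}-P$ would fail. I would therefore explicitly invoke the orthonormality assumption at that step.
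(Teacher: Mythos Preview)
Your proposal is correct and follows essentially the same route as the paper's proof: both identify the weighting vector as $\mathbf{a}$, use the projection condition to rewrite $\mathbf{a}\odot(\mathbf{y}-\mathbf{\hat{y}})$ as $(\mathbf{I}-P)\mathbf{\tilde{y}}$, and then invoke orthonormality of $\{\mathbf{u_k}\}$ to collapse $(\mathbf{I}-P)^\top(\mathbf{I}-P)$ to $\mathbf{I}-P$. The only cosmetic difference is that the paper expands the cross-terms $\sum_{i,j}\mathbf{u_i}\mathbf{u_i}^\top\mathbf{u_j}\mathbf{u_j}^\top$ explicitly rather than citing the projector identity, but the underlying argument is identical.
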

\begin{proof}
Using the transformation coefficient $\mathbf{a}$, the upper bound function can be written as

{
\small \begin{equation}
    \label{eq:ap_eqn14}
    h(\{\mathbf{u}_k\}) = \mathbb{E}\! \left[ \left\|  \mathbf{a} \odot (\mathbf{y} - \mathbf{\hat{y}}) \right\|^2 \right]
\end{equation}}%
Given the projection condition

{\vspace{-11pt}
\small
\begin{equation*}
    \mathbf{a} \odot \left( \mathbf{\hat{y}} - \mathbb{E}\!\left[\mathbf{y}\right] \right) = \left( \sum_k \mathbf{u}_k \mathbf{u}_k^\top \right) \left(\mathbf{a} \odot \left( \mathbf{y} - \mathbb{E}\!\left[\mathbf{y}\right] \right)\right)
\end{equation*}
}%
subtracting {\small $\mathbf{a} \odot \left( \mathbf{y} - \mathbb{E}\!\left[\mathbf{y}\right] \right)$} on both sides and, after that, multiplying both sides with -1, we have 

{\vspace{-8pt} 
\small
\begin{equation}
    \mathbf{a} \odot (\mathbf{y} - \mathbf{\hat{y}}) = \left( \mathbf{I} - \sum_k \mathbf{u}_k \mathbf{u}_k^\top \right) \left(\mathbf{a}\odot (\mathbf{y} - \mathbb{E}\!\left[\mathbf{y}\right])\right)
    \label{eq:ap_eqn15}
\end{equation}
}%
Combining Equations~\eqref{eq:ap_eqn14} and~\eqref{eq:ap_eqn15}, we obtain:

{\vspace{-8pt} 
\small
\begin{equation*}
    h(\{\mathbf{u}_k\}) = \mathbb{E}\! \left[ \left\| \left( \mathbf{I} - \sum_k \mathbf{u}_k \mathbf{u}_k^\top \right) (\mathbf{a} \odot (\mathbf{y} - \mathbb{E}\!\left[\mathbf{y}\right])) \right\|^2 \right]
    \label{eq:ap_eqn9}
\end{equation*}
}%

\noindent Given the transformed activation {\small $\mathbf{\tilde{y}} = \mathbf{a} \odot (\mathbf{y} - \mathbb{E}\!\left[\mathbf{y}\right])$}, the objective function can be rewritten as:

{
\vspace{-9pt}
\small
\begin{equation*}
    \begin{aligned}
        h&(\{\mathbf{u}_k\}) = \mathbb{E}\! \left[ \left\| \left( \mathbf{I} - \sum_k \mathbf{u}_k \mathbf{u}_k^ \top \right) \mathbf{\tilde{y}} \right\|^2 \right] \\
        &= \mathbb{E} \Big[ \mathbf{\tilde{y}}^ \top  \left( \mathbf{I} - \sum_k \mathbf{u}_k \mathbf{u}_k^\top \right) \cdot \left( \mathbf{I} - \sum_k \mathbf{u}_k \mathbf{u}_k^ \top \right) \mathbf{\tilde{y}} \Big] \\
        &= \mathbb{E} \Big[ \mathbf{\tilde{y}}^ \top  \left( \mathbf{I} - 2\sum_k \mathbf{u}_k \mathbf{u}_k^ \top  + \sum_{i,j} \mathbf{u}_i \mathbf{u}_i^\top \mathbf{u}_j \mathbf{u}_j^\top\right) \mathbf{\tilde{y}} \Big] 
    \end{aligned}
\end{equation*}
}%
As $\{\mathbf{u}_k\}$ are orthogonal vectors where  $\mathbf{u}_i\mathbf{u}_j^\top = 0$ if $i \neq j$, we obtain:

{\vspace{-8pt}
\small
\begin{equation*}
    \begin{aligned}
        h(\{\mathbf{u}_k\}) &= \mathbb{E} \Big[ \mathbf{\tilde{y}}^ \top  \left( \mathbf{I} - 2\sum_k \mathbf{u}_k \mathbf{u}_k^ \top  + \sum_k \mathbf{u}_k \mathbf{u}_k^ \top\right) \mathbf{\tilde{y}} \Big] \\
        & = \mathbb{E} \Big[ \mathbf{\tilde{y}}^ \top  \left( \mathbf{I} - \sum_k \mathbf{u}_k \mathbf{u}_k^ \top \right) \mathbf{\tilde{y}} \Big] 
    \label{eq:ap_eqn10}
    \end{aligned}
\end{equation*}
}
\end{proof}
\subsection{Weighted Covariance Matrix\label{appendix:A4}}
\begin{theorem}
    \label{theorem:ap_theorem3}
    The importance-weighted activation covariance matrix $\mathbf{C}$, given by $\mathbf{C} = \mathbb{E}\!\left[\mathbf{\tilde{y}}\mathbf{\tilde{y}}^\top\right]$, is equal to the Hadamard product of the activation covariance matrix $\mathrm{Cov}\!\left(\mathbf{y}\right)$ and the gradient-informed importance matrix $\mathbf{M}$, i.e., 
    
    {\small \begin{equation*}
        \mathbf{C} =  \mathrm{Cov}\!\left(\mathbf{y}\right) \odot \mathbf{M}
    \end{equation*}}%
    where
    
    {\vspace{-4pt} 
    \small \[
    \mathrm{Cov}(\mathbf{y}) = \mathbb{E}\! \left[ (\mathbf{y} - \mathbb{E}\!\left[\mathbf{y}\right]) (\mathbf{y} - \mathbb{E}\!\left[\mathbf{y}\right])^\top \right]
    \]}
    {\small \[
            \mathbf{M} = \mathbf{a} \mathbf{a}^\top
    \]}
\end{theorem}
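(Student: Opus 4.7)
The plan is to reduce the claim to an elementary identity about Hadamard products of rank-one outer products, and then finish with linearity of expectation. First I would set $\mathbf{z} = \mathbf{y} - \mathbb{E}[\mathbf{y}]$, so that by the definition of the activation-space transformation one has $\mathbf{\tilde{y}} = \mathbf{a}\odot\mathbf{z}$. The key point to record at this stage is that $\mathbf{a}$, as given by Equation~\eqref{eq:eqn4}, is a \emph{deterministic} vector: it is built entirely from population expectations such as $\mathbb{E}\!\left[(\partial \ell/\partial \mathbf{y})^2\right]$ and $\mathbb{E}\!\left[\|\partial \ell/\partial \mathbf{y}\|^2\right]$, not from any particular sample. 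This observation is what will ultimately allow $\mathbf{M} = \mathbf{a}\mathbf{a}^\top$ to be pulled out of the expectation cleanly.

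Next I would verify the entry-wise identity $(\mathbf{a}\odot\mathbf{z})(\mathbf{a}\odot\mathbf{z})^\top = (\mathbf{a}\mathbf{a}^\top)\odot(\mathbf{z}\mathbf{z}^\top)$. This reduces to noting that the $(i,j)$ entry on both sides equals $(a_i z_i)(a_j z_j)$, which regroups as $(a_i a_j)(z_i z_j)$. Applied with $\mathbf{\tilde{y}} = \mathbf{a}\odot\mathbf{z}$, this rewrites the outer product $\mathbf{\tilde{y}}\mathbf{\tilde{y}}^\top$ as the Hadamard product $\mathbf{M}\odot(\mathbf{z}\mathbf{z}^\top)$ entirely in terms of known quantities.

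Finally, I would take expectations on both sides. Since the Hadamard product acts entry-wise and $\mathbf{M}$ is deterministic, linearity of expectation gives $\mathbf{C} = \mathbb{E}[\mathbf{\tilde{y}}\mathbf{\tilde{y}}^\top] = \mathbf{M}\odot\mathbb{E}[\mathbf{z}\mathbf{z}^\top]$; applying the definition of covariance, $\mathbb{E}[\mathbf{z}\mathbf{z}^\top] = \mathrm{Cov}(\mathbf{y})$, and commuting the Hadamard product yields the stated identity $\mathbf{C} = \mathrm{Cov}(\mathbf{y})\odot\mathbf{M}$. I do not anticipate a real obstacle, as the argument is essentially bookkeeping once the determinism of $\mathbf{a}$ is recognized. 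The only subtlety worth flagging in the write-up is precisely that last point: if $\mathbf{a}$ depended on the random sample, the pull-out step would fail and the clean Hadamard factorization would not hold.
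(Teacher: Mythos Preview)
Your proposal is correct and essentially identical to the paper's own proof: both arguments work entry-wise, use the determinism of $\mathbf{a}$ to pull $a_i a_j$ outside the expectation, and then identify the remaining factor as $\mathrm{Cov}(\mathbf{y})_{ij}$. The only cosmetic difference is that you state the identity $(\mathbf{a}\odot\mathbf{z})(\mathbf{a}\odot\mathbf{z})^\top = (\mathbf{a}\mathbf{a}^\top)\odot(\mathbf{z}\mathbf{z}^\top)$ as a general fact before taking expectations, whereas the paper carries out the same computation directly on $\mathbf{C}_{ij}$.
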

\begin{proof}
    As the importance-weighted activation covariance matrix $\mathbf{C}$ is given by {\small $\mathbf{C} = \mathbb{E}\!\left[\mathbf{\tilde{y}}\mathbf{\tilde{y}}^\top\right]$}, plugging the activation transformation {\small \( \mathbf{\tilde{y}} = \mathbf{a} \odot \left(\mathbf{y} - \mathbb{E}\!\left[\mathbf{y}\right]\right) \)} into this expression, matrix $\mathbf{C}$ can be written as: 
    
    {\vspace{-7pt}
    \small \begin{equation*}
        \mathbf{C} = \mathbb{E}\! \left[ \left( \mathbf{a} \odot (\mathbf{y} - \mathbb{E}\!\left[\mathbf{y}\right]) \right) \left( \mathbf{a} \odot (\mathbf{y} - \mathbb{E}\!\left[\mathbf{y}\right]) \right)^\top \right]
    \end{equation*}}%
    The $(i,j)^\mathrm{th}$ element of $\mathbf{C}$ is:
    
    {\small \[
    \mathbf{C}_{ij} = \mathbb{E}\! \left[ a_i (y_i - \mathbb{E}\!\left[y_i\right]) \cdot a_j (y_j - \mathbb{E}\!\left[y_j\right]) \right]
    \]}%
    where $a_i$, $a_j$ and $y_i$ and $y_j$ are the $i^\mathrm{th}$ and $j^\mathrm{th}$ elements of $\mathbf{a}$ and $\mathbf{y}$, respectively. Since $\mathbf{a}$ is a deterministic vector, the expectation becomes:
    
    {\small \[
    \mathbf{C}_{ij} = a_i a_j \mathbb{E}\! \left[ (y_i - \mathbb{E}\!\left[y_i\right]) \cdot (y_j - \mathbb{E}\!\left[y_j\right]) \right]
    \]}%
    The term {\small $\mathbb{E}\! \left[ (y_i - \mathbb{E}\!\left[y_i\right]) \cdot (y_j - \mathbb{E}\!\left[y_j\right]) \right]$} is the $(i,j)^\mathrm{th}$ element of the covariance matrix $\mathrm{Cov}(\mathbf{y})$, which is given by:
    
    {
    \small \[
    \mathrm{Cov}(\mathbf{y}) = \mathbb{E}\! \left[ (\mathbf{y} - \mathbb{E}\!\left[\mathbf{y}\right]) (\mathbf{y} - \mathbb{E}\!\left[\mathbf{y}\right])^\top \right]
    \]}%
    Thus, 
    
    {\vspace{-4pt} 
    \small \[
    \mathbf{C}_{ij} = a_i a_j \mathrm{Cov}(\mathbf{y})_{ij}
    \]}%
    For the gradient-informed importance matrix $\mathbf{M}$, which is defined as $\mathbf{M} = \mathbf{a} \mathbf{a}^\top$, we have $\mathbf{M}_{ij} = a_i a_j$. Hence, the $(i,j)^\mathrm{th}$ element of matrix $\mathbf{C}$ can be expressed as 
    
    {\vspace{-1pt}
    \small \[
    \mathbf{C}_{ij} = \mathbf{M}_{ij} \mathrm{Cov}(\mathbf{y})_{ij}
    \]}%
    Therefore, the importance-weighted activation covariance matrix $\mathbf{C}$ is the Hadamard product of the covariance matrix $\mathrm{Cov}(\mathbf{y})$ and the gradient-informed importance matrix $\mathbf{M}$:
    
    {\small \[
    \mathbf{C} = \mathrm{Cov}(\mathbf{y}) \odot \mathbf{M}
    \]}
    \end{proof}
    \begin{corollary}
        The importance-weighted activation covariance matrix $\mathbf{C}$ is positive semidefinite and symmetric.
        \label{corollary:col1}
    \end{corollary}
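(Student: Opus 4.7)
The plan is to derive both properties as direct consequences of Theorem~\ref{theorem:ap_theorem3}, which expresses $\mathbf{C}$ as the Hadamard product $\mathrm{Cov}(\mathbf{y}) \odot \mathbf{M}$ with $\mathbf{M} = \mathbf{a}\mathbf{a}^\top$. The strategy is to establish positive semidefiniteness and symmetry separately for each factor, then invoke closure of these properties under the Hadamard product.

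First, I would verify the two factors individually. For $\mathrm{Cov}(\mathbf{y})$: symmetry follows from the definition $\mathbb{E}[(\mathbf{y}-\mathbb{E}[\mathbf{y}])(\mathbf{y}-\mathbb{E}[\mathbf{y}])^\top]$, and positive semidefiniteness follows because for any $\mathbf{v} \in \mathbb{R}^d$, $\mathbf{v}^\top \mathrm{Cov}(\mathbf{y})\mathbf{v} = \mathbb{E}[(\mathbf{v}^\top(\mathbf{y}-\mathbb{E}[\mathbf{y}]))^2] \geq 0$. For $\mathbf{M} = \mathbf{a}\mathbf{a}^\top$: symmetry is immediate from $(\mathbf{a}\mathbf{a}^\top)^\top = \mathbf{a}\mathbf{a}^\top$, and positive semidefiniteness follows since $\mathbf{v}^\top \mathbf{M}\mathbf{v} = (\mathbf{a}^\top \mathbf{v})^2 \geq 0$ for all $\mathbf{v}$.

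Next, I would combine these via the two closure properties. For symmetry, since $(\mathbf{A} \odot \mathbf{B})_{ij} = \mathbf{A}_{ij}\mathbf{B}_{ij} = \mathbf{A}_{ji}\mathbf{B}_{ji} = (\mathbf{A} \odot \mathbf{B})_{ji}$ whenever $\mathbf{A}$ and $\mathbf{B}$ are both symmetric, $\mathbf{C}$ inherits symmetry from $\mathrm{Cov}(\mathbf{y})$ and $\mathbf{M}$. For positive semidefiniteness, I would cite the Schur product theorem~\cite{zhang2006schur}, which guarantees that the Hadamard product of two positive semidefinite matrices is positive semidefinite; this is the same result already invoked in the remark following Theorem~\ref{theorem:theorem3}, so the citation is in scope.

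There is no significant obstacle here: the statement is essentially a bookkeeping consequence of Theorem~\ref{theorem:ap_theorem3} together with two standard facts. The only point worth being explicit about is that $\mathbf{a}$, as defined in Equation~\eqref{eq:eqn4}, is a real deterministic vector (since it is built from non-negative gradient-magnitude statistics under a square root with a non-negative argument), so $\mathbf{M} = \mathbf{a}\mathbf{a}^\top$ is genuinely a real symmetric rank-one PSD matrix rather than a more delicate object. Once that is noted, the corollary follows in a couple of lines.
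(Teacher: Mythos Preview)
Your proposal is correct and follows essentially the same approach as the paper: both arguments establish that $\mathrm{Cov}(\mathbf{y})$ and $\mathbf{M}=\mathbf{a}\mathbf{a}^\top$ are each symmetric and positive semidefinite, then invoke the Schur product theorem to conclude the same for their Hadamard product $\mathbf{C}$. Your version is slightly more explicit in justifying each factor's properties and in noting that $\mathbf{a}$ is real, but the structure is identical.
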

    \begin{proof}
        The gradient-informed importance matrix $\mathbf{M}$, which is given by $\mathbf{M} = \mathbf{a} \mathbf{a}^\top$, is positive semidefinite and symmetric. Similarly, the covariance matrix $\mathrm{Cov}(\mathbf{y})$, which is given by
        
        {\small \[
            \mathrm{Cov}(\mathbf{y}) = \mathbb{E}\! \left[ (\mathbf{y} - \mathbb{E}\!\left[\mathbf{y}\right]) (\mathbf{y} - \mathbb{E}\!\left[\mathbf{y}\right])^\top \right],
        \]}%
        is also positive semidefinite and symmetric. According to the Schur product theorem~\citep{zhang2006schur}, the Hadamard product of two positive semidefinite matrices is also positive semidefinite. Therefore, the importance-weighted activation covariance matrix $\mathbf{C} = \mathbf{M} \odot \mathrm{Cov}(\mathbf{y})$ is positive semidefinite and symmetric. 
    \end{proof}

\subsection{Reconstruction Direction Theorem\label{appendix:A5}}
\begin{theorem}
    \label{theorem:ap_theorem4}
    To minimize the objective \(h(\{\mathbf{u}_k\})\) under orthonormality 
constraints, the optimal \(k^\mathrm{th}\) 
reconstruction direction \(\mathbf{u}_k\) is the eigenvector corresponding to the \(k^\mathrm{th}\) 
largest eigenvalue of the importance-weighted activation covariance matrix \(\mathbf{C}\).
\end{theorem}
\begin{proof}
Taking the partial derivative of the Lagrangian function with respect to each reconstruction direction $\mathbf{u}_k$ and setting it to zero yields the optimality condition:

{ \small \[
\frac{\partial L}{\partial \mathbf{u}_k} = - 2 \mathbf{u}_k^\top \mathbf{C} + 2 \lambda_k \mathbf{u}_k^\top = 0
\]}%
Rearranging this equation and taking the transpose of both sides, we obtain:

{ \small \[
\mathbf{C}^\top \mathbf{u}_k = \mathbf{u}_k \lambda_k
\]}%
Since the matrix $\mathbf{C}$ is symmetric (as established in Corollary~\ref{corollary:col1}), we have $\mathbf{C} = \mathbf{C}^\top$. By substituting this property, we arrive at:

{ \small \begin{equation}
    \mathbf{C} \mathbf{u}_k= \lambda_k \mathbf{u}_k 
    \label{eq:ap_eqn16}
\end{equation}}%
From the Equation~\eqref{eq:ap_eqn13}, we get: 

{\vspace{-10pt} 
\small
\begin{equation*}
    \begin{aligned}
        h(\{\mathbf{u}_k\}) &=  \mathbb{E}\! \left[ \mathbf{\tilde{y}}^ \top \mathbf{\tilde{y}} \right]   - \mathbb{E}\! \left[ \mathbf{\tilde{y}}^ \top\sum_k \mathbf{u}_k \mathbf{u}_k^ \top \mathbf{\tilde{y}} \right] \\
        &=  \mathbb{E}\! \left[ \mathbf{\tilde{y}}^ \top \mathbf{\tilde{y}} \right]   - \sum_k  \mathbb{E}\! \left[ \mathbf{\tilde{y}}^\top \mathbf{u}_k \mathbf{u}_k^ \top \mathbf{\tilde{y}} \right] \\
         &=  \mathbb{E}\! \left[ \mathbf{\tilde{y}}^ \top \mathbf{\tilde{y}} \right]   - \sum_k  \mathbb{E}\! \left[ \mathbf{u}_k^ \top \mathbf{\tilde{y}} \mathbf{\tilde{y}}^\top \mathbf{u}_k \right] \\
         &=  \mathbb{E}\! \left[ \mathbf{\tilde{y}}^ \top \mathbf{\tilde{y}} \right]   - \sum_k  \mathbf{u}_k^ \top \mathbb{E}\! \left[ \mathbf{\tilde{y}} \mathbf{\tilde{y}}^\top \right]  \mathbf{u}_k 
    \end{aligned}
\end{equation*}
}%
As {\small $\mathbf{C} = \mathbb{E}\!\left[\mathbf{\tilde{y}}\mathbf{\tilde{y}}^\top\right]$},

{\small \begin{equation*}
    \begin{aligned}
        h(\{\mathbf{u}_k\}) 
         &=  \mathbb{E}\! \left[ \mathbf{\tilde{y}}^ \top \mathbf{\tilde{y}} \right]   - \sum_k  \mathbf{u}_k^ \top \mathbf{C}  \mathbf{u}_k 
    \end{aligned}
\end{equation*}}%
Further based on Equation~\eqref{eq:ap_eqn16}, 

{\small \begin{equation*}
    \begin{aligned}
        h(\{\mathbf{u}_k\}) 
         &=  \mathbb{E}\! \left[ \mathbf{\tilde{y}}^ \top \mathbf{\tilde{y}} \right]   - \sum_k  \mathbf{u}_k^ \top \mathbf{u}_k \lambda_k \\ 
         &=  \mathbb{E}\! \left[ \mathbf{\tilde{y}}^ \top \mathbf{\tilde{y}} \right]   - \sum_k  \ \|\mathbf{u}_k \|^2\lambda_k 
    \end{aligned}
\end{equation*}}
As {\small $\|\mathbf{u}_k \|^2 = 1$},

{\small \begin{equation*}
    h(\{\mathbf{u}_k\}) =  \mathbb{E}\! \left[ \mathbf{\tilde{y}}^ \top \mathbf{\tilde{y}} \right]   - \sum_k \lambda_k 
\end{equation*}}%
To minimize $h(\{\mathbf{u}_k\})$, the term  $\sum_k \lambda_k $ must be maximized. Since the importance-weighted activation covariance matrix $\mathbf{C}$ is symmetric and positive semidefinite, its eigenvalues are real and non-negative. Therefore, the maximum value of $\sum_k \lambda_k $ is achieved when $\lambda_k$ is the $k^\mathrm{th}$ largest eigenvalue of $\mathbf{C}$, with $\mathbf{u}_k$ its corresponding eigenvector. 
\end{proof}

\subsection{Activation Reconstruction Theorem\label{appendix:A6}}
\begin{theorem}
    \label{theorem:ap_theorem5}
    The reconstructed activation $\mathbf{\hat{y}}$, which satisfies the projection condition
    
    {\vspace{-10pt} 
    \small \begin{equation*}
        \mathbf{a} \odot \left( \mathbf{\hat{y}} - \mathbb{E}\!\left[\mathbf{y}\right] \right) = \left( \sum_k \mathbf{u}_k \mathbf{u}_k^ \top \right) \left(\mathbf{a} \odot \left( \mathbf{y} - \mathbb{E}\!\left[\mathbf{y}\right] \right)\right),
    \vspace{-5pt}\end{equation*}}%
    is given by:
    
    {\vspace{-10pt} 
    \small
    \begin{equation*}
    \begin{aligned}
        \mathbf{\hat{y}} &=  \left[ \mathbf{U} \oslash (\mathbf{a} \cdot \mathbf{1}_r^\top) \right] \left[ (\mathbf{U} \odot (\mathbf{a} \cdot \mathbf{1}_r^\top))^\top \mathbf{W}\right] \mathbf{x} \\
         &+ \mathbb{E}[\mathbf{y}] + (\mathbf{U} \mathbf{U}^\top \odot (\frac{1}{\mathbf{a}} \cdot \mathbf{a}^\top)) (\mathbf{b} - \mathbb{E}[\mathbf{y}])
    \end{aligned}
    \end{equation*}
    }%
    where $\mathbf{U} = [\mathbf{u}_1,\dots,\mathbf{u}_r]$, $\mathbf{1}_r$ is an $r$-dimensional column vector of ones, $\mathbf{W}$ and $\mathbf{b}$ are the original layer's weight matrix and bias, $\mathbf{x}$ is the input activation, and $\oslash$ denotes element-wise (Hadamard) division.
\end{theorem}

\begin{proof}
Rearranging the projection condition

{\vspace{-12pt} 
\small \begin{equation*}
    \mathbf{a} \odot \left( \mathbf{\hat{y}} - \mathbb{E}\!\left[\mathbf{y}\right] \right) = \left( \sum_k \mathbf{u}_k \mathbf{u}_k^ \top \right) \left(\mathbf{a} \odot \left( \mathbf{y} - \mathbb{E}\!\left[\mathbf{y}\right] \right)\right),
    \vspace{-6pt}
\end{equation*}
}%
we obtain 

{\vspace{-12pt} 
\small \begin{equation*}
    \begin{aligned}
        \mathbf{a} \odot (\mathbf{\hat{y}} - \mathbb{E}\!\left[\mathbf{y}\right]) &= \sum_k \mathbf{u}_k \mathbf{u}_k^ \top  \left( \mathbf{a} \odot \left( \mathbf{y} - \mathbb{E}\!\left[\mathbf{y}\right] \right) \right)\\
        &=\sum_k \mathbf{u}_k ( \mathbf{u}_k \odot \mathbf{a})^ \top  (\mathbf{y} - \mathbb{E}\!\left[\mathbf{y}\right])) 
    \end{aligned}
\end{equation*}
}%
Applying the Hadamard division \( \oslash \mathbf{a} \) to both sides of the equation leads to:

{\vspace{-6pt} 
\small \begin{equation*}
    \mathbf{\hat{y}} - \mathbb{E}\!\left[\mathbf{y}\right] = \sum_k \mathbf{u}_k (\mathbf{u}_k \odot \mathbf{a})^\top (\mathbf{y} - \mathbb{E}\!\left[\mathbf{y}\right]) \oslash \mathbf{a}
\end{equation*}}%
Since {\small $(\mathbf{u}_k \odot \mathbf{a})^ \top$} is a row vector and {\small $(\mathbf{y} - \mathbb{E}\!\left[\mathbf{y}\right])$} is a column vector, 
{\small $(\mathbf{u}_k \odot a)^\top (\mathbf{y} - \mathbb{E}\!\left[\mathbf{y}\right])$} is a scalar, we have

{\vspace{-10pt} 
\small \[
\mathbf{\hat{y}} = \mathbb{E}\!\left[\mathbf{y}\right] + \sum_k (\mathbf{u}_k \oslash a) (\mathbf{u}_k \odot a)^\top (\mathbf{y} - \mathbb{E}\!\left[\mathbf{y}\right])
\]}%
Rewriting the equality, we obtain:

{\vspace{-8pt} 
\small \begin{equation*}
    \begin{aligned}
        \mathbf{\hat{y}} &= \left[\mathbf{u}_1\oslash \mathbf{a},\dots,\mathbf{u}_r\oslash \mathbf{a} \right]  \left[\mathbf{u}_1\odot \mathbf{a},\dots,\mathbf{u}_r\odot \mathbf{a}\right]^\top (\mathbf{y} - \mathbb{E}\!\left[\mathbf{y}\right])\\
        &\quad+\mathbb{E}\!\left[\mathbf{y}\right]\\
        &=\mathbb{E}\!\left[\mathbf{y}\right] + (\mathbf{U} \oslash (\mathbf{a} \cdot \mathbf{1}_r^ \top))(\mathbf{U} \odot (\mathbf{a} \cdot \mathbf{1}_r^ \top))^ \top (\mathbf{y} - \mathbb{E}\!\left[\mathbf{y}\right])
    \end{aligned}
\end{equation*}}%
where {\small $\mathbf{U} = [\mathbf{u}_1,\dots,\mathbf{u}_r]$}. 
Incorporating the original activation {\small $\mathbf{y} = \mathbf{W} \mathbf{x} + \mathbf{b}$}, we obtain:

{\vspace{-8pt}
\small
\[
\mathbf{\hat{y}} = \mathbb{E}\!\left[\mathbf{y}\right] + (\mathbf{U} \oslash (\mathbf{a} \cdot \mathbf{1}_r^\top))(\mathbf{U} \odot (\mathbf{a} \cdot \mathbf{1}_r^\top))^\top (\mathbf{W} \mathbf{x} + \mathbf{b} - \mathbb{E}\!\left[\mathbf{y}\right])
\]
}%
Expanding the expression, the reconstructed activation satisfies:

{
\vspace{-8pt} 
\small
\begin{equation*}
    \begin{aligned}
        \mathbf{\hat{y}} = & \left[ \mathbf{U} \oslash (\mathbf{a} \cdot \mathbf{1}_r^\top) \right] \left[ (\mathbf{U} \odot (\mathbf{a} \cdot \mathbf{1}_r^\top))^\top \mathbf{W} \right] \mathbf{x} \\
         &+ \mathbb{E}\!\left[\mathbf{y}\right] + (\mathbf{U} \oslash (\mathbf{a} \cdot \mathbf{1}_r^\top))(\mathbf{U} \odot (\mathbf{a} \cdot \mathbf{1}_r^\top))^\top (\mathbf{b} - \mathbb{E}\!\left[\mathbf{y}\right]) \\
        = & \left[ \mathbf{U} \oslash (\mathbf{a} \cdot \mathbf{1}_r^\top) \right] \left[ (\mathbf{U} \odot (\mathbf{a} \cdot \mathbf{1}_r^\top))^\top \mathbf{W} \right] \mathbf{x}\\
        &+ \mathbb{E}[\mathbf{y}] + (\mathbf{U} \mathbf{U}^\top \odot (\frac{1}{\mathbf{a}} \cdot \mathbf{a}^\top)) (\mathbf{b} - \mathbb{E}[\mathbf{y}])
    \end{aligned}
\end{equation*}    
}
\end{proof}


\section{Additional Results}
\label{sec:appendix2}

\begin{figure*}[!h]
    \centering
    \begin{subfigure}{0.45\linewidth}
        \includegraphics[width=\linewidth]{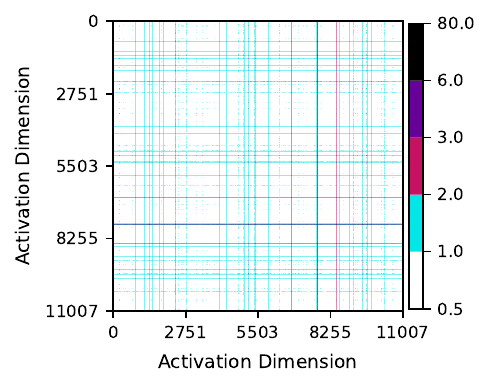}
        \caption{layers.25.mlp.up\_proj}
    \end{subfigure}
    \hfill
    \begin{subfigure}{0.45\linewidth}
        \includegraphics[width=\linewidth]{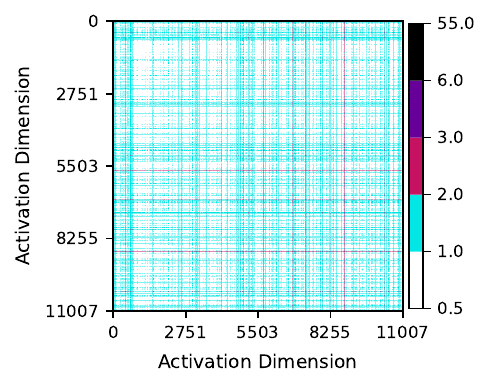}
        \caption{layers.10.mlp.up\_proj}
    \end{subfigure}
    \begin{subfigure}{0.45\linewidth}
        \includegraphics[width=\linewidth]{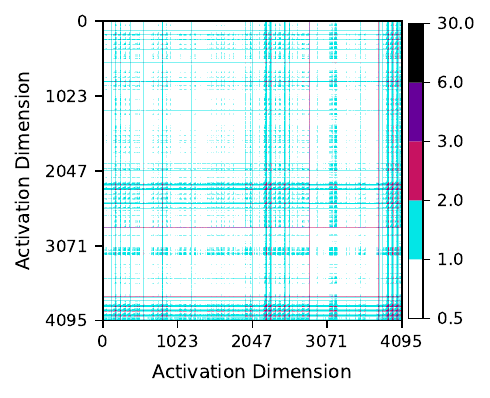}
        \caption{layers.16.self\_attn.q\_proj}
    \end{subfigure}
    \hfill
    \begin{subfigure}{0.45\linewidth}
        \includegraphics[width=\linewidth]{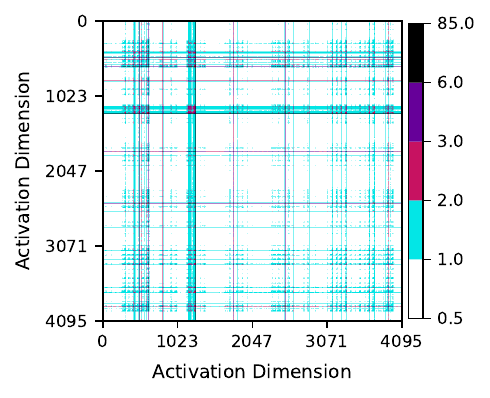}
        \caption{layers.26.self\_attn.q\_proj}
    \end{subfigure}
    \caption{
    Heatmaps of the importance matrix \( \mathbf{M} \) for four representative layers in Llama~2-7B on the mathematical reasoning task (with $\eta = 0.5$). High-intensity rows and columns indicate rare yet important activation dimensions that IMPACT identifies via gradient-based weighting and prioritizes during compression.
    }
    
    \label{fig:heatmap-M}
\end{figure*}

{\subsection{Visualization and Analysis of the Importance Matrix} \label{subsec:appendix_importance_matrix}}

\begin{figure*}[!h]
\vspace{20pt}
    \centering
    \begin{subfigure}[t]{0.48\linewidth}
        \centering
        \includegraphics[width=\linewidth]{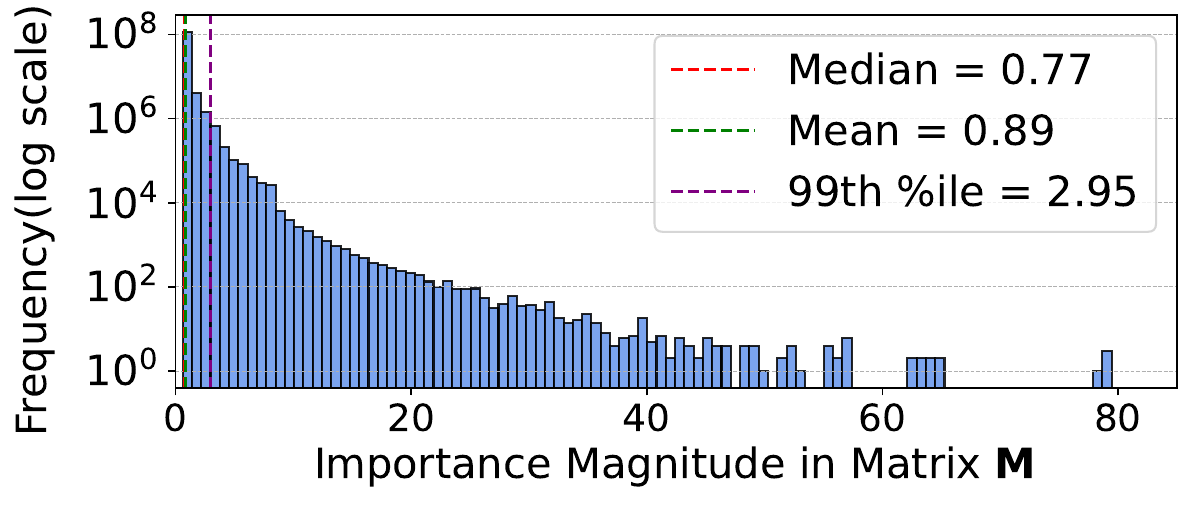}
        \caption{layers.25.mlp.up\_proj}
    \end{subfigure}
    \hfill
    \begin{subfigure}[t]{0.48\linewidth}
        \centering
        \includegraphics[width=\linewidth]{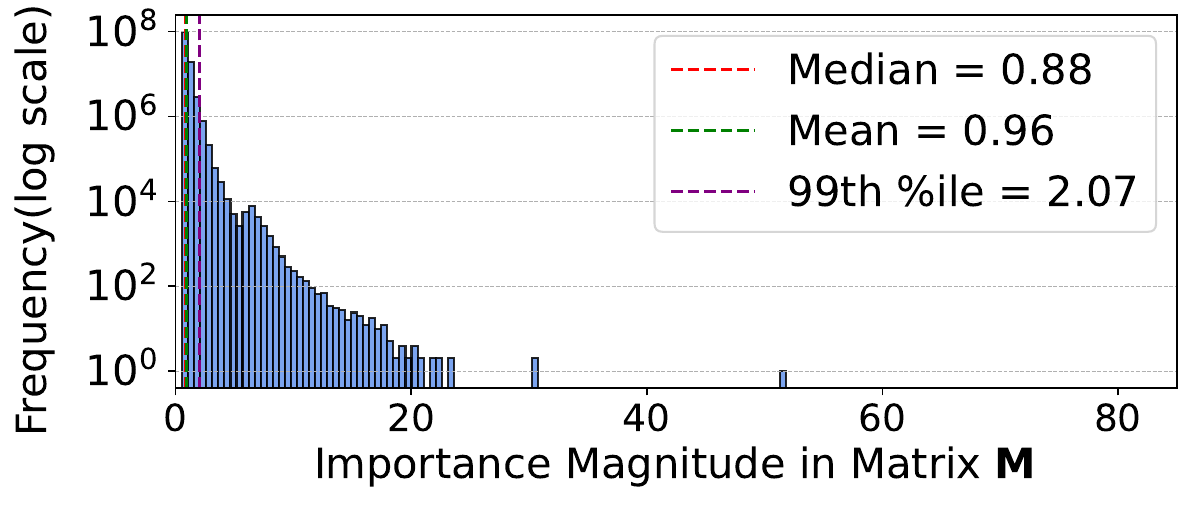}
        \caption{layers.10.mlp.up\_proj}
    \end{subfigure}
    
    \vspace{0.4em}
    
    \begin{subfigure}[t]{0.48\linewidth}
        \centering
        \includegraphics[width=\linewidth]{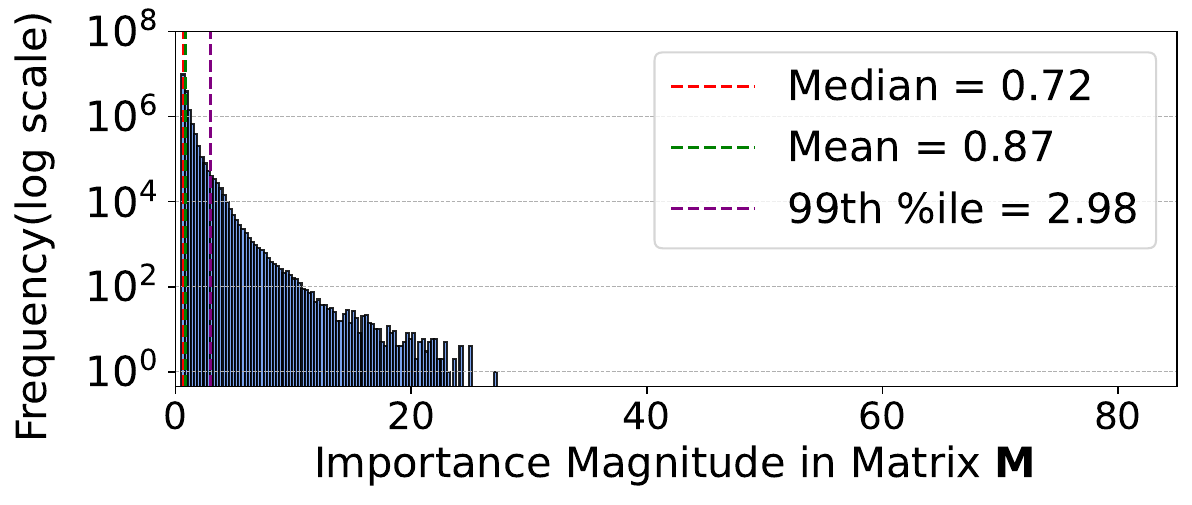}
         \caption{layers.16.self\_attn.q\_proj}
    \end{subfigure}
    \hfill
    \begin{subfigure}[t]{0.48\linewidth}
        \centering
        \includegraphics[width=\linewidth]{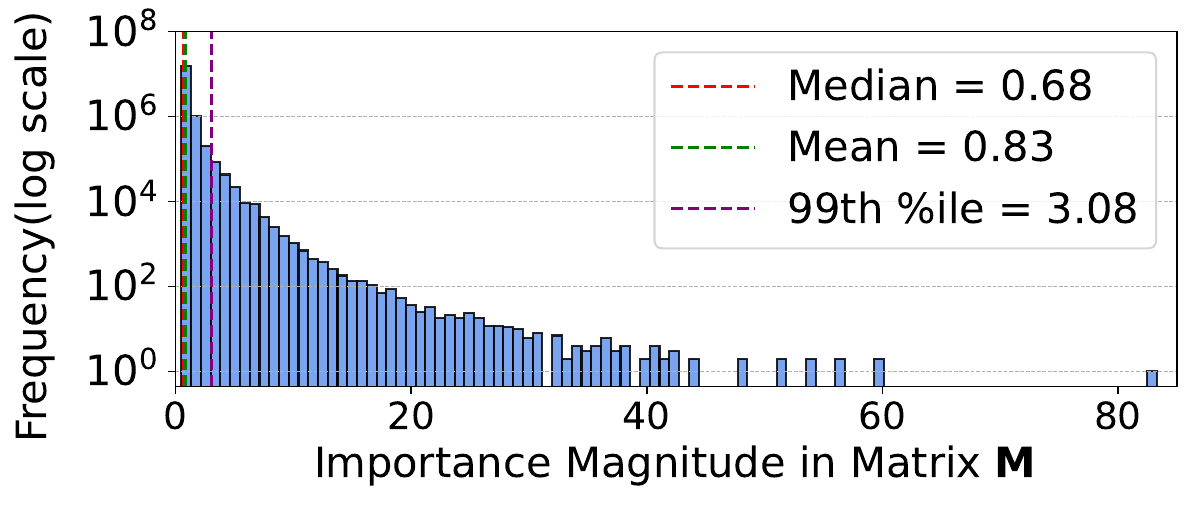}
        \caption{layers.26.self\_attn.q\_proj}
    \end{subfigure}

    \caption{Log-scale histograms of values in the importance matrix $\mathbf{M}$ (with $\eta = 0.5$) for four representative layers of Llama~2-7B on the mathematical reasoning task.}
    \label{fig:importance-distributions}
\end{figure*}

To understand how IMPACT achieves importance-aware compression, we analyze the structure and statistics of the gradient-informed importance matrix $\mathbf{M}$, offering empirical insight into its role in preserving performance-critical components.

We visualize the structure of $\mathbf{M}$ in four representative layers---two MLP and two attention---using the heatmaps in Figure~\ref{fig:heatmap-M}, where darker entries indicate greater importance for interactions between activation dimensions. Across layers, $\mathbf{M}$ displays concentrated regions of high intensity, revealing that IMPACT emphasizes a sparse subset of directions rather than treating all dimensions equally. This selective weighting is central to IMPACT’s design: by constructing the importance-weighted activation covariance matrix $\mathbf{C} = \mathrm{Cov}\!\left(\mathbf{y}\right) \odot \mathbf{M}$, the resulting eigendecomposition prioritizes performance-critical directions during compression.

Complementing the structural visualization, Figure~\ref{fig:importance-distributions} presents the distribution of values in $\mathbf{M}$ for the same layers. These histograms, plotted on a log scale, reveal highly skewed distributions. Although most entries in $\mathbf{M}$ are small, a long tail of large values persists. For instance, in "layers.25.mlp.up\_proj", the median of the entries is 0.77, the mean is 0.89, the 99th percentile reaches 2.95, and the maximum is close to 80. These values quantify the relative importance of preserving interactions between activation dimensions, as determined by their gradient sensitivity during profiling. The long-tailed distribution indicates that IMPACT selectively amplifies a small subset of critical interactions. This pattern is consistent across layers and mirrors the sparsity observed in the heatmaps, reinforcing the model’s emphasis on performance-relevant components.

{\subsection{Ablation Study on \texorpdfstring{$\eta$}{eta}}
\label{appendix:abolation}
}
We analyze the effect of the hyperparameter $\eta$, which balances the reconstruction error and gradient-informed importance weighting in Equation~\eqref{eq:eqn3}. Table~\ref{tab:eta-ablation} reports results on Llama~2-7B across different compression ratios. We vary $\eta \in \{0, 0.25, 0.5, 0.75\}$ and observe that performance remains relatively stable across this range. In particular, moderate values of $\eta$ generally achieve strong results across both GSM8K and MATH.
\begin{table}[h]
\centering
\small
\resizebox{\linewidth}{!}{
\begin{tabular}{clcccc}
\toprule
$k$ & Metric & $\eta=0.75$ & $\eta=0.5$ & $\eta=0.25$ & $\eta=0$ \\
\midrule
\multirow{2}{*}{50\%}
 & GSM8K Acc (\%) & 61.3 & 61.2 & 62.9 & 58.4 \\
 & MATH Acc (\%)  & 17.9 & 17.7 & 17.3 & 15.5 \\
\midrule
\multirow{2}{*}{60\%}
 & GSM8K Acc (\%) & 64.8 & 63.7 & 64.5 & 62.5 \\
 & MATH Acc (\%)  & 19.2 & 18.4 & 19.4 & 18.1 \\
\midrule
\multirow{2}{*}{70\%}
 & GSM8K Acc (\%) & 65.0 & 63.7 & 63.5 & 65.3 \\
 & MATH Acc (\%)  & 18.9 & 19.4 & 18.6 & 19.7 \\
\bottomrule
\end{tabular}
}
\caption{Ablation study on the hyperparameter $\eta$ under different keeping ratios $k$. Results are reported on Llama~2-7B using GSM8K and MATH.}
\label{tab:eta-ablation}
\end{table}

\begin{table}[t]
\centering
\small
\setlength{\tabcolsep}{6pt}
\begin{tabular}{lcc}
\toprule
Method & Time (Hours) & Peak GPU Memory (GB)\\
\midrule
IMPACT & 1.53 & 66.9\\
AFM    & 1.17 & 60.8\\
ASVD   & 135.83 & 28.6 \\
\bottomrule
\end{tabular}
\caption{Profiling cost comparison for Llama~2-13B across compression methods.
All the methods are run on a single NVIDIA A100 GPU.}
\label{tab:profiling_time}
\end{table}

\FloatBarrier

{\subsection{Profiling and Generalizability}
\label{app:profiling-cost}
}
\paragraph{Profiling.}
We note that the entire profiling process fits comfortably on a single GPU, requiring no multi-GPU setup. Using only 1\% of MetaMathQA~\cite{metamath} and 3\% of Code-Instructions-120K-Alpaca~\cite{codeinstructions120k} as calibration data, profiling completes in under two hours on a single NVIDIA A100 GPU for 13B models. Table~\ref{tab:profiling_time} compares the profiling cost of IMPACT and baseline methods for compressing Llama 2-13B on the mathematical reasoning task. As shown in the table, IMPACT incurs only a modest additional overhead compared to AFM. This increase in profiling cost arises from the extra backward pass required to compute gradient-based importance statistics, which slightly increases both runtime and peak memory usage. In contrast, ASVD requires significantly longer profiling time. This is because ASVD performs a binary search to determine the truncation rank for each layer, repeatedly evaluating model performance for different candidate ranks. As a result, the procedure requires a substantially larger number of forward passes, leading to orders-of-magnitude longer profiling time despite lower peak GPU memory usage.

We analyze the method's sensitivity to calibration data size. We find that using only 1\% of MetaMathQA or 3\% of Code-Instructions-120K is sufficient for convergence. IMPACT is intentionally distribution-dependent: the profiling statistics are expectations over the calibration data, designed to maximize performance on the target domain.

\vspace{-18pt}

\rev{
\paragraph{Generalizability.}
IMPACT makes no model-specific assumptions. The closed-form solution depends strictly on activation covariance and gradient-based importance signals that are generic to all Transformer architectures. As such, the framework is inherently applicable to diverse modern LLMs. 
}



\begin{table}[!t]

\centering
\resizebox{\linewidth}{!}{
\begin{tabular}
{llrrrrrrr}
\toprule
\multirow{ 3}{*}{SVD}& \multicolumn{1}{l}{Model Size (Billion)} & 6.74 & 5.03 & 3.18 & 1.73 & 1.11 & 0.56 &\\\cmidrule{2-8}
 & \multicolumn{1}{l}{GSM8K Acc (\%)} & 66.4 & 63.0 & 61.0 & 53.9 & 32.9 & 11.9 & \\
 & \multicolumn{1}{l}{MATH Acc (\%)} & 20.6 & 18.3 & 17.4 & 13.7 & 5.3 & 2.8 & \\\midrule
\multirow{ 3}{*}{FWSVD}& \multicolumn{1}{l}{Model Size (Billion)} & 6.74 & 4.79 & 2.95 & 1.54 & 0.96 & 0.47 & \\\cmidrule{2-8}
 & \multicolumn{1}{l}{GSM8K Acc (\%)} & 66.4 & 62.7 & 62.5 & 56.5 & 1.5 & 1.9 \\
 & \multicolumn{1}{l}{MATH Acc (\%)} & 20.6 & 19.2 & 17.6 & 14.2 & 1.8 & 1.5 &  \\\midrule

\multirow{ 3}{*}{ASVD}& \multicolumn{1}{l}{Model Size (Billion)} & 6.74 & 4.09 & 3.43 & 2.77 & 1.45 & 0.79 & \\\cmidrule{2-8}
 & \multicolumn{1}{l}{GSM8K Acc (\%)} & 66.4 & 61.9 & 60.9 & 59.7 & 50.5 & 29.1\\
 & \multicolumn{1}{l}{MATH Acc (\%)} & 20.6 & 18.7 & 18. 7 & 17.3 & 12.6 & 5.2 &  \\\midrule
 
 \multirow{ 3}{*}{AFM}& \multicolumn{1}{l}{Model Size (Billion)} & 6.74 & 3.62 & 2.66 & 1.90 & 1.30 & 0.83 & \\\cmidrule{2-8}
 & \multicolumn{1}{l}{GSM8K Acc (\%)} & 66.4 & 62.8 & 62.6 & 61.5 & 57.8 & 49.7 & \\
 & \multicolumn{1}{l}{MATH Acc (\%)} & 20.6 & 19.5 & 18.5 & 17.6 & 16.3 & 12.0 & \\\midrule
\multirow{ 3}{*}{IMPACT}& \multicolumn{1}{l}{Model Size (Billion)} & 6.74 & 3.11 & 2.40 & 1.67 & 1.19 & 0.75 &\\\cmidrule{2-8}
 & \multicolumn{1}{l}{GSM8K Acc (\%)} & 66.4 & 65.3 & 64.5 & 62.9 & 60.2 & 52.8 &  \\
 & \multicolumn{1}{l}{MATH Acc (\%)} & 20.6 & 19.7 & 19.4 & 17.3 & 17.2 & 14.1 &

\\\bottomrule
\end{tabular}
}

\caption{Pass@1 accuracy and model size of Llama~2-7B compressed by various algorithms for the mathematical reasoning task. The results for SVD and FWSVD are taken from \citet{basel}.\vspace{1.1em}}
\label{tbl:7b-math}
\end{table}

\begin{table}[!t]
\centering
\resizebox{\linewidth}{!}{
\begin{tabular}{llrrrrrrr} 
\toprule
\multirow{3}{*}{SVD} & \multicolumn{1}{l}{Model Size (Billion)} & 13.02 & 9.70 & 6.10 & 3.27 & 2.07 & 1.01 &\\\cmidrule{2-8}
& \multicolumn{1}{l}{GSM8K Acc (\%)} & 72.7 & 69.5 & 63.5 & 50.0 & 26.9 & 6.7 & \\
& \multicolumn{1}{l}{MATH Acc (\%)} & 22.2 & 20.8 & 17.8 & 10.8 & 5.2 & 2.2 & \\\midrule
\multirow{3}{*}{FWSVD} & \multicolumn{1}{l}{Model Size (Billion)} & 13.02 & 9.24 & 5.67 & 2.93 & 1.79 & 0.83 & \\\cmidrule{2-8}
& \multicolumn{1}{l}{GSM8K Acc (\%)} & 72.7 & 67.9 & 63.9 & 51.9 & 2.4 & 3.9 &  \\
& \multicolumn{1}{l}{MATH Acc (\%)} & 22.2 & 20.3 & 18.0 & 12.4 & 1.2 & 1.9 & \\\midrule

\multirow{3}{*}{ASVD} & \multicolumn{1}{l}{Model Size (Billion)} & 13.02 & 7.87 & 6.56 & 4.03 & 2.73 & 1.45 & \\\cmidrule{2-8}
& \multicolumn{1}{l}{GSM8K Acc (\%)} & 72.7 & 63.6 & 61.0 & 56.2 & 50.6 & 25.9 &  \\
& \multicolumn{1}{l}{MATH Acc (\%)} & 22.2 & 17.7 & 17.4 & 14.0 & 12.0 & 5.6 & \\\midrule

\multirow{3}{*}{AFM} & \multicolumn{1}{l}{Model Size (Billion)} & 13.02 & 9.63 & 5.33 & 3.81 & 2.59 & 1.63 & \\\cmidrule{2-8}
& \multicolumn{1}{l}{GSM8K Acc (\%)} & 72.7 & 70.9 & 66.4 & 65.7 & 60.8 & 47.2 &\\
& \multicolumn{1}{l}{MATH Acc (\%)} & 22.2 & 20.9 & 18.5 & 18.0 & 16.3 & 13.3 & \\\midrule
\multirow{3}{*}{IMPACT} & \multicolumn{1}{l}{Model Size (Billion)} & 13.02 & 8.66 & 4.87 & 3.58 & 2.28 & 1.49 & \\\cmidrule{2-8}
& \multicolumn{1}{l}{GSM8K Acc (\%)} & 72.7 & 70.2 & 69.3 & 65.3 & 61.2 & 54.8 & \\
& \multicolumn{1}{l}{MATH Acc (\%)} & 22.2 & 21.2 & 19.4 & 18.6 & 17.7 & 15.2 & 
\\\bottomrule
\end{tabular}
}
\caption{Pass@1 accuracy and model size of Llama~2-13B compressed by various algorithms for the mathematical reasoning task. The results for SVD and FWSVD are taken from \citet{basel}.\vspace{1.1em}}
\label{tbl:13b-math}
\end{table}



\subsection{Additional Tables}
\label{appendix:additional_figures_tables}
\rev{Tables~\ref{tbl:7b-math}--\ref{tbl:13b-prog} report the precise Pass@1 accuracy and compressed model sizes corresponding to the performance curves plotted in Figures~\ref{fig:7b-math}--\ref{fig:13b-programming}. They cover both Llama~2 (mathematical reasoning) and CodeLlama (code generation) across 7B and 13B.}

To quantify compression gains, we compare IMPACT with the smallest baseline model that attains equivalent accuracy. Detailed size reductions are reported in Tables~\ref{tbl:7b-math-size-reduction} and~\ref{tbl:13b-math-size-reduction} for Llama~2-7B and -13B, respectively, and in Tables~\ref{tbl:7b-programming-size-reduction} and~\ref{tbl:13b-programming-size-reduction} for CodeLlama-7B and -13B, respectively. Table~\ref{tbl:inference-results} reports the inference throughput and memory consumption of the compressed models.\vspace{1em}

\begin{table}[!h]
\centering
\resizebox{\linewidth}{!}{
\begin{tabular}
{llrrrrrrr}
\toprule
\multirow{ 3}{*}{SVD}& \multicolumn{1}{l}{Model Size (Billion)} & 6.74 & 6.14 & 3.13 & 2.37 & 1.69 & 1.09 & \\\cmidrule{2-8}
 & \multicolumn{1}{l}{HumanEval Acc (\%)} & 45.1 & 37.8 & 27.4 & 14.0  & 9.8 & 3.0 & \\
 & \multicolumn{1}{l}{MBPP Acc (\%)} & 59.8 & 54.8 & 46.8 & 35.7 & 19.6 & 6.1 & \\\midrule
\multirow{ 3}{*}{FWSVD}& \multicolumn{1}{l}{Model Size (Billion)} & 6.74 & 4.77 & 2.94 & 2.19 & 1.54 & 0.96 & \\\cmidrule{2-8}
 & \multicolumn{1}{l}{HumanEval Acc (\%)} & 45.1 & 28.7 & 22.6 & 20.1 & 9.1 & 3.7 &  \\
 & \multicolumn{1}{l}{MBPP Acc (\%)} & 59.8 & 54.8 & 44.2 & 36.2 & 23.8 & 9.5 &  \\\midrule

\multirow{ 3}{*}{ASVD}& \multicolumn{1}{l}{Model Size (Billion)} & 6.74 & 4.10 & 3.43 & 2.77 & 2.11 & 1.45 & \\\cmidrule{2-8}
 & \multicolumn{1}{l}{HumanEval Acc (\%)} & 45.1 & 27.4 & 26.8 & 22.6 & 14.6 & 5.5 &  \\
 & \multicolumn{1}{l}{MBPP Acc (\%)} &59.8 & 51.9 & 44.7 & 43.9 & 32.3 & 16.1 &  \\\midrule

\multirow{ 3}{*}{AFM}& \multicolumn{1}{l}{Model Size (Billion)} & 6.74 & 3.76 & 2.81 & 2.04 & 1.42 & 0.92 & \\\cmidrule{2-8}
 & \multicolumn{1}{l}{HumanEval Acc (\%)} & 45.1 & 31.7 & 25.6 & 19.5 & 4.9 & 4.9 & \\
 & \multicolumn{1}{l}{MBPP Acc (\%)} & 59.8 & 49.5 & 49.2 & 43.1 & 26.7 & 12.2 &\\\midrule
 \multirow{ 3}{*}{IMPACT}& \multicolumn{1}{l}{Model Size (Billion)} & 6.74 & 3.58 & 2.64 & 1.90 & 1.30 & 0.84 & \\\cmidrule{2-8}
 & \multicolumn{1}{l}{HumanEval Acc (\%)} & 45.1 & 33.5 & 27.4 & 23.2 & 11.0 & 4.9 & \\
 & \multicolumn{1}{l}{MBPP Acc (\%)} & 59.8 & 51.6 & 49.7 & 42.6 & 28.0 & 16.4 & 
\\\bottomrule
\end{tabular}
}
\caption{Pass@1 accuracy and model size of CodeLlama-7B compressed with various low-rank algorithms on the code generation task. The results for SVD and FWSVD are taken from \citet{basel}.}
\label{tbl:7b-prog}
\end{table}



\begin{table}[!h]
\centering
\resizebox{\linewidth}{!}{
\begin{tabular}
{llrrrrrrr}
\toprule
\multirow{ 3}{*}{SVD}& \multicolumn{1}{l}{Model Size (Billion)} & 13.02 & 9.54 & 5.94 & 4.47 & 3.16 & 1.77 & \\\cmidrule{2-8}
 & \multicolumn{1}{l}{HumanEval Acc (\%)} & 52.4 & 36.6 & 28.0 & 18.3 & 14.6 & 1.8 & \\
 & \multicolumn{1}{l}{MBPP Acc (\%)} & 63.0 & 58.2 & 49.7 & 47.4 & 30.7 & 1.3 &\\\midrule
\multirow{ 3}{*}{FWSVD}& \multicolumn{1}{l}{Model Size (Billion)} & 13.02 & 9.17 & 5.29 & 4.14 & 2.30 & 1.76 & \\\cmidrule{2-8}
 & \multicolumn{1}{l}{HumanEval Acc (\%)} & 52.4 & 38.4 & 29.3 & 21.3 & 13.4 & 2.4 &   \\
 & \multicolumn{1}{l}{MBPP Acc (\%)} & 63.0 & 57.7 & 46.6 & 44.7 & 23.0 & 4.5 &   \\\midrule

\multirow{ 3}{*}{ASVD}& \multicolumn{1}{l}{Model Size (Billion)} & 13.02 & 9.16 & 5.32 & 4.01 & 2.73 & 1.45 & \\\cmidrule{2-8}
 & \multicolumn{1}{l}{HumanEval Acc (\%)} & 52.4 & 28.7 & 19.5 & 20.1 & 6.1 & 4.3&   \\
 & \multicolumn{1}{l}{MBPP Acc (\%)} & 63.0 & 53.4 & 44.2 & 38.9 & 29.4 & 10.3 &   \\\midrule

\multirow{ 3}{*}{AFM}& \multicolumn{1}{l}{Model Size (Billion)} & 13.02 & 9.68 & 5.45 & 3.96 & 2.74 & 1.75 & \\\cmidrule{2-8}
 & \multicolumn{1}{l}{HumanEval Acc (\%)} & 52.4 & 43.3 & 34.8 & 21.3 & 14.6 & 6.1 & \\
 & \multicolumn{1}{l}{MBPP Acc (\%)} & 63.0 & 59.8 & 51.3 & 43.4 & 31.2 & 19.0 &\\\midrule
 \multirow{ 3}{*}{IMPACT}& \multicolumn{1}{l}{Model Size (Billion)} & 13.02 & 9.61 & 5.16 & 3.71 & 2.44 & 1.66 & \\\cmidrule{2-8}
 & \multicolumn{1}{l}{HumanEval Acc (\%)} & 52.4 & 47.6 & 34.1 & 20.7 & 15.2 & 8.5 & \\
 & \multicolumn{1}{l}{MBPP Acc (\%)} & 63.0 & 61.6 & 51.3 & 45.5 & 31.7 & 25.9 &
\\\bottomrule
\end{tabular}
}
\caption{Pass@1 accuracy and model size of CodeLlama-13B compressed with various low-rank algorithms on the code generation task. The results for SVD and FWSVD are taken from \citet{basel}.}
\label{tbl:13b-prog}
\end{table}



\begin{table}[!h]
\centering
\resizebox{\linewidth}{!}{
\begin{tabular}
{llrrrrrrr}
\toprule
\multicolumn{8}{c}{GSM8K} \\\midrule
 \multirow{1}{*}{\shortstack[l]{Best Baseline\footnotemark[2] \\(with Same Acc.)}}& \multicolumn{1}{l}{Model Size (Billion)} & 6.74 & 5.76 & 5.12 & 3.75 & 1.69 & 1.02&\\
 \\\midrule
\multirow{2}{*}{IMPACT}& \multicolumn{1}{l}{Model Size (Billion)} & 6.74 & 3.11 & 2.40 & 1.67 & 1.19 & 0.75 &\\\cmidrule{2-8}
 & \multicolumn{1}{l}{Size Reduction (\%)} & - & 46.0 & 53.2 & 55.4 & 29.8 & 25.7 & \\\midrule
 \multicolumn{8}{c}{MATH} \\\midrule
 \multirow{1}{*}{\shortstack[l]{Best Baseline\footnotemark[2] \\(with Same Acc.)}}& \multicolumn{1}{l}{Model Size (Billion)} & 6.74 & 4.15 & 3.51 & 1.77 & 1.71 & 1.06 &\\
 \\\midrule
\multirow{2}{*}{IMPACT}& \multicolumn{1}{l}{Model Size (Billion)} & 6.74 & 3.11 & 2.40 & 1.67 & 1.19 & 0.75 &\\\cmidrule{2-8}
 & \multicolumn{1}{l}{Size Reduction (\%)} & - & 25.0 & 31.7 & 5.6 & 30.5 & 28.7  &

\\\bottomrule
\end{tabular}
}
\caption{Size reduction of IMPACT compared to the best baseline at matched accuracy for Llama~2-7B on the mathematical reasoning task.}
\label{tbl:7b-math-size-reduction}
\end{table}

\begin{table}[!ht]
\centering
\resizebox{\linewidth}{!}{
\begin{tabular}
{llrrrrrrr}
\toprule
\multicolumn{8}{c}{GSM8K} \\\midrule
 \multirow{1}{*}{\shortstack[l]{Best Baseline\footnotemark[2] \\(with Same Acc.)}}& \multicolumn{1}{l}{Model Size (Billion)} & 13.02 & 8.97 & 8.10 & 3.70 &  2.69 &  2.17&\\
 \\\midrule
\multirow{2}{*}{IMPACT}& \multicolumn{1}{l}{Model Size (Billion)} & 13.02 & 8.66 & 4.87 & 3.58 & 2.28 & 1.49 & \\\cmidrule{2-8}
 & \multicolumn{1}{l}{Size Reduction (\%)} & - & 3.5 & 39.8 & 3.3 & 15.0 & 31.2 & \\\midrule
 \multicolumn{8}{c}{MATH} \\\midrule
 \multirow{1}{*}{\shortstack[l]{Best Baseline\footnotemark[2] \\(with Same Acc.)}}& \multicolumn{1}{l}{Model Size (Billion)} & 13.02 & 10.33 & 6.91 & 5.44 & 3.59 & 2.25 &\\
 \\\midrule
\multirow{2}{*}{IMPACT}& \multicolumn{1}{l}{Model Size (Billion)} & 13.02 & 8.66 & 4.87 & 3.58 & 2.28 & 1.49 & \\\cmidrule{2-8}
 & \multicolumn{1}{l}{Size Reduction (\%)} & - & 16.2 & 29.4 & 34.2 & 36.5 & 33.7&

\\\bottomrule
\end{tabular}
}
\caption{Size reduction of IMPACT compared to the best baseline at matched accuracy for Llama~2-13B on the mathematical reasoning task.}
\label{tbl:13b-math-size-reduction}
\end{table}

\begin{table}[!h]
\centering
\resizebox{\linewidth}{!}{
\begin{tabular}
{llrrrrrrr}
\toprule
\multicolumn{7}{c}{HumanEval} \\\midrule
 \multirow{1}{*}{\shortstack[l]{Best Baseline\footnotemark[2] \\(with Same Acc.)}}& \multicolumn{1}{l}{Model Size (Billion)} & 6.74 & 4.16 & 3.09 & 2.51 & 1.65 & 1.09&\\
 \\\midrule
\multirow{2}{*}{IMPACT}& \multicolumn{1}{l}{Model Size (Billion)} & 6.74 & 3.58 & 2.64 & 1.90 & 1.30 & 0.84 & \\\cmidrule{2-8}
 & \multicolumn{1}{l}{Size Reduction (\%)} & -  & 13.8 & 14.4 & 24.2 & 21.0 & 23.1 & \\\midrule
 \multicolumn{7}{c}{MBPP} \\\midrule
 \multirow{1}{*}{\shortstack[l]{Best Baseline\footnotemark[2] \\(with Same Acc.)}}& \multicolumn{1}{l}{Model Size (Billion)} & 6.74 & 4.07 & 3.82 & 2.02 & 1.47 & 1.06&\\
 \\\midrule
\multirow{2}{*}{IMPACT}& \multicolumn{1}{l}{Model Size (Billion)} & 6.74 & 3.58 & 2.64 & 1.90 & 1.30 & 0.84 & \\\cmidrule{2-8}
 & \multicolumn{1}{l}{Size Reduction (\%)} & - & 12.0 & 30.7 & 6.1 & 11.1 & 21.1&

\\\bottomrule
\end{tabular}
}
\caption{Size reduction of IMPACT compared to the best baseline at matched accuracy for CodeLlama-7B on the code generation task.}
\label{tbl:7b-programming-size-reduction}
\end{table}

\begin{table}[!h]
\centering
\resizebox{\linewidth}{!}{
\begin{tabular}
{llrrrrrrr}
\toprule
\multicolumn{8}{c}{HumanEval} \\\midrule
 \multirow{1}{*}{\shortstack[l]{Best Baseline\footnotemark[2] \\(with Same Acc.)}}& \multicolumn{1}{l}{Model Size (Billion)} & 13.02 & 11.25 & 5.38 & 3.85 & 2.72 & 2.03 & \\
 \\\midrule
\multirow{2}{*}{IMPACT}& \multicolumn{1}{l}{Model Size (Billion)} & 13.02 & 9.61 & 5.16 & 3.71 & 2.44 & 1.66 & \\\cmidrule{2-8}
 & \multicolumn{1}{l}{Size Reduction (\%)} & - & 14.6 & 3.9 & 3.7 & 10.2 & 18.1 & \\\midrule
 \multicolumn{8}{c}{MBPP} \\\midrule
 \multirow{1}{*}{\shortstack[l]{Best Baseline\footnotemark[2] \\(with Same Acc.)}}& \multicolumn{1}{l}{Model Size (Billion)} & 13.02 & 11.55 & 5.45 & 4.32 & 2.79 & 2.31 & \\
 \\\midrule
\multirow{2}{*}{IMPACT}& \multicolumn{1}{l}{Model Size (Billion)} & 13.02 & 9.61 & 5.16 & 3.71 & 2.44 & 1.66 & \\\cmidrule{2-8}
 & \multicolumn{1}{l}{Size Reduction (\%)} & - & 16.9 & 5.3 & 14.2 & 12.5 & 28.0 & 

\\\bottomrule
\end{tabular}
}
\caption{Size reduction of IMPACT compared to the best baseline at matched accuracy for CodeLlama-13B on the code generation task.}
\label{tbl:13b-programming-size-reduction}
\end{table}

\footnotetext[2]{The best baseline refers to the smallest model among SVD, FWSVD, ASVD, and AFM that achieves accuracy matched to IMPACT. If no baseline exactly matches the accuracy, the model size is interpolated linearly between two adjacent compression points.}



\begin{table}[!t]
\centering
\resizebox{\linewidth}{!}{
\begin{tabular}{llrrrrrr}
\toprule

\multirow{ 3}{*}{SVD} 
& \multicolumn{1}{l}{Model Size (Billion)} 
& 6.74 & 5.03 & 3.18 & 1.73 & 1.11 & 0.56 \\\cmidrule{2-8}
& \multicolumn{1}{l}{Throughput (Token/s)} 
& 261.74 & 321.71 & 373.47 & 489.00 & 545.19 & 607.92 \\
& \multicolumn{1}{l}{Memory (GB)} 
& 29.26 & 22.73 & 15.34 & 9.75 & 6.90 & 4.70 \\\midrule

\multirow{ 3}{*}{FWSVD} 
& \multicolumn{1}{l}{Model Size (Billion)} 
& 6.74 & 4.79 & 2.95 & 1.54 & 0.96 & 0.47 \\\cmidrule{2-8}
& \multicolumn{1}{l}{Throughput (Token/s)} 
& 261.74 & 341.30 & 402.54 & 506.01 & 553.89 & 639.70 \\
& \multicolumn{1}{l}{Memory (GB)} 
& 29.26 & 21.93 & 14.56 & 9.04 & 6.62 & 4.38 \\\midrule

\multirow{ 3}{*}{AFM} 
& \multicolumn{1}{l}{Model Size (Billion)} 
& 6.74 & 3.62 & 2.66 & 1.90 & 1.30 & 0.83 \\\cmidrule{2-8}
& \multicolumn{1}{l}{Throughput (Token/s)} 
& 261.74 & 344.59 & 414.04 & 470.98 & 538.42 & 566.43 \\
& \multicolumn{1}{l}{Memory (GB)} 
& 29.26 & 17.03 & 13.24 & 10.21 & 7.70 & 5.76 \\\midrule

\multirow{ 3}{*}{IMPACT} 
& \multicolumn{1}{l}{Model Size (Billion)} 
& 6.74 & 3.11 & 2.40 & 1.67 & 1.19 & 0.75 \\\cmidrule{2-8}
& \multicolumn{1}{l}{Throughput (Token/s)} 
& 261.74 & 414.03 & 445.16 & 505.74 & 568.04 & 616.01 \\
& \multicolumn{1}{l}{Memory (GB)} 
& 29.26 & 14.99 & 12.21 & 9.25 & 7.25 & 5.41 \\\bottomrule

\end{tabular}
}
\caption{Throughput and memory consumption of compressed models. The results for SVD and FWSVD are taken from \citet{basel}.}
\label{tbl:inference-results}
\end{table}

\end{document}